\documentclass[10pt
  ]{article}

\usepackage[top=0.9in, bottom=0.9in, left=1.0in, right=1.0in]{geometry}

\usepackage{wrapfig}
\usepackage[round]{natbib}

\usepackage{hyperref}

\usepackage{amsmath}
\usepackage{amssymb}
\usepackage{adjustbox}
\usepackage{mathtools}
\usepackage{amsthm}

\usepackage{caption}

\theoremstyle{plain}
\newtheorem{theorem}{Theorem}%

\newtheorem{lemma}[theorem]{Lemma}

\theoremstyle{definition}

\theoremstyle{remark}

\usepackage[utf8]{inputenc} %
\usepackage[T1]{fontenc}    %
\usepackage{url}            %
\usepackage{booktabs}       %
\usepackage{amsfonts}       %
\usepackage{nicefrac}       %
\usepackage{xcolor}         %

\usepackage{amsmath,amsfonts,bm,amsthm,amssymb,mathrsfs,algorithm,algorithmic}
\usepackage{hyperref,url,booktabs,enumitem,multicol,graphicx,caption,subcaption}

\usepackage{adjustbox}
\usepackage{hhline}
\usepackage{multirow}
\usepackage{arydshln}

\def\eqref#1{(\ref{#1})}

\def\1{\bm{1}}

\newcommand{\E}{\mathbb{E}}

\newcommand{\ba}[1]{\begin{align}#1\end{align}}

\usepackage{stackengine}

\makeatletter
\newcommand{\distas}[1]{\mathbin{\overset{#1}{\kern\z@\sim}}}%

\newcommand{\cL}{\mathcal{L}}

\newcommand{\beqs}{\vspace{0mm}\begin{eqnarray}}
\newcommand{\eeqs}{\vspace{0mm}\end{eqnarray}}
\newcommand{\barr}{\begin{array}}
\newcommand{\earr}{\end{array}}

\def\gL{{\mathcal{L}}}

\newcommand{\cv}[0]{{\boldsymbol{c}}}

\newcommand{\xv}{\boldsymbol{x}}
\newcommand{\yv}{\boldsymbol{y}}
\newcommand{\zv}{\boldsymbol{z}}

\newcommand{\epsilonv}{\boldsymbol{\epsilon}}

\newcommand{\given}{\,|\,}

\usepackage{amsmath}
\usepackage{amssymb}
\usepackage{mathtools}
\usepackage{amsthm}
\usepackage{graphicx}
\usepackage{cleveref}
\usepackage{multirow}

\usepackage{tabularray}
\UseTblrLibrary{booktabs}

\usepackage{color, colortbl}
\definecolor{hl_color}{gray}{0.925}

\usepackage{bbm}
\usepackage{algorithm}
\usepackage{algorithmic}

\usepackage{authblk}

\usepackage{geometry}

\usepackage[final,expansion=alltext]{microtype}
\usepackage[english]{babel}
\usepackage[parfill]{parskip}
\usepackage{afterpage}
\usepackage{framed}

\renewenvironment{abstract}%
{%
  \vskip 0.075in%
  \centerline%
  {\large\bf Abstract}%
  \vspace{0.5ex}%
  \begin{quote}%
}
{
  \par%
  \end{quote}%
  \vskip 1ex%
}

\usepackage{longtable}
\usepackage{mdframed}

\usepackage{listings}
\definecolor{codegreen}{rgb}{0,0.6,0}
\definecolor{codegray}{rgb}{0.5,0.5,0.5}
\definecolor{codepurple}{rgb}{0.58,0,0.82}
\definecolor{backcolour}{rgb}{0.95,0.95,0.92}

\lstdefinestyle{mystyle}{
    backgroundcolor=\color{backcolour},   
    commentstyle=\color{codegreen},
    keywordstyle=\color{magenta},
    numberstyle=\tiny\color{codegray},
    stringstyle=\color{codepurple},
    basicstyle=\ttfamily\footnotesize,
    breakatwhitespace=false,         
    breaklines=true,                 
    captionpos=b,                    
    keepspaces=true,                 
    numbers=left,                    
    numbersep=5pt,                  
    showspaces=false,                
    showstringspaces=false,
    showtabs=false,                  
    tabsize=2
}

\lstset{style=mystyle}

\title{
\Large \textbf{%
Few-Step Diffusion via Score identity Distillation} 
}

\author{
  Mingyuan Zhou\footnote{UT Austin. Part of the work was done while the author was at Google. Now visiting Google DeepMind.},\quad
  Yi Gu\footnote{UT Austin.},\quad
  Zhendong Wang\footnote{Part of the work was done while the author was at UT Austin. Now at Microsoft.}\\
  \texttt{mingyuan.zhou@mccombs.utexas.edu},  \texttt{\{yigu,zhendong.wang\}@utexas.edu}
}



\begin{document}

\maketitle

\begin{figure*}[!ht]
    \centering
     \vspace{-6.0mm}
    \includegraphics[width=0.9\textwidth]{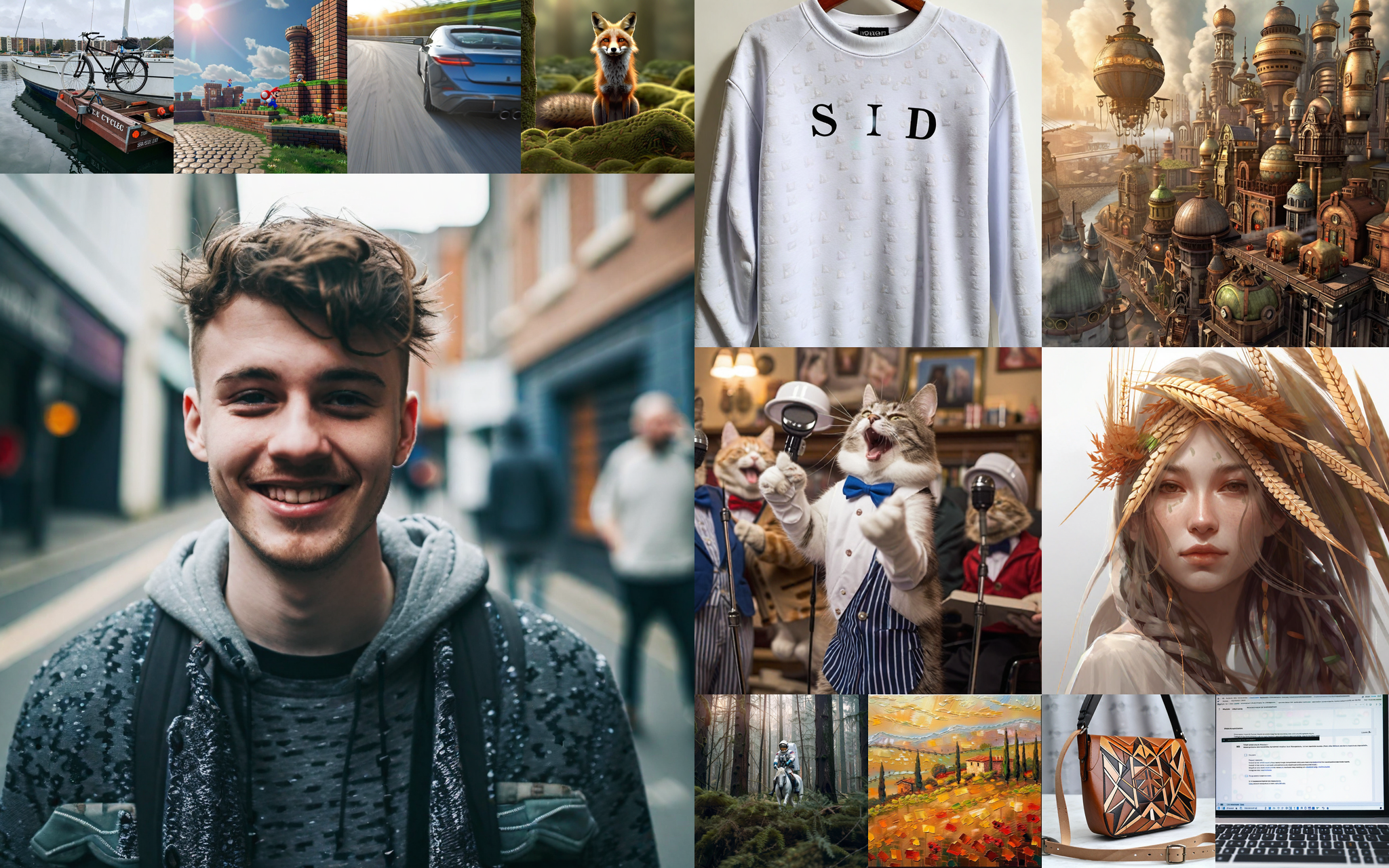}
    \vspace{-2.0mm}
  \caption{\small Example four-step generations at 1024×1024 using our SiD-based multistep distillation method.}
    \label{fig:qualitative}
\end{figure*}

\begin{abstract}

Diffusion distillation has emerged as a promising strategy for accelerating text-to-image (T2I) diffusion models by distilling a pretrained score network into a one- or few-step generator. While existing methods have made notable progress, they often rely on real or teacher-synthesized images to perform well when distilling high-resolution T2I diffusion models such as Stable Diffusion XL (SDXL), and their use of classifier-free guidance (CFG) introduces a persistent trade-off between text–image alignment and generation diversity. We address these challenges by optimizing Score identity Distillation (SiD)—a data-free, one-step distillation framework—for few-step generation. Backed by theoretical analysis that justifies matching a uniform mixture of outputs from all generation steps to the data distribution, our few-step distillation algorithm avoids step-specific networks and integrates seamlessly into existing pipelines, achieving state-of-the-art performance on SDXL at 1024×1024 resolution. To mitigate the alignment–diversity trade-off when real text–image pairs are available, we introduce a Diffusion GAN–based adversarial loss applied to the uniform mixture and propose two new guidance strategies: Zero-CFG, which disables CFG in the teacher and removes text conditioning in the fake score network, and Anti-CFG, which applies negative CFG in the fake score network. This flexible setup improves diversity without sacrificing alignment. Comprehensive experiments on SD1.5 and SDXL demonstrate state-of-the-art performance in both one-step and few-step generation settings, along with robustness to the absence of real images. Our efficient PyTorch implementation, along with the resulting one- and few-step distilled generators, will be released publicly as a separate branch at \href{https://github.com/mingyuanzhou/SiD-LSG}{\texttt{github.com/mingyuanzhou/SiD-LSG}}.

\end{abstract}

\section{Introduction}

Text-to-image (T2I) diffusion models have demonstrated remarkable capabilities in synthesizing photorealistic images \citep{nichol2022glide, ramesh2022hierarchical, saharia2022photorealistic, rombach2022high, podell2024sdxl}. 
They rely on two key components: an iterative refinement-based reverse diffusion process \citep{sohl2015deep,song2019generative, ho2020denoising}, and classifier-free guidance (CFG) \citep{ho2022classifier}, a technique that  enhances image fidelity and text-image alignment when applied with an appropriate guidance~scale. %
Contrasting the conditional score estimation against the unconditional scores,
CFG   %
amplifies the influence of the text %
conditions at each iterative-refinement step of the reverse sampling process \citep{saharia2022photorealistic, rombach2022high}. %
While the use of CFG improves text-image alignment, it often reduces generation diversity. It can also degrade fidelity when guidance scales are too high \citep{saharia2022photorealistic}.
Consequently, selecting an appropriate CFG scale is essential for balancing fidelity, diversity, and text-image alignment in generated outputs.

The reverse diffusion process requires multiple passes through a pretrained score estimation network \citep{song2019generative, ho2020denoising, dhariwal2021diffusion, song2021denoising}, resulting in a computationally expensive  sampling procedure. 
Prior work has aimed to accelerate sampling by reducing the number of refinement steps in the reverse process—from thousands to mere dozens—using advanced solvers for ODEs and SDEs, along with various approximation techniques
\citep{zheng2022truncated, liu2022pseudo, lu2022dpmsolver, karras2022elucidating, meng2023distillation}. However, these methods remain fundamentally constrained by the iterative nature of the reverse diffusion process.
In contrast, diffusion distillation methods—which aim to distill a one-step or few-step generator from a pretrained score estimation network—represent a distinct paradigm that has recently attracted considerable attention \citep{liu2022flow, luo2023latentlora, thuan2024swiftbrush, Sauer2023AdversarialDD, Xu2023UFOGenYF, yin2023onestepDW, yin2024improved, xie2024emdistillationonestepdiffusion, zhou2024score, zhou2024long, zhou2025adversarial}. This shift in how diffusion models are accelerated opens new avenues for efficient and scalable T2I generation.

A prominent class of diffusion distillation methods relies on matching the forward-diffused model and data distributions to align the generator with clean data distribution \citep{thuan2024swiftbrush, Sauer2023AdversarialDD, Xu2023UFOGenYF, yin2023onestepDW, yin2024improved, zhou2024score, zhou2024long, zhou2025adversarial}. These methods are built upon the same distribution-matching principle as Diffusion GAN \citep{wang2022diffusion}, which shows that adding noise helps align the supports of model and data distributions, thereby enabling meaningful gradients to guide the model toward the data. However, they differ in how the noisy data distribution is represented—using real noisy samples, a pretrained score estimation network, or both.

Despite recent progress, several key challenges remain in scaling distillation to high-resolution diffusion models, such as Stable Diffusion XL (SDXL) \citep{podell2024sdxl} at 1024×1024 resolution.
\begin{itemize}
    \item 
First, existing diffusion distillation methods—especially when targeting high-resolution models like SDXL—often depend on high-quality real text–image pairs, teacher-synthesized images via reverse diffusion, or both \citep{yin2024improved}. This dependence limits their applicability in data-free settings and constrains their effectiveness even when real data are available.

\item %

Second, in high-resolution T2I generation, a one-step generator may be insufficient, and multiple steps are often needed to reduce artifacts and refine outputs.

\item Third, current diffusion-distillation-based T2I frameworks continue to face trade-offs when selecting CFG scales.
Across both SoTA data-free methods \citep{thuan2024swiftbrush, zhou2024long} and those that require real data \citep{yin2023onestepDW, yin2024improved, dao2024swiftbrush}, a persistent challenge emerges: the model that achieves high CLIP scores~\citep{radford2021learning}, indicating strong text–image alignment, is often distinct from the one that achieves low Fréchet Inception Distance (FID) \citep{heusel2017gans}, indicating high generation diversity. This discrepancy arises because alignment- and fidelity-focused models typically require high CFG during distillation, whereas diversity-focused models benefit from lower CFG.
\end{itemize}
Existing methods \citep{liu2022flow, luo2023latentlora, thuan2024swiftbrush, Sauer2023AdversarialDD, Xu2023UFOGenYF, yin2023onestepDW, yin2024improved, xie2024emdistillationonestepdiffusion, zhou2024score, zhou2024long, zhou2025adversarial} have addressed aspects of these challenges, but none are able to solve all of them simultaneously. For example, DMD2 \cite{yin2024improved} achieves SoTA performance in 
distilling SDXL. However, for SDXL, it relies on teacher-synthesized noise–image pairs to train the one-step generator and real data to train the four-step generator. Without teacher-synthesized noise–image pairs, the one-step generator cannot be trained reliably; and in the absence of real data, the performance of the four-step generator drops noticeably. Furthermore, removing DMD2's distribution matching loss improves FID but significantly harms CLIP scores, while omitting its GAN loss yields comparable CLIP scores but degrades FID. Thus, although DMD2 provides a strong framework for high-resolution T2I diffusion distillation, it underperforms in data-free settings and exhibits a persistent trade-off between alignment and diversity—even when auxiliary data are available.

DMD2 builds upon its predecessor, DMD \citep{yin2023onestepDW}, which has been outperformed by Score identity Distillation (SiD) \citep{zhou2024score} in one-step distillation of unconditional and label-conditional diffusion models, and by SiD with Long and Short Guidance (LSG) \citep{zhou2024long} in one-step distillation of SD1.5 \citep{rombach2022high}, a T2I diffusion model at 512×512 resolution. These findings motivate us to optimize SiD for multistep generation while keeping the algorithm simple to implement and supported by theoretical grounding. Since SiD is designed to operate in a fully data-free setting, its theoretically grounded extension to few-step generation is well positioned to address the first two challenges.

To address the third challenge, we enhance SiD with either Zero-CFG or Anti-CFG—two more adaptive strategies for balancing alignment and fidelity when real data are available. Unlike prior methods, both disable CFG in the teacher by default; Zero-CFG removes text conditioning from the fake score network, while Anti-CFG applies negative CFG. These memory-efficient strategies enable effective navigation of the alignment–diversity trade-offs commonly observed  in T2I diffusion~models.

We conduct comprehensive experiments on distilling SD1.5 and SDXL into one- and few-step generators, presenting key findings in the main paper and deferring detailed ablation studies and full results to the Appendix. These results validate the theoretical foundations of our distillation algorithm—which aims to match the uniform mixture of outputs from all generation steps to the data distribution—and establish SoTA performance of the proposed multistep SiD framework, real data-based enhancement techniques, and the new guidance strategies Zero-CFG and Anti-CFG.

\section{Preliminaries and related works}

Denote \( t \) 
as a step in the forward diffusion process that corrupts the clean image  $\xv_0$ by sampling noisy image $\xv_t$ from 
$
q(\xv_t \given \xv_0) = \mathcal{N}(a_t \xv_0, \sigma_t^2 \mathbf{I}). 
$  
For a T2I diffusion model, %
we denote $\phi$ as the pretrained model parameter and define \( f_{\phi}(\xv_t,\cv) \) as the functional approximation of $\E[\xv_0\given  \xv_t,\cv]$, which is the conditional expectation of the real image $\xv_0$ given noisy image $\xv_t$ and text $\cv$. Similarly, we denote $\epsilonv_{\phi}(\xv_t,\cv)$ as the function predicting the noise within $\xv_t$, and adopt  $-\sigma_t^{-1}\epsilonv_{\phi}(\xv_t,\cv)$ as the functional approximation of the true data score $\nabla_{\xv_t} \ln p_\text{data}(\xv_t\given \cv) $.
A pretrained T2I diffusion model estimates the true data score as $\nabla_{\xv_t} \ln p_\text{data}(\xv_t\given \cv) \approx S_{\phi}(\xv_t, \cv)$, where the pretrained teacher score network $S_{\phi}(\xv_t, \cv)$ is related to both $\xv_0$ and $\epsilonv$ predictions  through the following relationship:
\ba{
   -\sigma_t S_{\phi}(\xv_t, \cv):= {\sigma_t^{-1}}({\xv_t-a_t f_{\phi}(\xv_t,\cv)}) =  \epsilonv_{\phi}(\xv_t,\cv). \notag
}
Note that $t$ as an input to the above functions is omitted for brevity. %

Given %
teacher $S_{\phi}(\xv_t, \cv)$, 
the goal of %
guided diffusion distillation is to distill a student model \( p_{\theta}(\xv_g \given \cv) \) from it to generate text-guided random samples: 
$ %
\xv_g = G_{\theta}(\zv, \cv), ~\zv \sim p(\zv), %
$ %
where \( G_{\theta} \) is a neural network parameterized by \(\theta\) that transforms noise \( \zv \sim p(\zv) \) into generated data \( \xv_g \) guided by the text condition \(\cv\). The generator \( G_{\theta} \) is typically initialized from  \( S_\phi \), but is significantly more efficient, requiring only a single or a few evaluations instead of the many iterative refinements performed by the teacher model. While the distribution of \( \xv_g \) is typically implicit, its diffused versions follow semi-implicit distributions \citep{yin2018semi,yu2023semiimplicit} that support various score-related identities \citep{vincent2011connection,zhou2024score}.

\textbf{Diffusion Distillation via Distribution Matching. }
To drive %
$p_{\theta}(\xv \given \cv)$ towards the data distribution $p_{\text{data}}(\xv \given \cv)$, a foundational approach—pioneered by Diffusion GAN \citep{wang2022diffusion}—is to match their noisy counterparts  at any time step~$t$ of the forward diffusion process. 
This ensures that the supports of the two distributions overlap and that the statistical divergence
$ %
\mathcal D(p_{\theta}(\xv_t \given \cv), p_{\text{data}}(\xv_t \given \cv))
$ %
is well-defined and yields meaningful gradients for model parameter $\theta$.
Here, $\mathcal D$ denotes a divergence measure, such as the Jensen--Shannon divergence \citep{goodfellow2014generative,wang2022diffusion,Xu2023UFOGenYF,Sauer2023AdversarialDD},  KL divergence, or Fisher divergence.

The KL divergence, expressed as $\mbox{KL}(p_{\theta}(\xv_t \given \cv)||p_{\text{data}}(\xv_t \given \cv)) $, has emerged as a prominent choice in diffusion distillation \citep{poole2023dreamfusion,wang2023prolificdreamer,luo2023diffinstruct,yin2023onestepDW,thuan2024swiftbrush,yin2024improved,dao2024swiftbrush}. However, it often requires additional support from real data–based adversarial losses or teacher-synthesized data–based regression losses to achieve satisfactory performance \citep{yin2023onestepDW,yin2024improved}. In contrast, the Fisher divergence, expressed as $\E_{\xv_t\sim p_{\theta}(\xv_t)}[
\|S_{\phi}(\xv_t,\cv) - \nabla_{\xv_t}\ln p_{\theta}(\xv_t\given \cv)\|_2^2]$,  offers a compelling alternative: it can rival the teacher in a fully data-free setting \citep{zhou2024score,zhou2024long,luo2024one} and, as demonstrated in unconditional and label-conditional tasks, can convincingly outperform the teacher when access to real images is available \citep{zhou2025adversarial}. Given the SoTA performance achieved by the SiD framework in one-step generation, %
this work focuses on extending the use of Fisher divergence to the few-step generation regime for T2I tasks. We present both a data-free multistep solution that excels in generation fidelity and text-image alignment, and a data-dependent solution that further enhances generation diversity when real data is available.

\textbf{Related Work in T2I Diffusion Distillation. }
Our work builds on SiD and its extensions \citep{zhou2024score, zhou2024long, zhou2025adversarial}, as well as prior advancements in accelerating diffusion models and CFG—particularly in the context of T2I generation.
Diffusion distillation methods can be broadly categorized into trajectory-based and score-based approaches \citep{zhang2025towards, fan2025survey}. Our method falls into the latter category—score-based distillation—with the option to incorporate a Diffusion GAN–based adversarial loss \citep{wang2022diffusion}.
Diffusion distillation methods can also be classified as either data-free or data-dependent, with the latter requiring access to real data, teacher-synthesized data, or both.
Our method can operate in a fully data-free setting, but can also utilize real data—even in limited quantity—via the Diffusion GAN loss to enhance generation diversity.
We defer a detailed review of related work in Appendix~\ref{sec:relatedwork}.

\section{Method}

Leveraging score-related identities within semi-implicit distributions \citep{yin2018semi,yu2023semiimplicit}, SiD trained in a data-free manner has emerged as a SoTA  approach for one-step diffusion distillation \citep{zhou2024score,zhou2024long}. 
In unconditional and label-conditional diffusion models, %
SiD
can closely match—or even surpass—the performance of the teacher model \citep{zhou2024score,zhou2025adversarial}. Moreover, when real data is available, %
SiD can leverage both the teacher and real data during distillation to convincingly outperform the teacher \citep{zhou2025adversarial}. Notably, these strong results are achieved using a single generation step and without the use of CFG.

While SiD performs well in unconditional and label-conditional generation without requiring multiple generation steps, CFG, or access to training images, these components are often considered essential for achieving strong and robust performance in high-resolution T2I tasks addressed in this work.  In this paper, we enhance SiD to effectively address the challenges of T2I generation by optimizing it for few-step generation and introducing two novel guidance strategies. Together, these extensions enable robust and strong performance at high resolutions.

\textbf{Fisher Divergence. } In one-step diffusion, we generate noisy fake images \( \xv_t \) given text $\cv$ via 
 \ba{ \xv_t = a_t \xv_g + \sigma_t \epsilonv_t, %
\quad\xv_g=G_{\theta}(\zv, \cv),\quad\zv, \epsilonv_t \sim \mathcal{N}(0, \mathbf{I}).\label{eq:xv_t_g}
}
 Assume the exact score of the diffused training data, given by \( S_{\phi^*}(\xv_t,\cv) = \nabla_{\xv_t} \ln p_\text{data}(\xv_t \given \cv) \), is approximated by a pretrained score network $S_{\phi}(\xv_t,\cv)$, trained using a denoising diffusion or score matching loss \citep{vincent2011connection,sohl2015deep,song2019generative,ho2020denoising}. Given a time step $t$, text condition \( \cv \), and the teacher \( S_{\phi}(\xv_t,\cv) \), the optimization objective of SiD is the model-based Fisher divergence:
\begin{align} %
\resizebox{.955\columnwidth}{!}{$
\!\gL_{\theta}  = 
\E_{p_{\theta}(\xv_t)}\left[
\|S_{\phi}(\xv_t,\cv) - \nabla_{\xv_t}\ln p_{\theta}(\xv_t\given \cv)\|_2^2\right]
=\E_{ p_{\theta}(\xv_t)}\left[
\|a_t\sigma_t^{-2} (f_{\phi}(\xv_t,\cv)-
f_{\psi^*(\theta)}(\xv_t,\cv)\|_2^2\right],$
}\!\!
\label{eq:MESM} 
\end{align}
where we assume access to the optimal fake score network for the generator $G_{\theta}$:
\ba{
f_{\psi^*(\theta)}(\xv_t,\cv)=\E[\xv_g\given \xv_t,\cv]=(\xv_t + \sigma_t^2 \nabla_{\xv_t} \ln p_\theta(\xv_t\given \cv))/a_t. \label{eq:psistar}
} 
The objective \eqref{eq:MESM} is minimized when the model distribution matches that of the teacher.  While intractable since \( \psi^*(\theta) \) is unknown, it admits an effective alternating optimization solution~\citep{zhou2024score}.

\textbf{Alternating Optimization with CFG Enhancement. }
In T2I diffusion models,
we denote \ba{
 f_{\phi,\kappa}(\xv_t,\cv) =  f_{\phi}(\xv_t,\emptyset) + \kappa[ f_{\phi}(\xv_t,\cv)-f_{\phi}(\xv_t,\emptyset)].\notag
 } 
 as the teacher  enhanced with CFG at scale $\kappa$.
Similarly, we %
denote $f_{\psi,\kappa}(\xv_t,\cv)$ as the fake score network under CFG. 
Ignoring time-dependent loss reweighting coefficients as of now, the Fisher divergence enhanced with CFG is optimized by alternating  between optimizing \( \psi \) and \( \theta \) as~follows: 
\ba{
\resizebox{.95\columnwidth}{!}{$
L_{\psi,t,\kappa_1} %
=\textstyle
\|f_{\psi,\kappa_1}(\xv_t,\cv)-\xv_g\|_2^2%
, ~~
L_{\theta,t,\kappa_{2:4}} %
=(f_{\phi,\kappa_4}(\xv_t,\cv)-f_{\psi,\kappa_2}(\xv_t,\cv))^T(f_{\psi,\kappa_3}(\xv_t,\cv)-\xv_g),
$} \! %
\label{eq:obj-theta_lsg}
}
where scales \( \kappa_{1:4} \) %
can be tuned to achieve different guidance strategies. Typical settings of 
$\kappa$ values corresponding to conventional and LSG guidance strategies, as well as two additional settings later introduced as Zero-CFG and Anti-CFG, are listed in Table\,\ref{tab:Hyperparameters} in Appendix~\ref{sec:detail}.

\subsection{Data-free multistep distillation} \label{sec:multistep}
For diffusion distillation, SiD defines a one-step generator as  
$
\xv_g = G_{\theta}(\sigma_{t_{\text{init}}} \zv, t_{\text{init}}, \cv), ~\zv \sim \mathcal{N}(0, \mathbf{I}),
$
where \( t_{\text{init}} = 625\) is the default  and 
 $\theta$ is initialized from the teacher  
 $\phi$.  
 To extend this to a multistep generator, we set \( \xv_g^{(0)} = 0 \), let \( \lfloor \cdot \rfloor \) denote the floor function, and for \( k = 1, \dots, K \), define  
\ba{
\xv_g^{(k)} = G_{\theta} \big( a_{t_k} \xv_g^{(k-1)} + \sigma_{t_k} \zv_k, \tau_k, \cv \big), \quad \label{eq:G_multistep}
\tau_k = \lfloor (1 - (k-1)/K) t_{\text{init}} \rfloor, \quad \zv_k \sim \mathcal{N}(0, \mathbf{I}).
}
Note that this formulation was previously used to perform multistep inference with a SiD-distilled one-step generator \citep{zhou2024long}. In contrast, our objective is to explicitly optimize \( G_{\theta} \) for multistep inference, such as generation with four steps.
A slightly modified variant—setting \( t_{\text{init}} = T \)—was employed in \citet{kohler2024imagine} and \citet{yin2024improved}, where it is referred to as “backward simulation.”

We consider two strategies to distill the multistep generator: \textit{Final-Step Matching}, which compares only the final output \( \xv_g^{(K)} \) against the data distribution, and \textit{Uniform-Step Matching}, which compares a uniform mixture over the outputs of all generation steps \( \{ \xv_g^{(k)} \}_{k=1}^K \) with the data distribution.

\textbf{Final-Step Matching. }  
This approach distills \( G_\theta \) by backpropagating through the entire generation chain defined in \eqref{eq:G_multistep}, where \( G_\theta \) is nested \( K \) times. This strategy has proven effective in learning diffusion-based multistep policies~\citep{wang2022diffusionrl}. In the context of diffusion distillation, it requires no changes to the SiD algorithm aside from redefining \( \xv_g \) as the nested composition of \( G_\theta \) over \( K \) steps. However, in practice, the chain is typically kept short ($e.g.$, \( K = 5 \) in \citep{wang2022diffusionrl}), as optimization becomes increasingly challenging with larger \( K \). Our experiments show that while increasing \( K \) can improve FID scores for final-step matching, it also slows convergence and tends to degrade the CLIP scores. %

\textbf{Uniform-Step Matching. }  
This approach avoids backpropagating through the entire generation chain. During training, it randomly samples a step \( k \in \{1, \ldots, K\} \) and generates its output as  
\begin{align}
\xv_g^{(k)} = G_{\theta} \big( a_{t_k} \, \text{sg}(\xv_g^{(k-1)}) + \sigma_{t_k} \zv_k, \, \tau_k, \, \cv \big), \label{eq:G_multistep1}
\end{align}  
where \(\text{sg}(\cdot)\) denotes stop-gradient that prevents gradients from propagating into earlier steps.
Ideally, each output—including both final and intermediate steps—should align with the data distribution. From the perspective of dense rewards \citep{yang2024a}, this enables dense supervision across all  steps.

To optimize \( G_\theta \) at step $k$, we forward diffuse 
\( \xv_g^{(k)} \) to a random  timestep $t\in\{1,\ldots,T\}$  as
\ba{
\xv_t^{(k)} = a_t \xv_g^{(k)} + \sigma_t \zv_k,\quad \zv_k \sim \mathcal{N}(0, \mathbf{I}). \label{eq:xtk}
} 
The dense supervision at step $k$ can then be provided by  the corresponding Fisher divergence~as  
\begin{align}
\gL_{\theta}^{(k)} = 
\E_{\xv_t^{(k)}\sim p_{\theta}(\xv_t^{(k)})} \left[
\|S_{\phi}(\xv_t^{(k)}, \cv) - \nabla_{\xv_t^{(k)}} \ln p_{\theta}(\xv_t^{(k)} \given \cv)\|_2^2
\right].
\label{eq:MESM1}
\end{align}

\begin{lemma}\label{lemma:dropk}
Assuming the pretrained teacher score estimation network \( S_\phi \) reaches its theoretical optimum, i.e., \( S_{\phi^*}(\xv_t,\cv) = \nabla_{\xv_t} \ln p_\emph{\text{data}}(\xv_t \given \cv) \), the optimal distributions over \( \xv_g^{(k)} \) are identical for all \( k \) and equal to \( p_\emph{\text{data}}(\xv_g^{(k)} \given \cv) \).

\end{lemma}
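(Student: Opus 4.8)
The plan is to read the combined Uniform-Step objective as a nonnegative mixture of the per-step Fisher divergences $\gL_\theta^{(k)}$, show that it vanishes exactly when every diffused step marginal equals the diffused data marginal, and then use injectivity of Gaussian smoothing to pull this back to the clean distributions; a short feasibility argument then makes ``the optimal distributions'' well defined.

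First I would fix the condition $\cv$ and a noise level $t$. Since $S_\phi(\xv_t,\cv)=\nabla_{\xv_t}\ln p_\text{data}(\xv_t\given\cv)$ by assumption, \eqref{eq:MESM1} becomes $\gL_\theta^{(k)}=\E_{\xv_t^{(k)}\sim p_\theta(\xv_t^{(k)})}\big[\|\nabla\ln p_\text{data}(\xv_t^{(k)}\given\cv)-\nabla\ln p_\theta(\xv_t^{(k)}\given\cv)\|_2^2\big]\ge 0$, with equality iff the two scores agree $p_\theta(\xv_t^{(k)})$-a.e. Because $\xv_t^{(k)}$ in \eqref{eq:xtk} is $a_t\xv_g^{(k)}$ plus Gaussian noise $\sigma_t\zv$ with $\sigma_t>0$, the density $p_\theta(\xv_t^{(k)}\given\cv)$ is smooth, strictly positive, and full-support on $\mathbb{R}^d$; hence $\nabla(\ln p_\theta-\ln p_\text{data})\equiv 0$ on a connected domain, so $\ln p_\theta(\xv_t^{(k)}\given\cv)-\ln p_\text{data}(\xv_t^{(k)}\given\cv)$ is constant, and normalization forces $p_\theta(\xv_t^{(k)}\given\cv)=p_\text{data}(\xv_t^{(k)}\given\cv)$.

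Next I would assemble the pieces. The Uniform-Step objective averages $\gL_\theta^{(k)}$ over $k\sim\mathrm{Unif}\{1,\dots,K\}$ and over $t$, so it equals $\tfrac1K\sum_{k=1}^K\E_t[\gL_\theta^{(k)}]$, a convex combination of nonnegative terms; its minimum value $0$ is attained exactly when $\gL_\theta^{(k)}=0$ for every $k$ and every $t\ge 1$, i.e. when $p_\theta(\xv_t^{(k)}\given\cv)=p_\text{data}(\xv_t^{(k)}\given\cv)$ for all $k$ and all $t$ (indeed one $t>0$ already suffices). Matching noisy marginals forces matching clean ones: taking Fourier transforms of $\xv_t^{(k)}=a_t\xv_g^{(k)}+\sigma_t\zv$ gives $\widehat{p_\theta^{(k)}}(a_t\xiv)\,e^{-\sigma_t^2\|\xiv\|^2/2}=\widehat{p_\text{data}}(a_t\xiv)\,e^{-\sigma_t^2\|\xiv\|^2/2}$, and since the Gaussian factor never vanishes, $\widehat{p_\theta^{(k)}}\equiv\widehat{p_\text{data}}$, hence $p_\theta(\xv_g^{(k)}\given\cv)=p_\text{data}(\xv_g^{(k)}\given\cv)$ for every $k$. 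Since the right-hand side is independent of $k$, the optimal distributions of $\xv_g^{(1)},\dots,\xv_g^{(K)}$ all coincide with $p_\text{data}(\cdot\given\cv)$.

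The part I expect to be the main obstacle is confirming that the value $0$ is actually realizable by a \emph{single} shared $G_\theta$ obeying the recursion \eqref{eq:G_multistep1}, so that the minimizer exists. Here I would induct on $k$: $\xv_g^{(0)}=0$ is fixed; if $\xv_g^{(k-1)}\sim p_\text{data}(\cdot\given\cv)$, then $a_{t_k}\xv_g^{(k-1)}+\sigma_{t_k}\zv_k$ is exactly a forward-diffused data sample at level $t_k$, and one may take $G_\theta(\cdot,\tau_k,\cv)$ to be a measurable one-step transport from $p_\text{data}(\xv_{t_k}\given\cv)$ back to $p_\text{data}(\cdot\given\cv)$ --- e.g. the teacher's probability-flow-ODE solution map --- so that $\xv_g^{(k)}\sim p_\text{data}(\cdot\given\cv)$. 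Since this target map has the same functional form (a noisy image plus its timestep as inputs) at every level, a sufficiently expressive $G_\theta$ can represent all of them at once; under the infinite-capacity idealization standard in SiD such a $G_\theta$ exists, the chain stays on $p_\text{data}$ at every step, and the objective attains $0$. Combined with the previous paragraph this proves the lemma. I would also note that if the objective were instead read as matching only the uniform mixture $\tfrac1K\sum_k p_\theta(\xv_g^{(k)}\given\cv)$ to $p_\text{data}$, the per-step statement proved here is strictly stronger and is precisely what the separate per-step divergences $\gL_\theta^{(k)}$ buy, yielding dense step-wise supervision.
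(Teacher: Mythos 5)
Your proof is correct and follows the same route as the paper's: the paper's entire proof is the two-sentence observation that, with an exact teacher, the per-step Fisher divergence in \eqref{eq:MESM1} vanishes if and only if \( p_\theta(\xv_g^{(k)} \given \cv) = p_{\text{data}}(\xv_g^{(k)} \given \cv) \), so all \( K \) objectives share the common optimum \( p_{\text{data}}(\cdot \given \cv) \). You simply supply the supporting details the paper leaves implicit---full support of the Gaussian-smoothed densities, the deconvolution/Fourier step back from noisy to clean marginals, and the realizability of the common optimum by a single shared \( G_\theta \)---which strengthens rather than changes the argument.
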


To simplify the \( k \)-specific learning objective in \eqref{eq:MESM1}, we state Lemma~\ref{lemma:dropk} and defer its proof to Appendix~\ref{sec: diffgan}.
The result of this lemma shows that the step index \( k \) can be dropped, justifying the use of a single, shared fake score network as defined in~\eqref{eq:psistar}. This network serves as the optimal score estimator for noise images \( \xv_t \) generated from a uniform mixture over outputs from all \( K \) steps:
\begin{align}
\xv_t = a_t \xv_g + \sigma_t \boldsymbol{\epsilon}_t, \quad \xv_g \sim p_{\theta}(\xv_g \mid \cv) = \textstyle\frac{1}{K} \sum_{k=1}^K p_{\theta}(\xv_g^{(k)} \mid \cv), \label{eq:xt2}
\end{align}
where \( \xv_g^{(k)} \) is generated as in \eqref{eq:G_multistep1}. In other words, rather than optimizing a step-specific Fisher divergence with a separate fake score network for each step, one can instead match the uniform mixture of outputs from all \( K \) steps to the data distribution using a single, shared fake score~network.

With this simplification, the Fisher divergence objective in~\eqref{eq:MESM1} %
can be optimized by sampling \( \xv_g \) and \( \xv_t \) from a randomly selected generation step using~\eqref{eq:xt2}, and alternating between optimizing \( \psi \) and \( \theta \) via~\eqref{eq:obj-theta_lsg}. While all SiD variants adopt \eqref{eq:obj-theta_lsg} for alternating optimization, they differ in how generator outputs are utilized: %
final-step matching use only the final output from the $K^{th}$ step, whereas uniform-step matching leverages a uniform mixture of outputs from all \( K \) generation steps.

\subsection{Enhancements with real images, guidance strategies, and coding optimizations} \label{sec:3.2}%

\textbf{Data-Dependent Enhancement. } 
While SiD supports data-free distillation, prior work has enhanced it by incorporating Diffusion GAN \citep{wang2022diffusion}, leveraging real data to compensate for the gap between the true scores and those estimated by the teacher score network \citep{zhou2025adversarial}.
In the context of  label-conditional diffusion models, their results show that access to real data further improves SiD, enabling it to convincingly outperform the teacher in a single generation step, without using~CFG.

Motivated by these findings, we explore combining SiD with CFG, multistep generation, and Diffusion GAN for T2I generation. 
We find that incorporating real text-image pairs  primarily improves SiD's generation diversity, as evidenced by significantly reduced FID scores, while also maintaining or slightly improving text–image alignment, as reflected by modest gains in CLIP scores.
We defer the details of how the Diffusion GAN-based adversarial loss is integrated with SiD, CFG, and multistep generation to Appendix~\ref{app:code},
 but emphasize here that incorporating Diffusion GAN introduces no additional model parameters. Specifically, for the fake score network $f_{\psi}$, we denote the part reused as the Diffusion GAN discriminator by~$D$. For both SD1.5 and SDXL, $f_{\psi}$ is a U-Net, and $D$ extracts a 2D discriminator map from the intermediate latent representation after the mid-block and before the upsampling block by pooling along the channel dimension.

 Similar to SiD trained step-wise with gradient blocking—where no step-specific modifications to the loss are required—the Diffusion GAN loss can be applied to the uniform mixture of all \( K \) generation steps' outputs %
 to compensate for the discrepancy between the true and teacher score networks.

\textbf{Zero-CFG and Anti-CFG. }
Previous works in T2I generation typically apply CFG only to the pretrained score network \( f_{\phi} \), as seen in DMD \citep{yin2023onestepDW}. This corresponds to setting \( \kappa_1 = \kappa_2 = \kappa_3 = 1 \), \( \kappa_4 > 1 \) in \eqref{eq:obj-theta_lsg}, and tuning the guidance scale \( \kappa_4 \) to balance CLIP and FID scores. SiD-LSG~\citep{zhou2024long} extends this approach by applying the same CFG scale to all components, $i.e.$, \( \kappa_1 = \kappa_2 = \kappa_3 = \kappa_4 > 1 \).
Despite achieving SoTA FID in a data-free, single-step generation setting, SiD-LSG~\citep{zhou2024long} remains subject to the same trade-offs observed in diffusion models and their distilled variants: stronger guidance improves text–image alignment and visual quality, while weaker guidance promotes generation diversity. %
In particular, the one-step generation of SiD under two representative LSG settings—\( \kappa_i = 1.5 \) and \( \kappa_i = 4.5 \) for all \( i = 1, 2, 3, 4 \)—is optimized for low FID and high CLIP, respectively, but fails to achieve strong performance on both metrics simultaneously.

To overcome this limitation, we introduce both Zero-CFG and Anti-CFG as two novel guidance strategies. By default, both apply the standard text condition—without CFG—when evaluating the teacher~\( f_{\phi} \). For the fake score network \( f_{\psi} \), Zero-CFG removes text conditioning entirely by setting \( \cv = \emptyset \), corresponding to \( \kappa_1 = \kappa_2 = \kappa_3 = 0 \), \( \kappa_4 = 1 \), while Anti-CFG applies a negative CFG scale, corresponding to \( \kappa_1 = \kappa_2 = \kappa_3 = -1 \), \( \kappa_4 = 1 \).
 In both cases, \( f_{\psi} \) operates with weakened or inverted guidance, while \( f_{\phi} \) retains standard conditioning without amplification. The Diffusion GAN component continues to use the standard text condition throughout. It is worth noting that Zero-CFG is distinct from the ``No CFG'' configuration, where \( \kappa_1 = \kappa_2 = \kappa_3 = \kappa_4 = 1 \). When memory is limited, both Zero-CFG and Anti-CFG are preferable to the LSG setting, which applies CFG to both the teacher and fake score networks. In particular, Zero-CFG—which contrasts a text-conditioned teacher with an unconditional fake score network without CFG—is especially appealing, as it offers the lowest memory footprint and per-iteration computation cost among all guidance strategies.

Under Zero-CFG or Anti-CFG, the SiD loss prioritizes alignment and fidelity by removing or inverting the fake score network’s access to text conditions, thereby forcing the generator to adapt. In parallel, the integrated Diffusion GAN loss enhances generation diversity and fidelity. Experiments show that, when combined with this adversarial loss, both strategies effectively reinforce text–image alignment and rank among the top-performing methods for distilling SD1.5 and SDXL.

\textbf{Implementation optimizations. } To improve distillation efficiency and scalability, we incorporate several %
coding-level optimizations into SiD. These include %
Automatic Mixed Precision (AMP) for optimized memory usage and speed, %
Fully Sharded Data Parallel (FSDP) for model scaling under memory constraints, and tailored multistep generation strategies. Together, these enhancements enable stable training of high-performing one- and few-step diffusion models while minimizing dependencies on non-native PyTorch libraries. Full implementation details are provided in Appendix~\ref{sec:implementation}.

\section{Experiments} \label{sec:experiment}

We focus on distilling SD, the leading open-source platform for T2I diffusion models that operate in the latent space of an image encoder-decoder \citep{rombach2022high}. Specifically, we target SD1.5, which has about 0.9B parameters and requires at least 1.60 GB in FP16, and SDXL, which consists of about 3B parameters and occupies at least 4.78 GB in FP16. We present the details of our method in Algorithm~\ref{alg:sid}.
We summarize the parameter settings and compare memory and computation demands in Tables\,\ref{tab:Hyperparameters} and \ref{tab:Hyperparameters_4step} in Appendix~\ref{sec:detail}. Unless otherwise specified, %
we uniformly apply these settings.

When distilling our one-step or few-step generators in a data-free setting, we utilize the Aesthetics6+ prompt \citep{cherti2023reproducible} for training. To apply data-dependent enhancement, %
we use the 480k text-image pairs %
provided by DMD2~\citep{yin2024improved}. We evaluate generation diversity by computing zero-shot FID on the COCO-2014 validation set. %
We evaluate generation fidelity and text-image alignment by employing the \verb|ViT-g-14-laion2b_s12b_b42k| encoder \citep{ilharco_gabriel_2021_5143773,cherti2023reproducible} to compute the CLIP score. %
The FID and CLIP scores shown in the figures are computed using 30k randomly sampled prompts from the COCO-2014 validation set for SD1.5 and 10k for SDXL.  For the SD1.5 results reported in Table\,\ref{tab:comparison}, we use the evaluation code %
provided by GigaGAN~\citep{kang2023scaling}, which involves  
a pre-defined list of 30k text prompts selected from the COCO-2014 validation set.
For SDXL, we follow DMD2 and use a fixed set of 10k COCO-2014 validation text prompts  to %
compute the corresponding metrics, and further report FID$_{\text{{Patch512}}}$ as FID computed on center-cropped 512x512 images. 

Below we use \textbf{SiD} to denote the data-free version; \textbf{SiD$^a$} to denote the data-enhanced version, where ``$a$'' indicates the use of an adversarial loss; and \textbf{SiD$_2^a$} to denote the two-stage variant that first distills with SiD and then uses its generator to initialize SiD$^a$. We avoid using “SiDA” or “SiD$^2$A” to prevent confusion with prior work~\citep{zhou2025adversarial}, which assumes access to the full training dataset and is optimized without CFG or multistep generation. In contrast, our setting incorporates CFG, supports multistep generation, and uses only limited subsets of real data.

\subsection{Ablation studies on multistep generation, real data usage, and guidance strategies}

To optimize SiD for high-resolution T2I tasks, we introduce three key enhancements: multistep generation, data-dependent optimization, and two new guidance strategies—Zero-CFG and Anti-CFG. We conduct detailed ablation studies using SD1.5 as the base model to evaluate the impact of each component.
We summarize the main findings below and defer the full results to Appendix~\ref{sec:SD1.5}.

We begin with one-step generation to evaluate the effects of data enhancement and the new Zero-CFG guidance strategy. Results are presented in Fig.\,\ref{fig:cfgfree} and Table\,\ref{tab:comparison} in Appendix \ref{sec:SD1.5}. Key observations include:
1)~Our AMP-based reimplementation of SiD matches FP32 in performance while running significantly faster and offering greater stability than FP16;
 2) SiD$^a$ significantly outperforms SiD, its data-free counterpart; and 3) Zero-CFG enhanced with real data 
provides a competitive and memory-efficient alternative to conventional CFG strategies.
Regarding point 2) specifically: under LSG with guidance scale 4.5, SiD achieves 16.59 FID and 0.317 CLIP, while SiD$^a$ achieves \textbf{10.10} FID and \textbf{0.321} CLIP—demonstrating that real data can effectively mitigate the diversity limitations of data-free SiD under high guidance, without sacrificing alignment or fidelity. At scale 1.5, SiD reaches 8.15 FID and 0.304 CLIP, while SiD$^a$ improves to \textbf{7.89} FID with the same CLIP. Notably, this FID of \textbf{7.89} is the lowest to date among all one-step generators distilled from SD1.5 that achieve a CLIP score exceeding 0.3.

We next study multistep SiD$^a$ using either final-step matching, which backpropagates through the entire generation chain, or uniform-step matching, which blocks gradients to isolate updates per step. Results in Fig.\,\ref{fig:cfgfree_multistep} and Table\,\ref{tab:comparison_cfgfree} in Appendix \ref{sec:SD1.5} %
show that final-step matching improves FID but lowers CLIP, suggesting increased diversity at the cost of alignment. Uniform-step matching maintains high CLIP while also benefiting FID as the number of steps increases. 
Although quantitative improvements in FID and CLIP are not consistently observed, visual inspection reveals noticeable gains in image quality—particularly in fine details—from one to four steps, with diminishing returns~thereafter.

\begin{figure}[t]%
  \centering
  \vspace{-3.5mm}
  \includegraphics[width=0.48\textwidth]{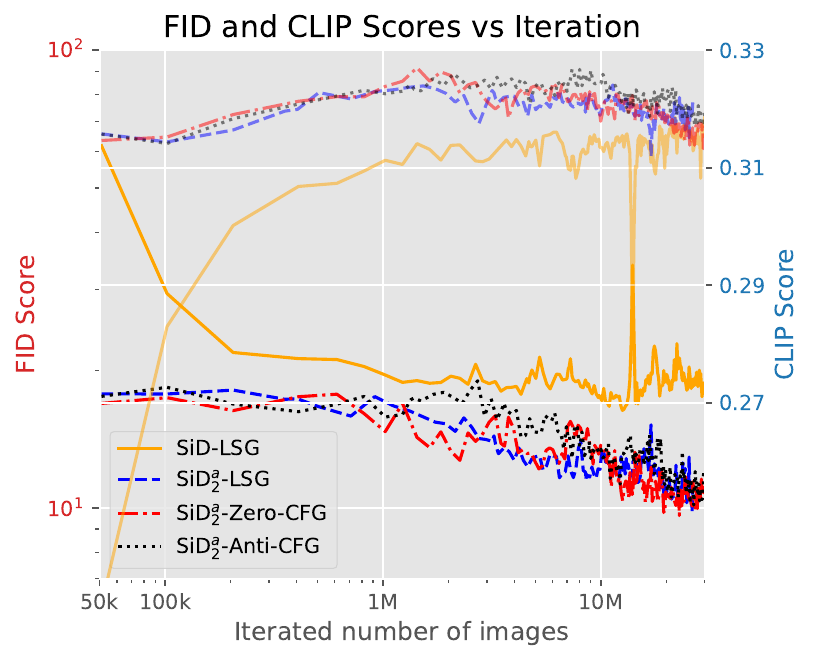}
  \includegraphics[width=0.48\textwidth]{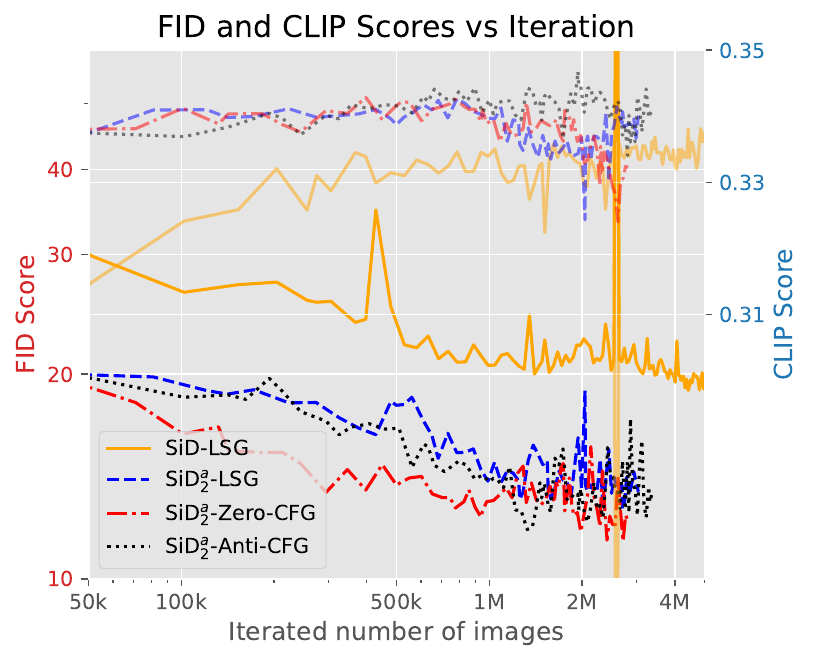}
 \caption{\small \textbf{Left: SD1.5, Right: SDXL}. Comparison of four-step SiD-LSG (data-free) and SiD$_2^a$ models trained with three different guidance strategies. All models use Uniform-Step Matching with four generation steps. SiD-LSG shows no clear conflict between decreasing FID and increasing CLIP. SiD$_2^a$ models, initialized from SiD-LSG and enhanced with real data, exhibit continued FID reduction during training, with CLIP scores peaking early and gradually declining.}
\label{fig:cfgfree_multistep1}
  \vspace{-2mm}
\end{figure}

To make better use of 
limited real training
images, %
we propose a two-stage training strategy.
First, we train a four-step SiD generator in a data-free setting using LSG with a guidance scale of 4.5, which yields strong CLIP scores that remain stable throughout training.
 This generator then serves as initialization  to train a four-step   generator, referred to as SiD$_2^a$, with real data and different guidance strategies (LSG, Zero-CFG, and Anti-CFG). For each setting, we periodically evaluate CLIP and FID, saving one early-stopping checkpoint when CLIP begins to decline (typically before reaching 10M fake images) and another with the lowest FID observed before iterating over 30M fake images. Results are presented in  Fig.\,\ref{fig:cfgfree_multistep1} and in Table\,\ref{tab:comparison_sid2a} (Appendix \ref{sec:SD1.5}), showing that using data-free SiD optimized for four-step generation as a starting point, all three guidance strategies combined with data enhancement lead to improved FID, while maintaining or moderately improving CLIP. The degree of improvement depends on the trade-off preference: favoring higher CLIP with moderately lower FID, where Anti-CFG with early stopping excels with \textbf{0.325} CLIP and 13.35 FID, or comparable CLIP with substantially lower FID, where Zero-CFG performs best with  0.318 CLIP and \textbf{9.86} FID.

\begin{table}[!t]
\centering
\small
\caption{\small  Comparison of SDXL and its accelerated one-step versions on 10k COCO-2014 prompts, following the same evaluation protocol of DMD2 \citep{yin2024improved}. Inference times are estimated using an NVIDIA A100 GPU as reference. Results are sourced from the respective scientific papers, or, when not directly available under the same settings, from \citet{wang2025rectified} and our own computations using the provided model checkpoints. %
}
\label{tab:sdxl_comparison}
\resizebox{.8\textwidth}{!}{
\begin{tabular}{@{}lccccccc@{}}
\toprule
\textbf{Method} & \textbf{Data-Free} & \textbf{Res.} & \textbf{Time ($\downarrow$)} & \textbf{\# NFEs} & \textbf{\# Param.} & \textbf{FID ($\downarrow$)} & \textbf{CLIP ($\uparrow$)}\\ \midrule
    SDXL (CFG=6)~\citep{podell2024sdxl} & N/A & 1024 &  3.5s& 100 & 3B &19.36 & 0.332 \\
     ADD (SDXL-Turbo)~\citep{Sauer2023AdversarialDD} &No &512 & 0.15s & 1 & 3B & 24.57& 0.337\\
    SDXL Lightning  ~\citep{lin2024sdxl-lightning} &No & 1024 & 0.35s & 1 & 3B & 23.92   & 0.316\\
    DMD2 ~\citep{yin2024improved}&No & 1024 & 0.35s & 1 & 3B & 19.01   & 0.336 \\
    Hyper-SDXL \citep{ren2024hypersd}&No&1024&0.35s&1&3B&32.35&0.334\\
    \midrule
     SiD (LSG $\kappa=4.5$) &\textbf{Yes} &1024& 0.35s & 1 & 3B & 21.30	& 0.336\\

    \midrule
    SiD$^a$  (LSG $\kappa=4.5$)&No  &1024& 0.35s & 1 & 3B& \textbf{15.13}	& 0.337\\
        SiD$^a$ (Zero-CFG) &No &1024& 0.35s & 1 & 3B & 15.60 & \textbf{0.341}\\
 \bottomrule
\end{tabular}}
\vspace{-1mm}
\end{table}

\begin{table}[!t]
\centering
\small
\caption{\small Analogous to Table\,\ref{tab:sdxl_comparison}, this table compares different methods for four-step generation on SDXL. It evaluates SiD$_2^a$ under various guidance strategies and examines the effect of early stopping at the point where the CLIP score begins to decline. FID$_{\text{Patch512}}$ denotes FID computed on center-cropped 512×512 images.}
\label{tab:sdxl_comparison1}
\resizebox{.94\textwidth}{!}{
\begin{tabular}{@{}lcccccccc@{}}
\toprule
\textbf{Method}&\textbf{Data-Free} & \textbf{Res.} & \textbf{Time ($\downarrow$)} & \textbf{\# NFEs} & \textbf{\# Param.} & \textbf{FID ($\downarrow$)}& \textbf{FID$_{\text{Patch512}}$ ($\downarrow$)} & \textbf{CLIP ($\uparrow$)}\\ \midrule
     ADD (SDXL-Turbo)~\citep{Sauer2023AdversarialDD}&No &512 & 0.34s & 4 & 3B & 23.19& & 0.334\\
    LCM-LoRA (4 step)~\citep{luo2023latentlora}&No & 1024 & 0.71s & 4 & 3B & 22.16  &  &0.317\\
     SDXL Lightning (4 step) ~\citep{lin2024sdxl-lightning}&No & 1024 & 0.71s & 4 & 3B & 24.56  &   &  0.323\\
    
    DMD2 (w/o GAN) ~\citep{yin2024improved}&\textbf{Yes} & 1024 & 0.71s  & 4 & 3B & 26.90  && 0.328 \\
    DMD2 (w/o Dist. Matching) ~\citep{yin2024improved}&No & 1024 & 0.71s  & 4 & 3B & {13.77} & & 0.307 \\
    DMD2 ~\citep{yin2024improved}&No & 1024 & 0.71s  & 4 & 3B & 19.32 &23.35  & 0.332 \\
    Hyper-SDXL \citep{ren2024hypersd}&No&1024&0.71s&4&3B&18.99&23.99&0.341\\
    PeRFlow-XL \citep{yan2024perflow}&No &1024&0.71s& 4&3B&20.99  &&0.334\\
    {Rectified Diffusion-XL (Phased)~\citep{wang2025rectified}} &No & 1024 & 0.71s & 4 & 3B & 19.71   &23.22 &  0.340\\
    \midrule
SiD (LSG $\kappa=4.5$) &\textbf{Yes} & 1024& 0.71s & 4 & 3B & 21.04	 &26.42&0.340\\

     \midrule
         SiD$_2^a$ (LSG $\kappa=4.5$, early-stop)&No & 1024& 0.71s & 4 & 3B &  17.19  &19.21&0.342\\
        SiD$_2^a$ (Zero-CFG, early-stop) &No & 1024& 0.71s & 4 & 3B & 15.92   &20.43 &0.342\\
        SiD$_2^a$ (Anti-CFG, early-stop) &No & 1024& 0.71s & 4 & 3B & 15.82   &20.18& \textbf{0.344}\\
        \midrule
        SiD$_2^a$ (LSG $\kappa=4.5$)  &No & 1024& 0.71s & 4 & 3B &  14.49 & 20.01&0.339\\
         SiD$_2^a$ (Zero-CFG)&No &1024& 0.71s & 4 & 3B &  \textbf{13.25}  &18.25&0.335\\
        SiD$_2^a$ (Anti-CFG) &No &1024& 0.71s & 4 & 3B &  {13.79} &\textbf{17.59} &{0.342}\\
 \bottomrule
\end{tabular}}
\vspace{-1mm}
\end{table}

\subsection{Distilling SDXL with SiD and its enhanced versions}

SDXL has approximately three times as many parameters as SD1.5. and requires approximately 10 times more computation per iteration on eight H100 80GB GPUs under our implementation, which utilizes both AMP and FSDP. Following the observations from the ablation studies on SD1.5, we first compare SiD and SiD$^a$ with one-step generation against prior one-step generation methods on SDXL. The results in Table\,\ref{tab:sdxl_comparison} show that the data-free SiD already performs on par with the best existing one-step generators in terms of CLIP, and is only outperformed by the teacher and DMD2—a SoTA distillation method that requires pretraining its one-step generator with a regression loss using 10K teacher-synthesized noise–image pairs to achieve robust performance. By contrast, one-step SiD trained in a data-free setting exhibits no robustness issues and requires neither real nor teacher-synthesized data to perform reliably.
 When provided with 480K real text–image pairs, SiD$^a$ significantly improves performance under either the conventional LSG strategy or the newly proposed Zero-CFG strategy. Notably, SiD$^a$ under Zero-CFG achieves the best CLIP score (0.341) and the second-best FID (15.60), just behind SiD$^a$ with LSG. Additionally, it requires less memory and runs faster, as no CFG is applied.

The results in Table\,\ref{tab:sdxl_comparison1} show that the data-free SiD, optimized for four-step generation, already performs on par with Rectified Diffusion-XL \citep{wang2025rectified}—the SoTA four-step generator in terms of CLIP—while delivering competitive FID. When provided with 480K real text–image pairs, SiD$^a$ further improves FID across all three tested guidance strategies. However, these FID gains often come at the cost of reduced CLIP scores for both LSG and Zero-CFG. 

As shown in Table\,\ref{tab:sdxl_comparison1}, applying early stopping around the point where CLIP begins to decline—typically within 1M fake images for SDXL—helps maintain modest CLIP improvements while still achieving FID scores significantly better than the prior SoTA method DMD2. Notably, SiD$_2^a$ under Anti-CFG with early stopping achieves the best CLIP score (\textbf{0.344}) and a highly competitive FID (15.82) among all compared methods, while SiD$_2^a$ under Zero-CFG attains the best FID score (\textbf{13.25}) and a competitive CLIP score (0.335). Overall, the Anti-CFG version of multistep SiD achieves a well-balanced trade-off between reducing FID and improving CLIP on distilling SDXL into a four-step generator.

These results suggest that SiD, even in a purely data-free setting, can already deliver highly competitive performance for both one- and few-step generation. This finding aligns with earlier observations that SiD—requiring only access to the pretrained teacher diffusion model—achieves SoTA or near-SoTA results when distilling models such as EDM, EDM2, SD1.5, and SD2.1-base. With just 480K real images—significantly fewer than the full training set—SiD can be further enhanced to achieve substantially lower FID while maintaining high CLIP scores.

\subsection{Qualitative evaluations, limitations, and future work}\label{sec:limitations}

For the SiD variants shown in Table\,\ref{tab:sdxl_comparison1}, we provide visualizations in Fig.\,\ref{fig:qualitative} for SiD$_2^a$ (LSG $\kappa=4.5$, early-stop), with results for additional settings included in the Appendix.
 Across these cases, SiD enhanced with multistep generation, real images, and new guidance strategies consistently demonstrates strong visual fidelity, text–image alignment, and generation diversity.

An observed limitation, as shown in Fig.\,\ref{fig:cfgfree_multistep} (Appendix~\ref{sec:SD1.5}) and Fig.\,\ref{fig:cfgfree_multistep1}, is that both SiD$^a$ and SiD$_2^a$ typically continue to improve in FID during training, while their CLIP scores peak early and gradually decline. In contrast, the data-free SiD-LSG shows no clear conflict between decreasing FID and increasing CLIP. This mismatch may be due to the limited size of the real dataset (480k), which becomes significantly oversampled by the time SiD$_2^a$ distillation reaches 2M fake images for SD1.5 and 1M for SDXL—approximately when CLIP scores tend to peak. Access to the full training dataset, which contains billions of images and is currently beyond our computational and storage capacity, could help address or alleviate this limitation.

While our method achieves SoTA FID and CLIP scores—indicating strong text–image alignment and effective recovery of the training data distribution—these metrics do not always reflect human-perceived quality, as distillation focuses on distributional fidelity rather than subjective preference. Bridging this gap often requires reinforcement learning (RL)–based preference optimization.

In such scenarios—whether RL is applied post-distillation or integrated during training—preserving generative diversity is critical for exploring the reward landscape effectively. A lack of diverse outputs can hinder optimization and limit the model’s ability to generate preferred content. Since our method retains the teacher model’s generative diversity—and can even surpass it with access to limited real training images—it provides a strong foundation for preference-guided fine-tuning, whether applied post-distillation or jointly during training. We leave this direction for future work.

\section{Conclusion} \label{sec:conclusion}

In this work, we advance diffusion distillation by enhancing the Score identity Distillation (SiD) framework with multistep generation, data-dependent optimization, and novel guidance strategies. Our approach tackles key challenges in high-resolution text-to-image generation—most notably, the reliance on real or teacher-synthesized data and the trade-off between alignment and diversity induced by classifier-free guidance (CFG). We propose two new guidance strategies, Zero-CFG and Anti-CFG, which remove or invert text conditioning in the fake score network while disabling CFG in the teacher. When paired with a Diffusion GAN–based adversarial loss, these strategies improve generation diversity without compromising alignment or fidelity.
Critically, our multistep distillation algorithm is both theoretically grounded and practically simple: it aligns the model distribution with the data distribution by matching a uniform mixture of outputs from all generation steps—a formulation that avoids step-specific networks and is easily integrated into existing training pipelines. Extensive experiments on SD1.5 and SDXL demonstrate that our distilled models achieve state-of-the-art performance in both one- and few-step generation, surpassing prior methods in FID and CLIP, while remaining robust with or without access to real images and maintaining efficiency and scalability. Our native PyTorch implementation—using DDP or FSDP with AMP—along with the distilled one- and few-step generators, will be released publicly.

\small
\bibliographystyle{plainnat}
\bibliography{%
zhougroup_ref}
\normalsize

\newpage
\appendix
\onecolumn

\begin{center}
    \Large\textbf{%
    Few-Step Diffusion via Score identity Distillation:
    Appendix}
\end{center}

\section{Additional Qualitative Examples}

\begin{figure}[h]
\includegraphics[width=\linewidth]{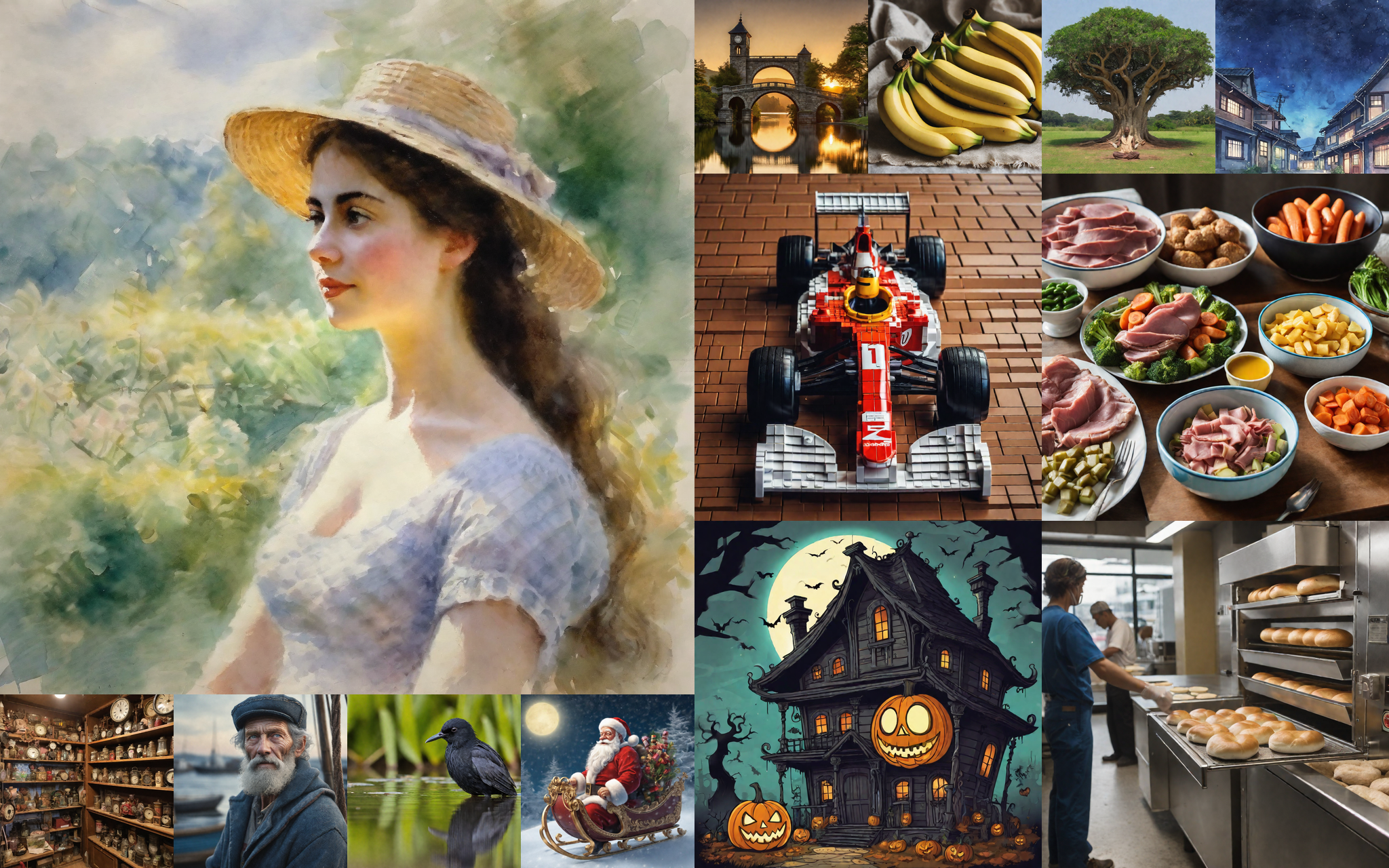}
\caption{Example four-step generations at 1024$\times$1024 resolution using our SiD-based multistep distillation method (SiD with LSG). Note that SiD is a data-free distillation method that does not require access to real images.}
    \label{fig:more_sid_examples}
\end{figure}

\begin{figure}[h]
    \centering
    \includegraphics[width=1\linewidth]{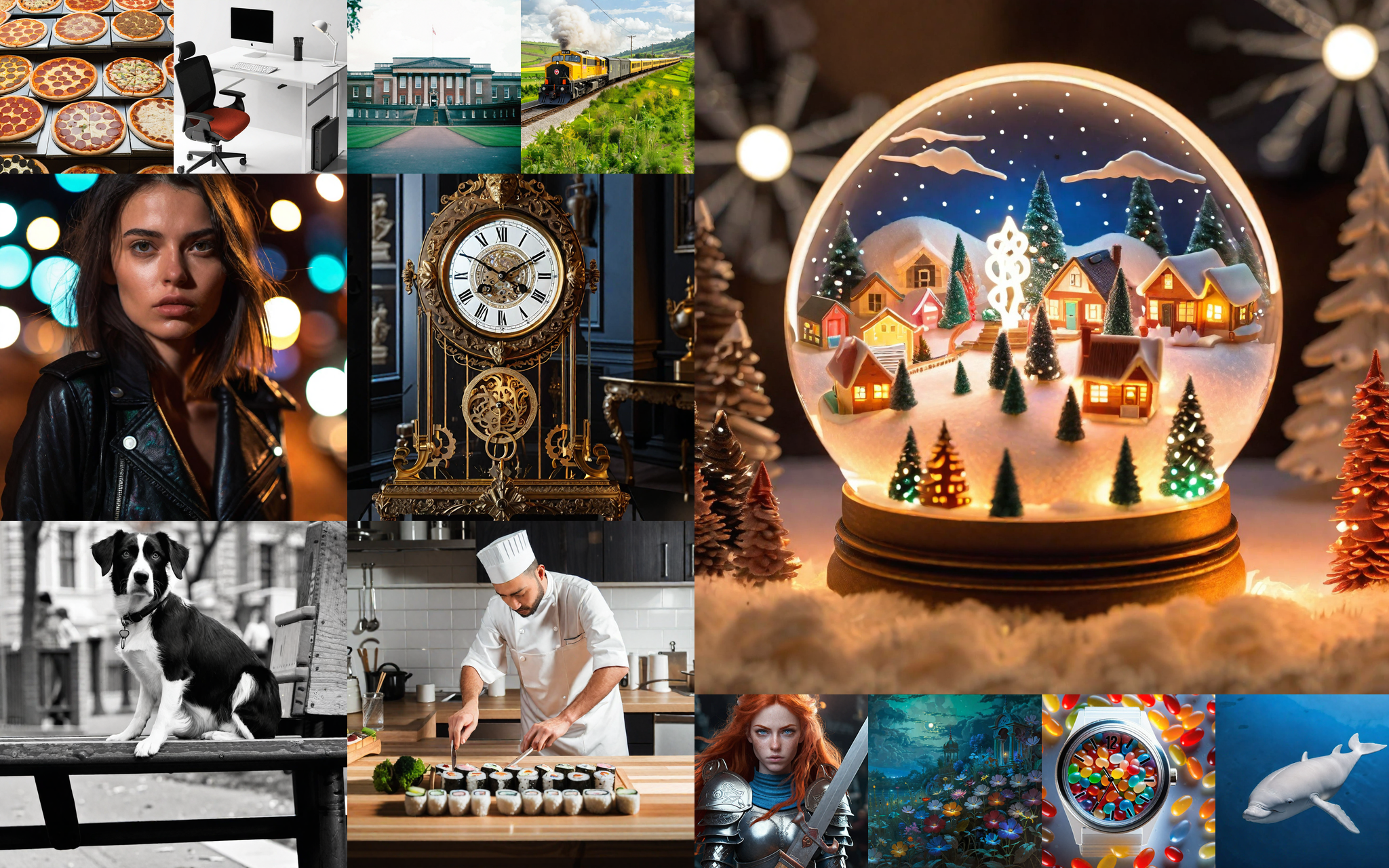}
\caption{Example four-step generations at 1024$\times$1024 resolution using our SiD-based multistep distillation method (SiD$_2^{\alpha}$ with Zero-CFG). Note that SiD$_2^{\alpha}$ initializes its generator from SiD, which is data-free, and continues training with access to a limited number of real text-image pairs.}

    \label{fig:qualitative_zero}
\end{figure}
\begin{figure}
    \centering
    \includegraphics[width=1\linewidth]{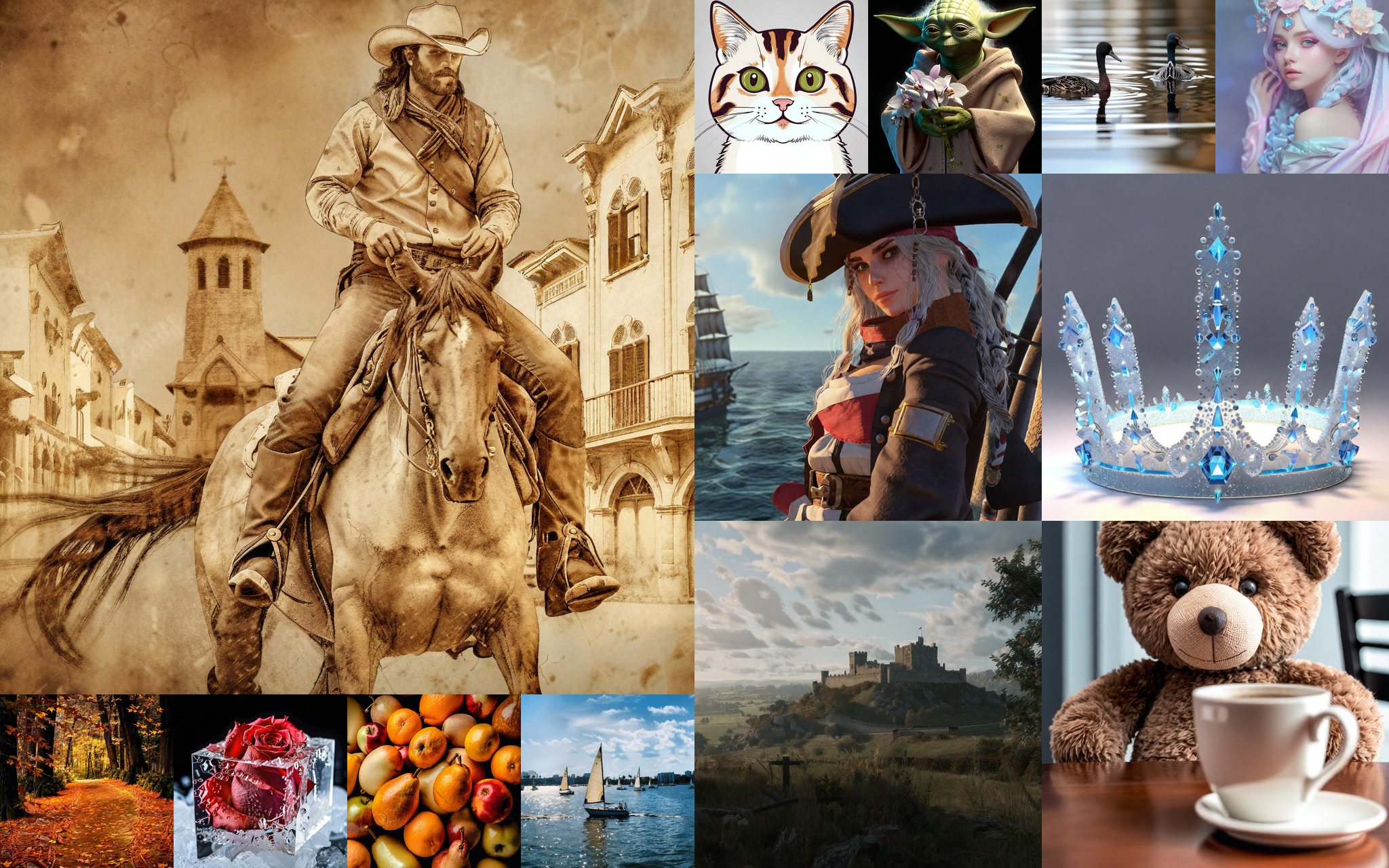}
    \caption{Example four-step generations at 1024$\times$1024 resolution using our SiD-based multistep distillation method (SiD$_2^{\alpha}$ with Anti-CFG).  Note that SiD$_2^{\alpha}$ initializes its generator from SiD, which is data-free, and continues training with access to a limited number of real text-image pairs.}

    \label{fig:qualitative_anti}
\vspace{7mm}

    \centering
    \includegraphics[width=1\linewidth]{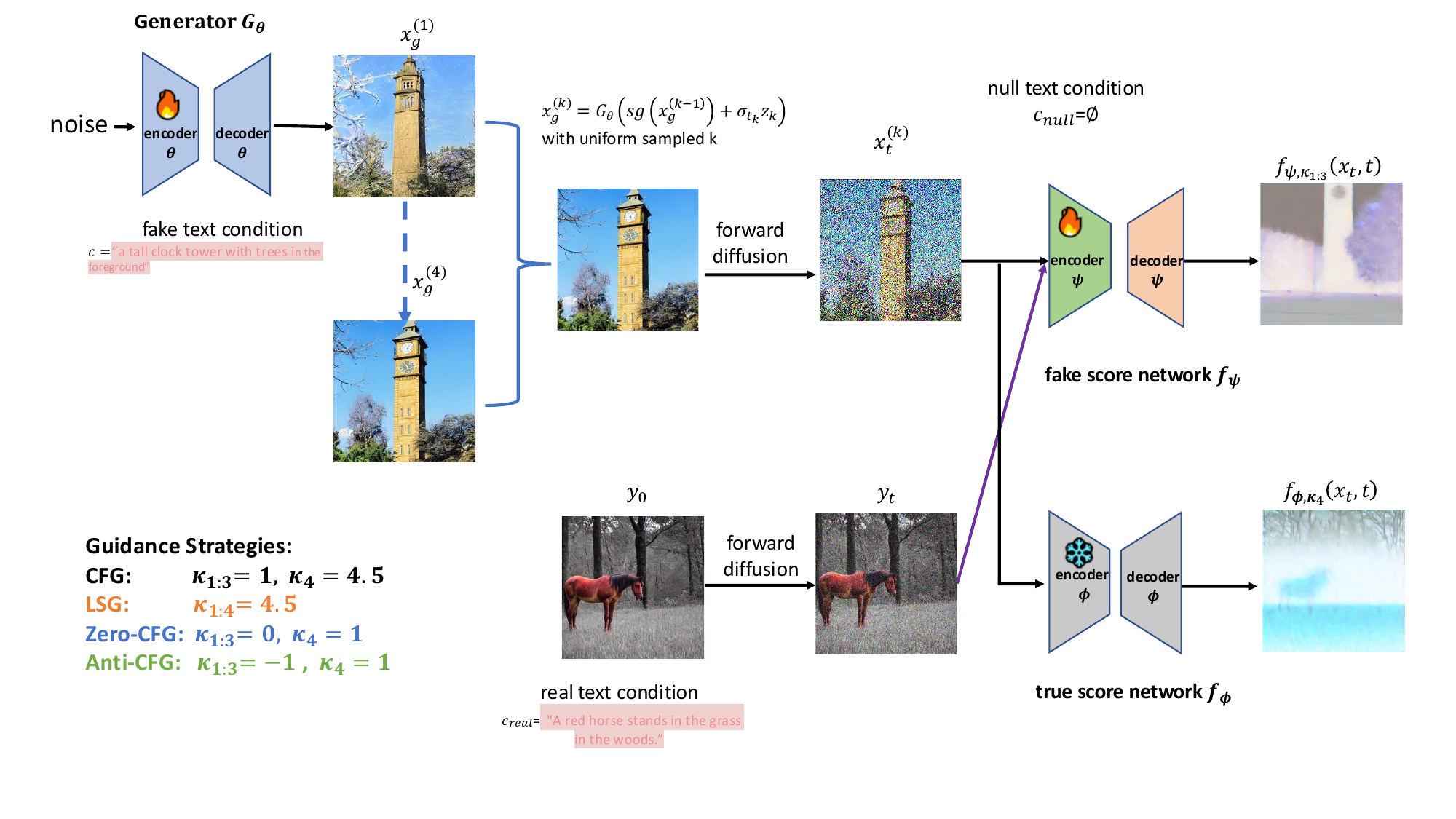}
    \vspace{-14mm}
    \caption{
    Illustration of SiD with multistep generation, data-dependent enhancement, and different guidance strategies.
    }\label{fig:model}
    \vspace{-4mm}
\end{figure}

\begin{figure*}[!ht]
\centering
   SD1.5 distilled with four-step SiD (LSG) FID = 16.60 , CLIP =0.319
 \begin{minipage}[b]{0.193\textwidth}
        \centering
        \includegraphics[width=\textwidth]{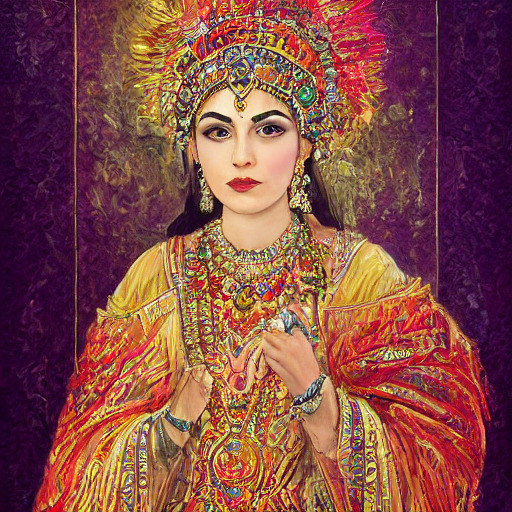}
     \\
    \end{minipage}~
    \begin{minipage}[b]{0.193\textwidth}
        \centering
        \includegraphics[width=\textwidth]{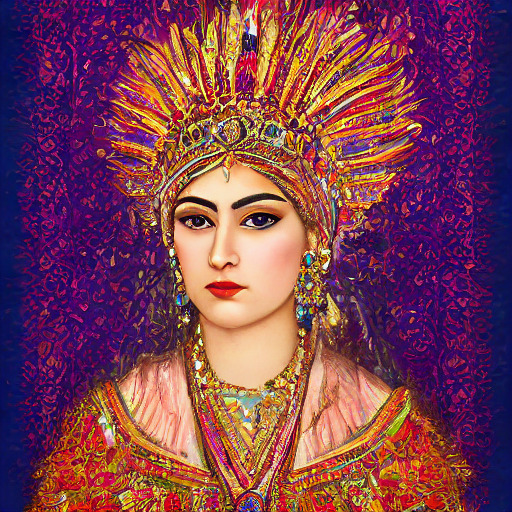}
     \\
    \end{minipage}~
    \begin{minipage}[b]{0.193\textwidth}
        \centering
        \includegraphics[width=\textwidth]{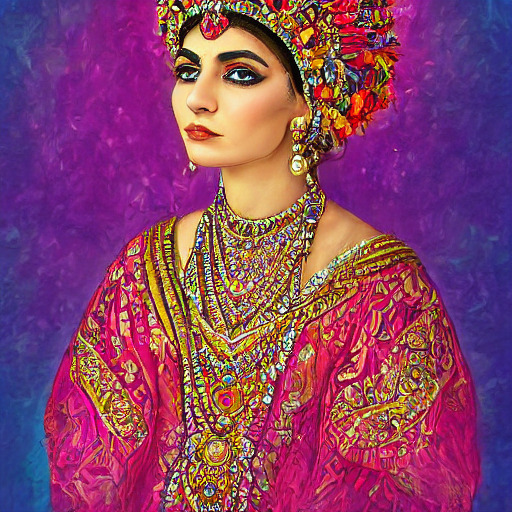}
     \\
    \end{minipage}~
    \begin{minipage}[b]{0.195\textwidth}
        \centering
        \includegraphics[width=\textwidth]{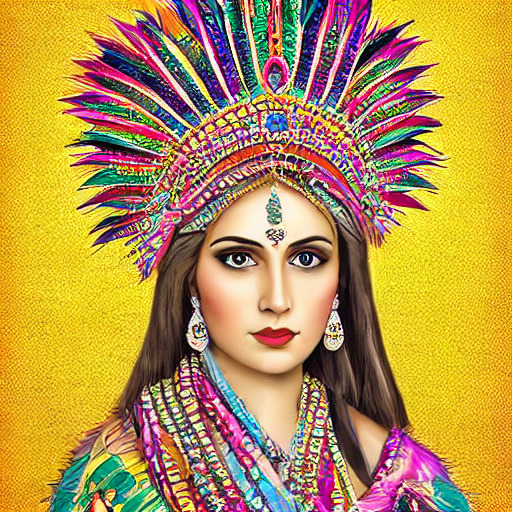}
     \\
    \end{minipage}~
    \begin{minipage}[b]{0.193\textwidth}
        \centering
        \includegraphics[width=\textwidth]{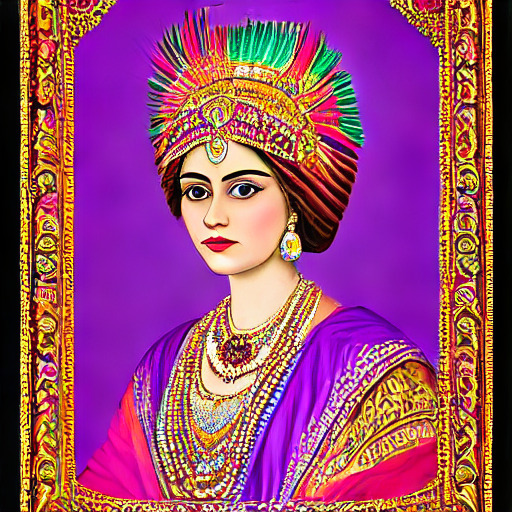}
     \\
    \end{minipage}
   SD1.5 distilled with four-step SiD$_2^{\alpha}$ (LSG ): FID = 12.52
, CLIP = 0.322
 \begin{minipage}[b]{0.193\textwidth}
        \centering
        \includegraphics[width=\textwidth]{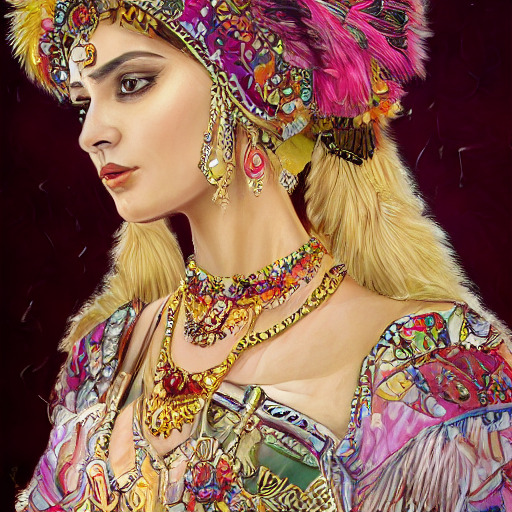}
     \\
    \end{minipage}~
    \begin{minipage}[b]{0.193\textwidth}
        \centering
        \includegraphics[width=\textwidth]{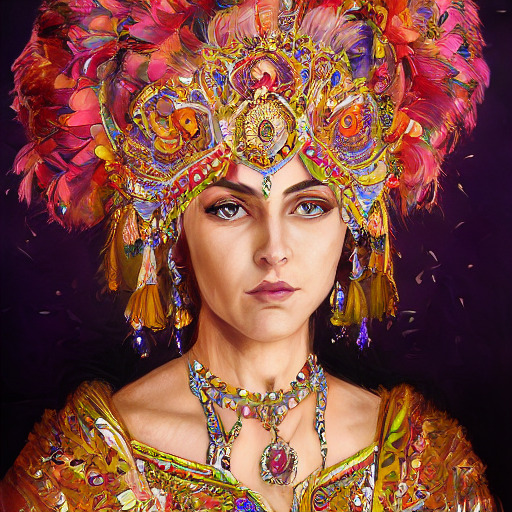}
     \\
    \end{minipage}~
    \begin{minipage}[b]{0.193\textwidth}
        \centering
        \includegraphics[width=\textwidth]{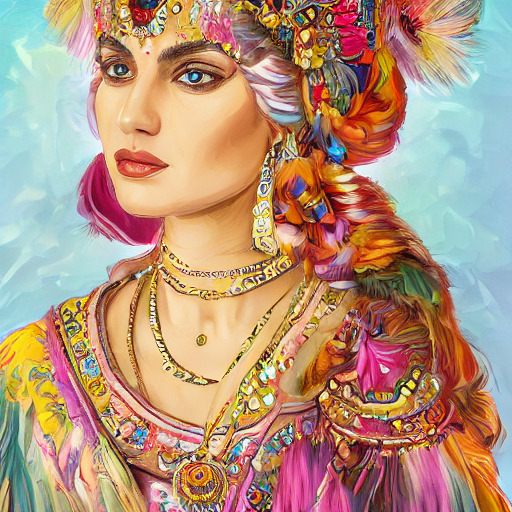}
     \\
    \end{minipage}~
    \begin{minipage}[b]{0.193\textwidth}
        \centering
        \includegraphics[width=\textwidth]{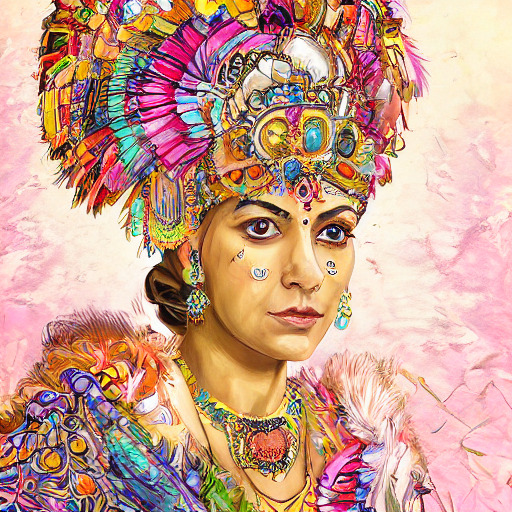}
     \\
    \end{minipage}~
    \begin{minipage}[b]{0.195\textwidth}
        \centering
        \includegraphics[width=\textwidth]{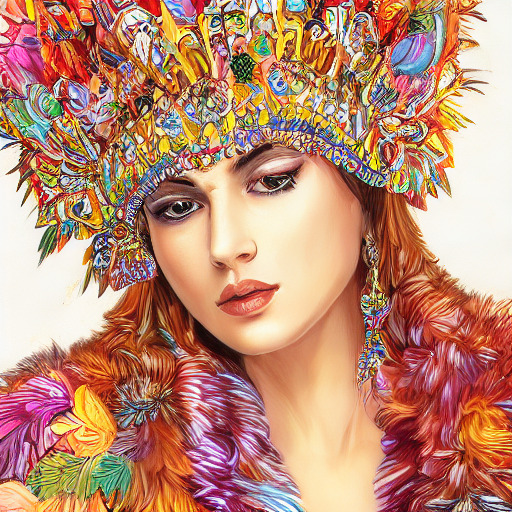}
     \\
    \end{minipage}
    \\
 SD1.5 distilled with four-step SiD$_2^a$ (Zero-CFG) FID  = 14.71 , CLIP = 0.324
 \begin{minipage}[b]{0.193\textwidth}
        \centering
        \includegraphics[width=\textwidth]{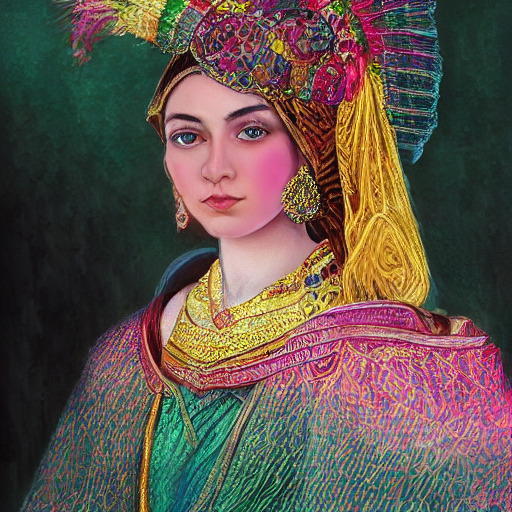}
     \\
    \end{minipage}~
    \begin{minipage}[b]{0.193\textwidth}
        \centering
        \includegraphics[width=\textwidth]{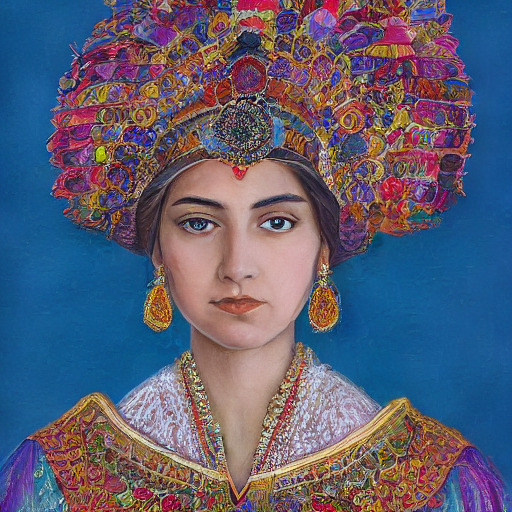}
     \\
    \end{minipage}~
    \begin{minipage}[b]{0.193\textwidth}
        \centering
        \includegraphics[width=\textwidth]{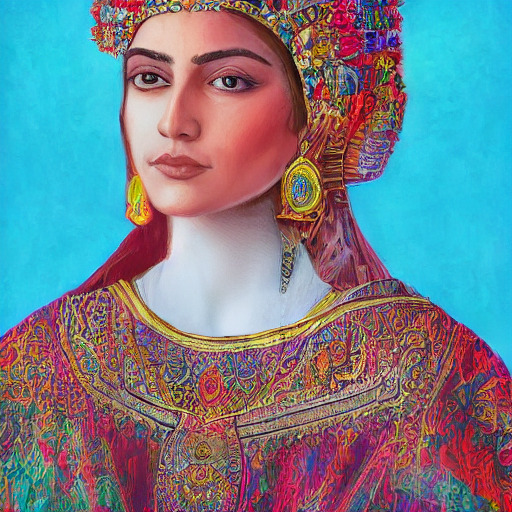}
     \\
    \end{minipage}~
    \begin{minipage}[b]{0.193\textwidth}
        \centering
        \includegraphics[width=\textwidth]{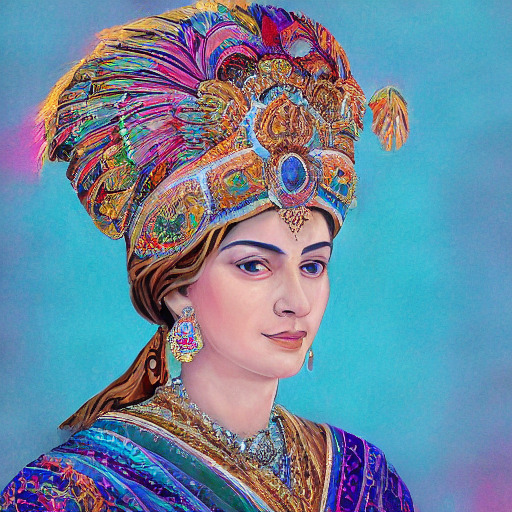}
     \\
    \end{minipage}~
    \begin{minipage}[b]{0.195\textwidth}
        \centering
        \includegraphics[width=\textwidth]{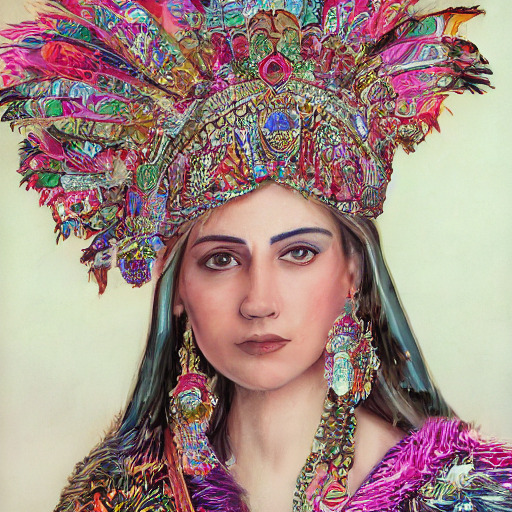}
     \\
    \end{minipage}
    \\
    SD1.5 distilled with four-step SiD$_2^{\alpha}$ (Anti-CFG): FID = 13.35
, CLIP = \textbf{0.325}
 \begin{minipage}[b]{0.193\textwidth}
        \centering
        \includegraphics[width=\textwidth]{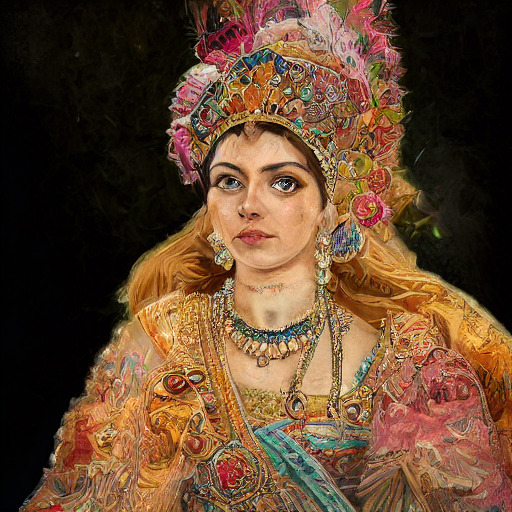}
     \\
    \end{minipage}~
    \begin{minipage}[b]{0.193\textwidth}
        \centering
        \includegraphics[width=\textwidth]{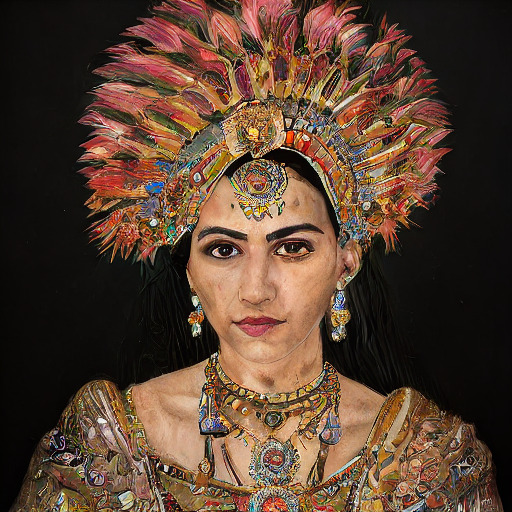}
     \\
    \end{minipage}~
    \begin{minipage}[b]{0.193\textwidth}
        \centering
        \includegraphics[width=\textwidth]{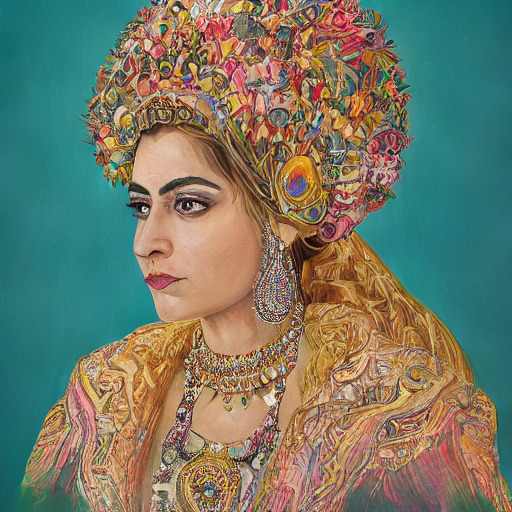}
     \\
    \end{minipage}~
    \begin{minipage}[b]{0.193\textwidth}
        \centering
        \includegraphics[width=\textwidth]{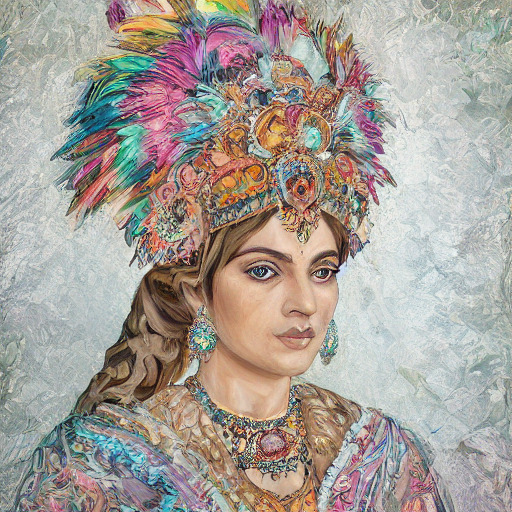}
     \\
    \end{minipage}~
    \begin{minipage}[b]{0.195\textwidth}
        \centering
        \includegraphics[width=\textwidth]{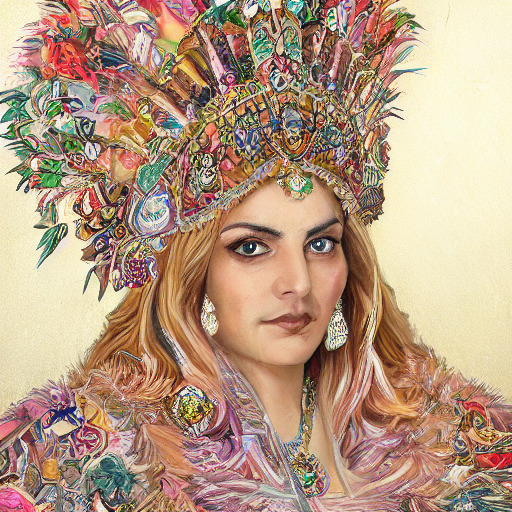}
     \\
    \end{minipage}

    \caption{\small 
We present a visual comparison of SiD-distilled four-step generators from SD1.5 using different guidance strategies. SiD with LSG achieves a balanced performance, with an FID of 16.60 and a CLIP score of 0.319, demonstrating that this data-free approach is already competitive with leading SD1.5 distillation methods. Building on this, SiD-based initialization followed by enhancement with limited real data further improves generation diversity, maintaining strong visual fidelity and better adherence to the text prompt: 1) \textbf{SiD$_2^a$ (LSG)} significantly increases generation diversity while further improving text--image alignment.
  2) Both \textbf{SiD$_2^a$ (Zero-CFG)} and \textbf{SiD$_2^a$ (Anti-CFG)} surpass the baseline SiD in diversity, with SiD$_2^a$ (Anti-CFG) achieving the highest CLIP score.
All images in the comparison are generated from the same prompt:  
\textit{``A regal female portrait with an ornate headdress decorated with colorful gemstones and feathers, her robes rich with intricate designs and bright hues. 8K, best quality, fine details.''}
\normalsize}
    \label{fig:qualitative_2}
\end{figure*}
\vspace{5mm}

\pagebreak 
\begin{figure*}[!ht]
\centering
    
    SDXL distilled with four-step SiD (LSG): FID = 21.04
, CLIP = 0.340
 \begin{minipage}[b]{0.193\textwidth}
        \centering
        \includegraphics[width=\textwidth]{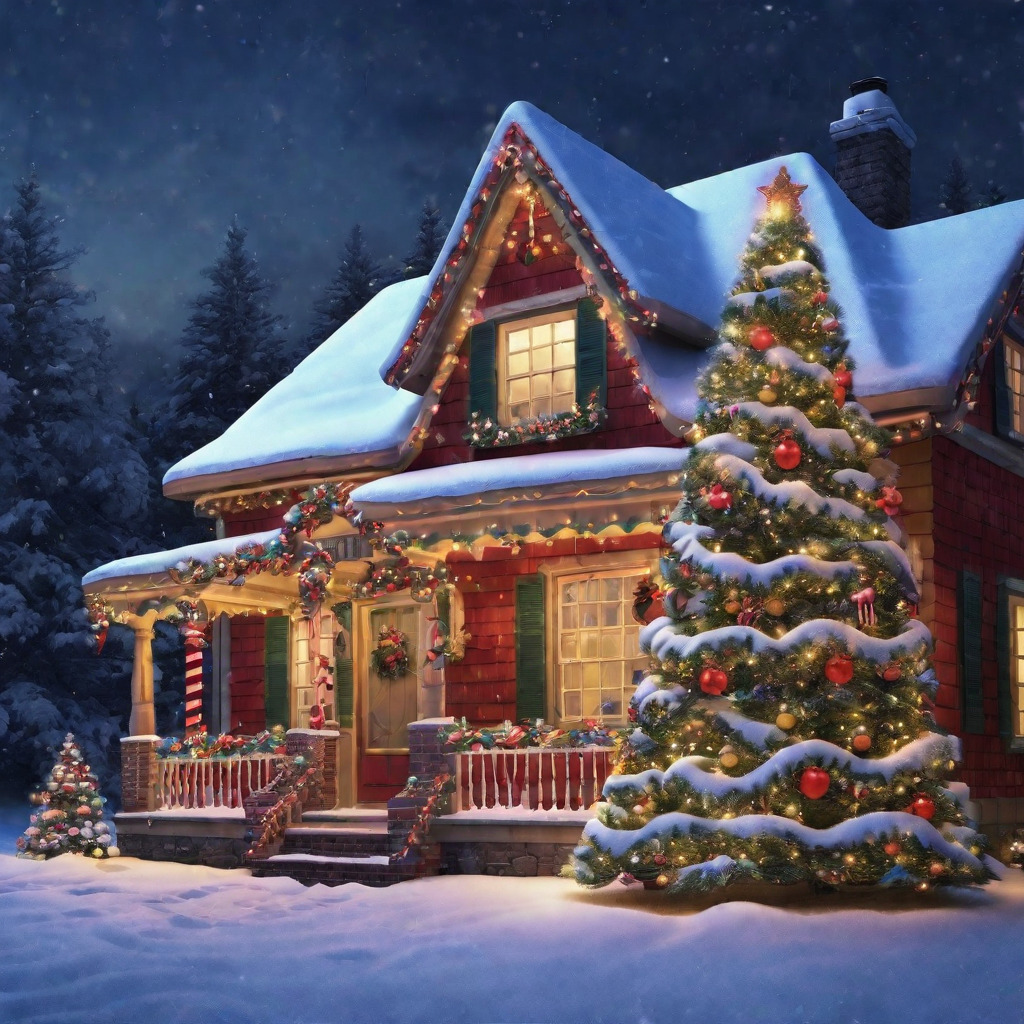}
     \\
    \end{minipage}~
    \begin{minipage}[b]{0.193\textwidth}
        \centering
        \includegraphics[width=\textwidth]{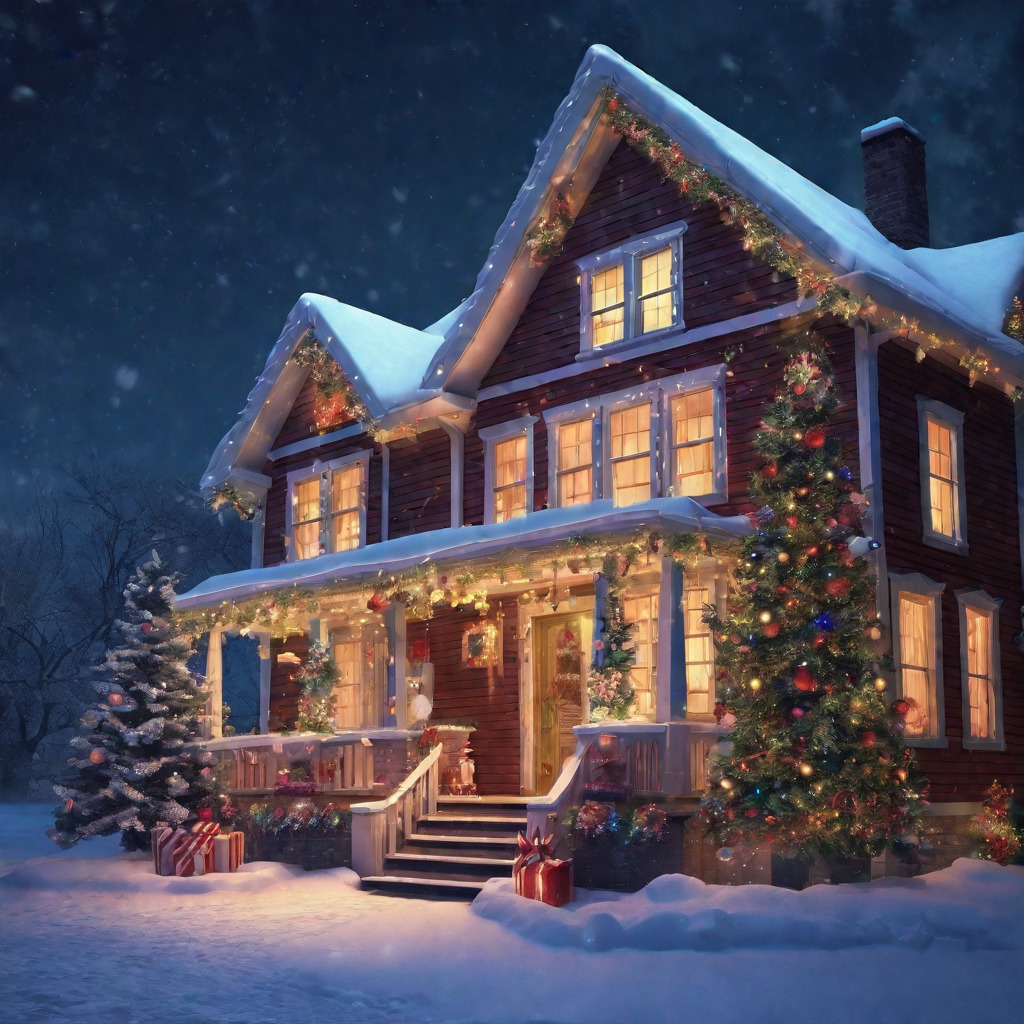}
     \\
    \end{minipage}~
    \begin{minipage}[b]{0.193\textwidth}
        \centering
        \includegraphics[width=\textwidth]{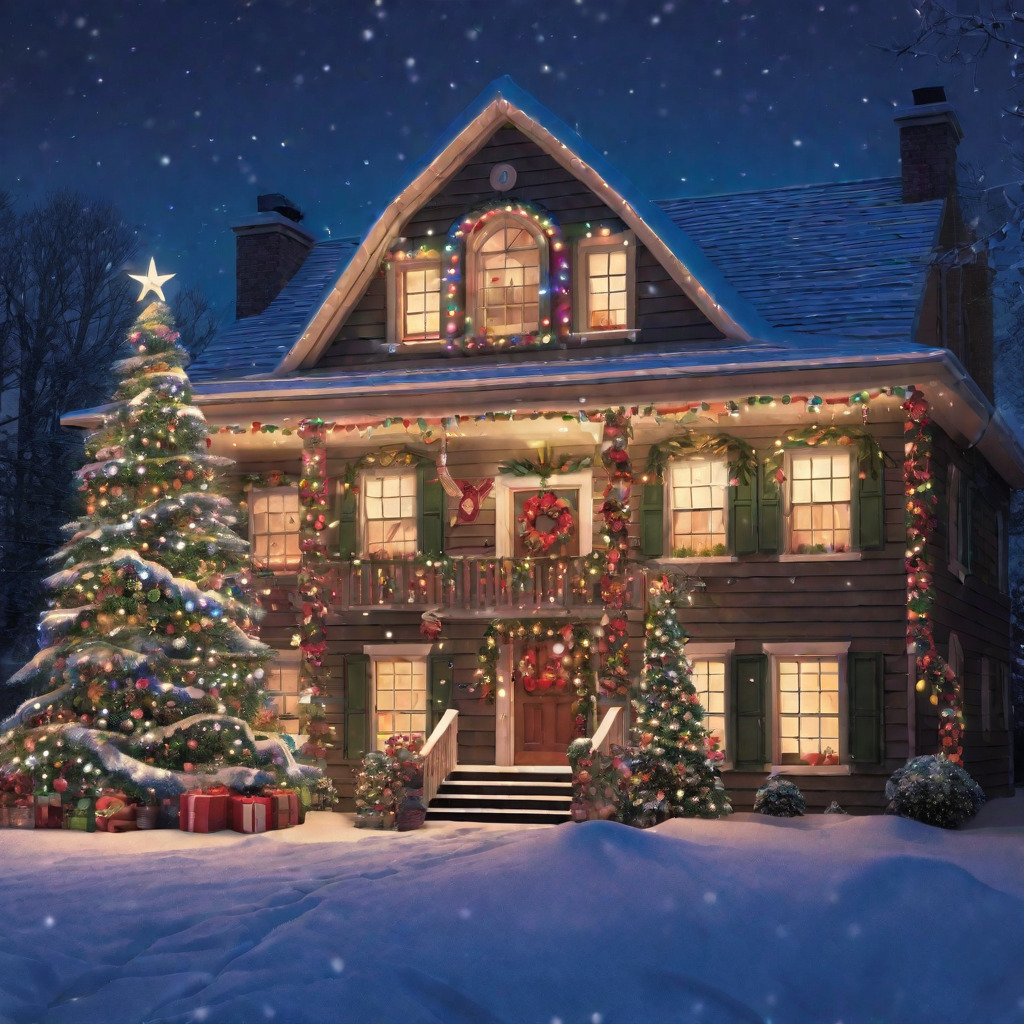}
     \\
    \end{minipage}~
    \begin{minipage}[b]{0.195\textwidth}
        \centering
        \includegraphics[width=\textwidth]{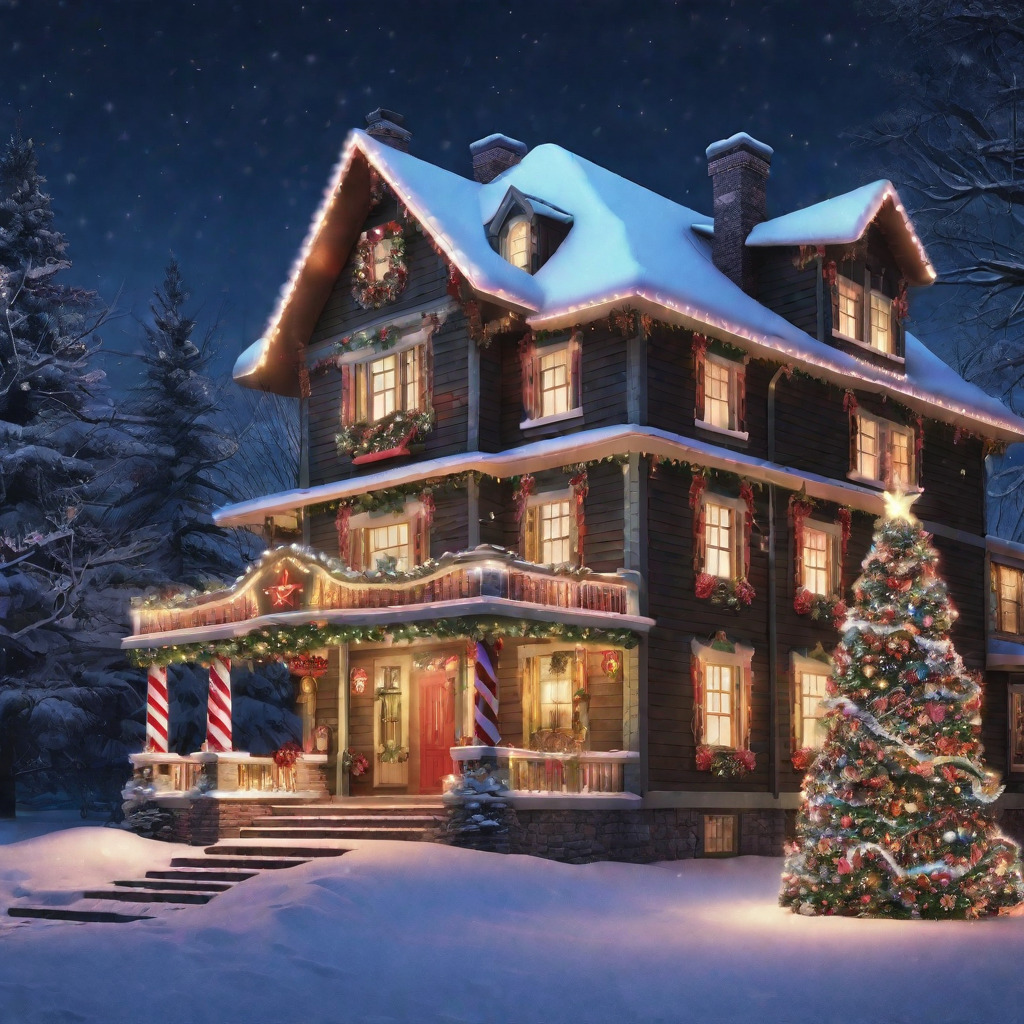}
     \\
    \end{minipage}~
    \begin{minipage}[b]{0.193\textwidth}
        \centering
        \includegraphics[width=\textwidth]{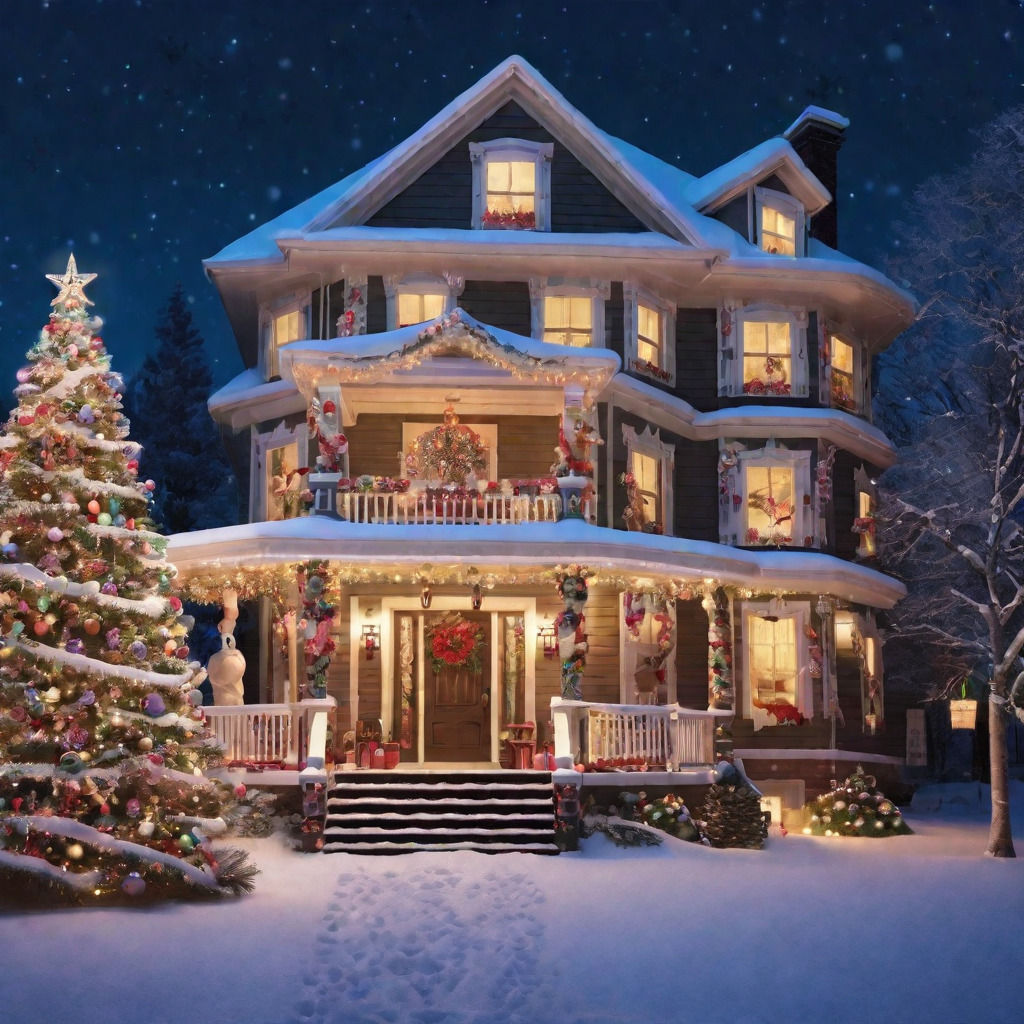}
     \\
    \end{minipage}
  \centering  SDXL distilled with four-step SiD$_2^{\alpha}$ (LSG): FID = 17.19, CLIP = 0.342\\
 \begin{minipage}[b]{0.193\textwidth}
        \centering
        \includegraphics[width=\textwidth]{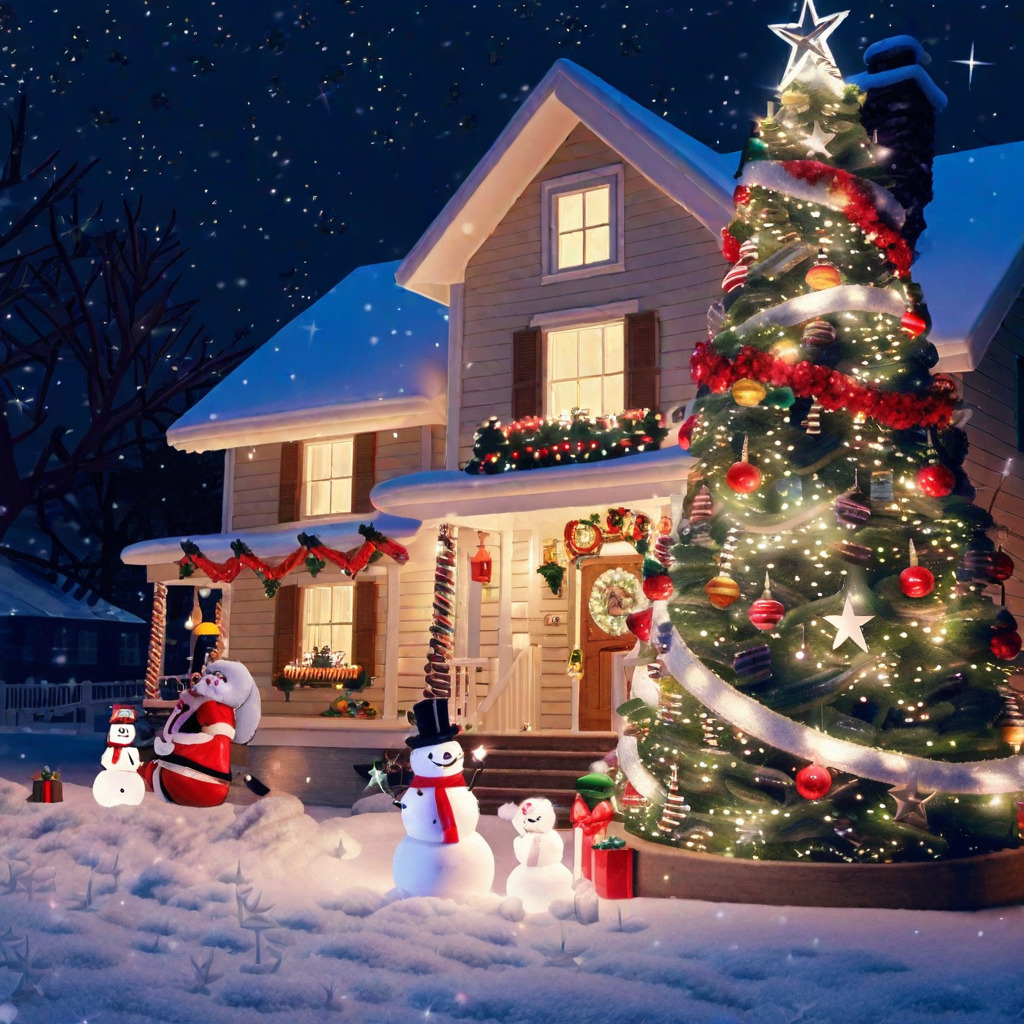}
     \\
    \end{minipage}~
    \begin{minipage}[b]{0.193\textwidth}
        \centering
        \includegraphics[width=\textwidth]{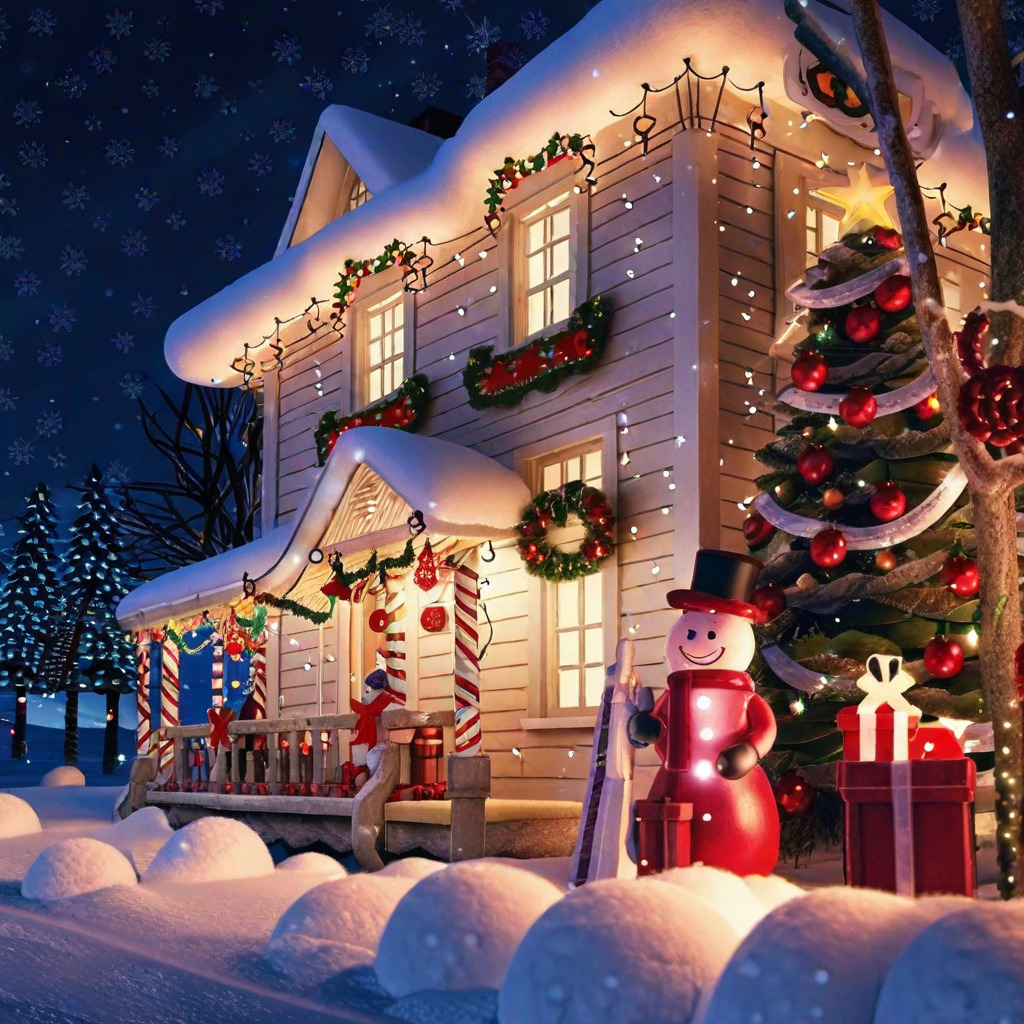}
     \\
    \end{minipage}~
    \begin{minipage}[b]{0.193\textwidth}
        \centering
        \includegraphics[width=\textwidth]{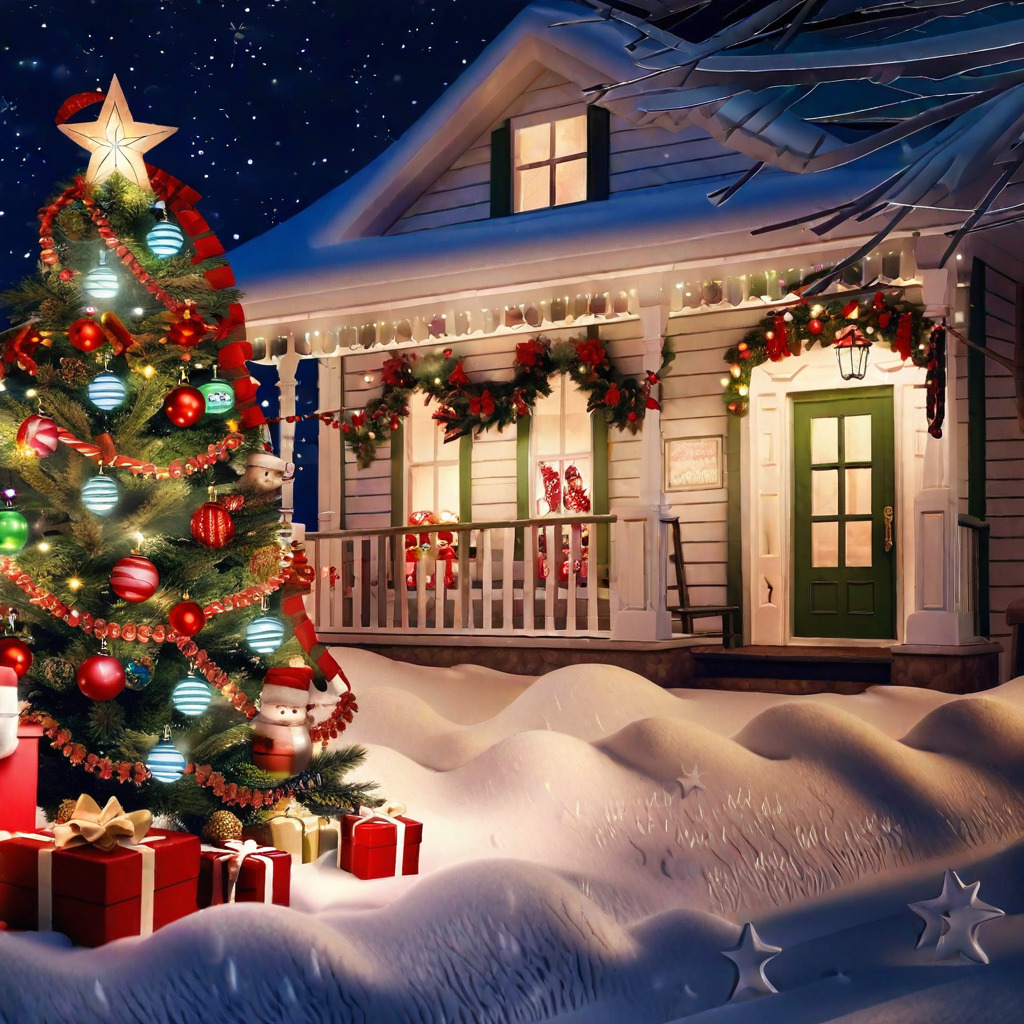}
     \\
    \end{minipage}~
    \begin{minipage}[b]{0.193\textwidth}
        \centering
        \includegraphics[width=\textwidth]{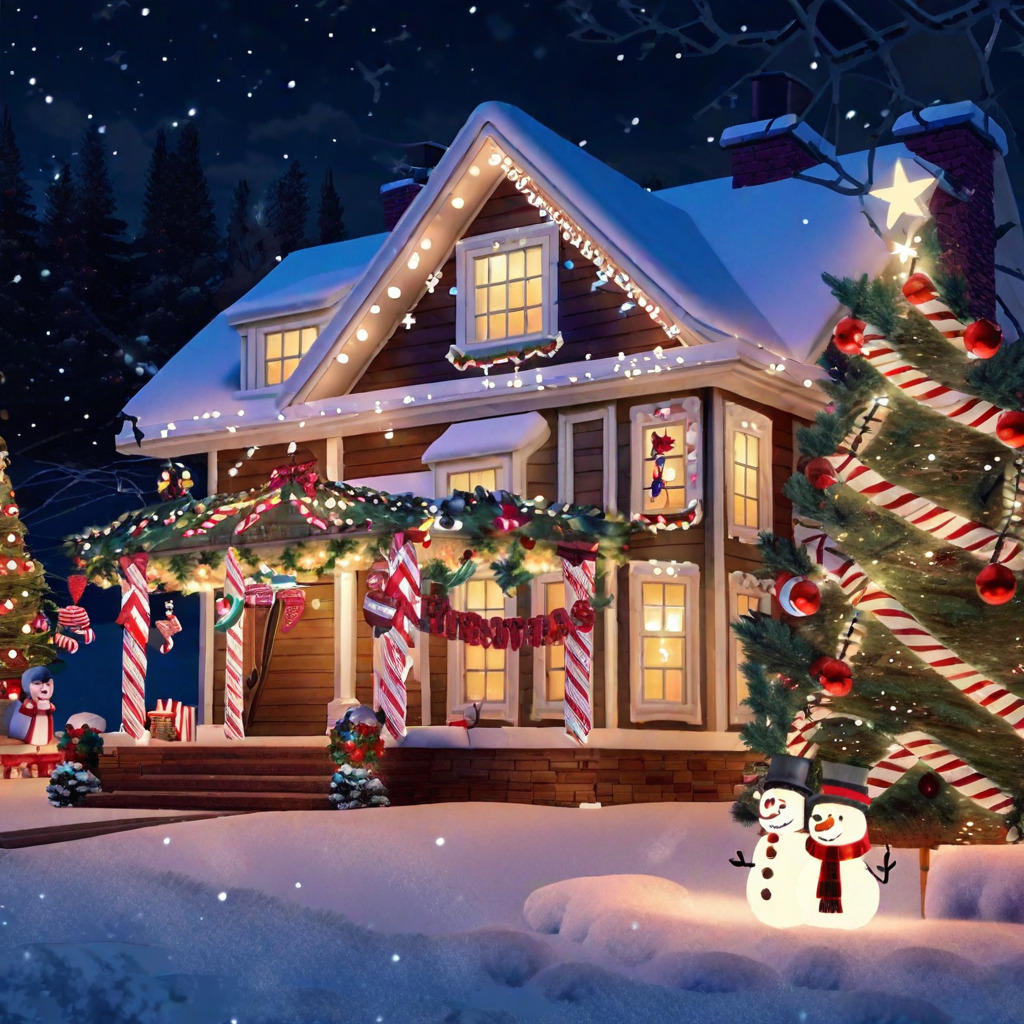}
     \\
    \end{minipage}~
    \begin{minipage}[b]{0.195\textwidth}
        \centering
        \includegraphics[width=\textwidth]{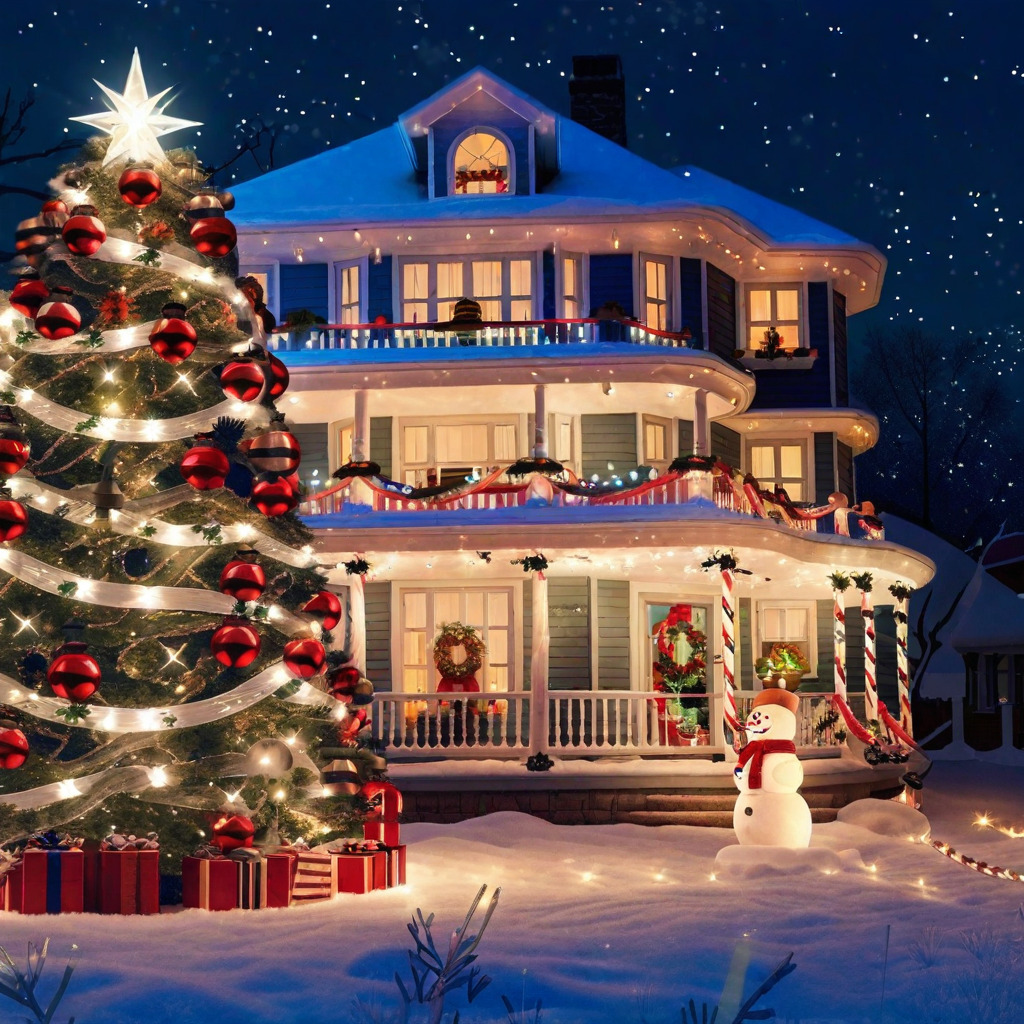}
     \\
    \end{minipage}
    \\
    
    SDXL distilled with four-step SiD$_2^{\alpha}$ (Zero-CFG): FID = 15.92
, CLIP = 0.342
    \begin{minipage}[b]{0.193\textwidth}
        \centering
        \includegraphics[width=\textwidth]{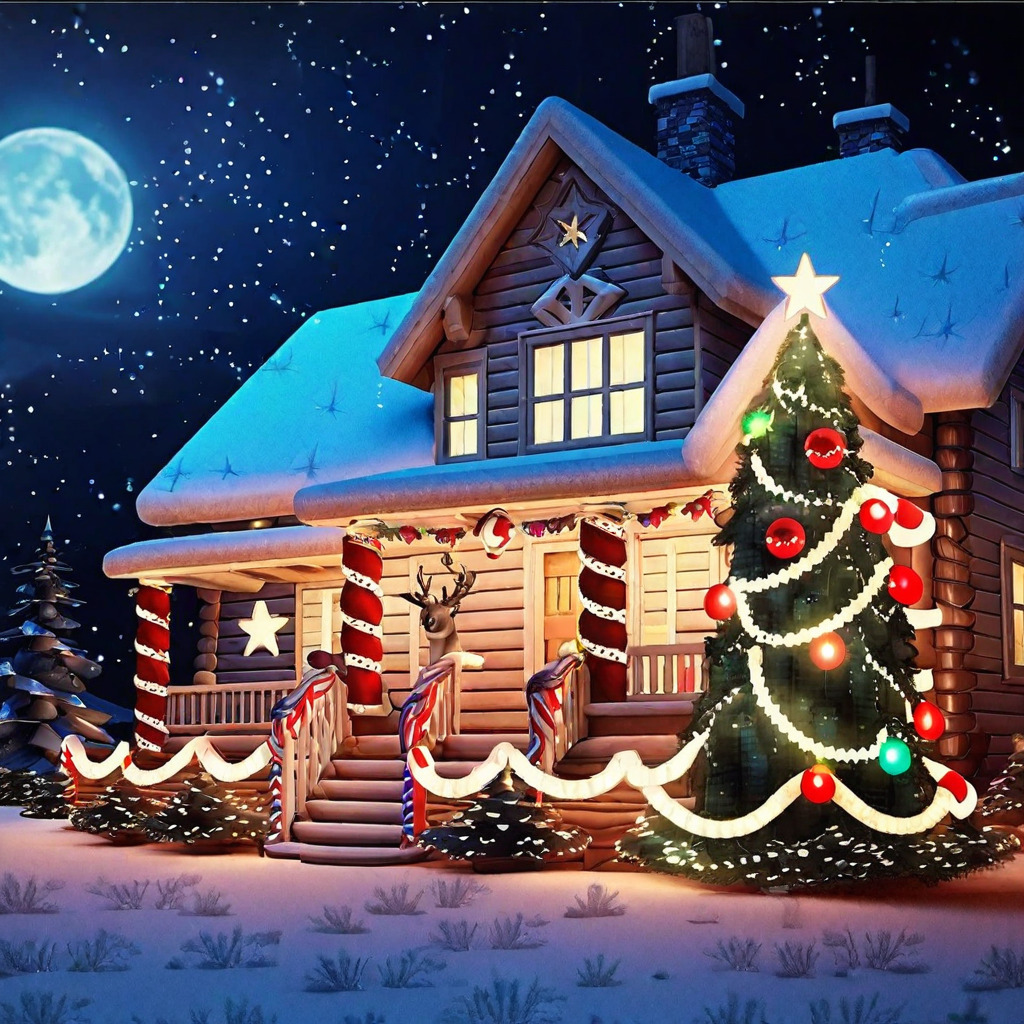}
     \\
    \end{minipage}~
    \begin{minipage}[b]{0.193\textwidth}
        \centering
        \includegraphics[width=\textwidth]{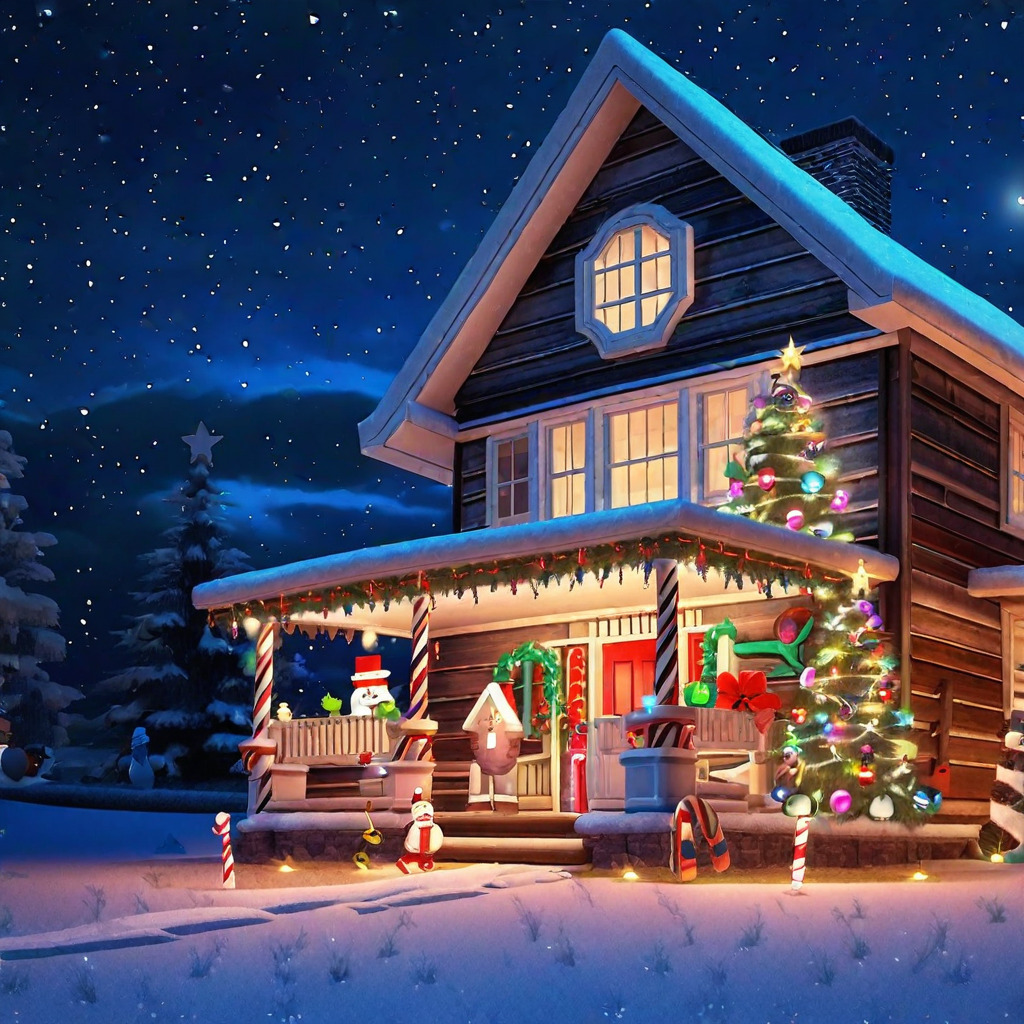}
     \\
    \end{minipage}~
    \begin{minipage}[b]{0.193\textwidth}
        \centering
        \includegraphics[width=\textwidth]{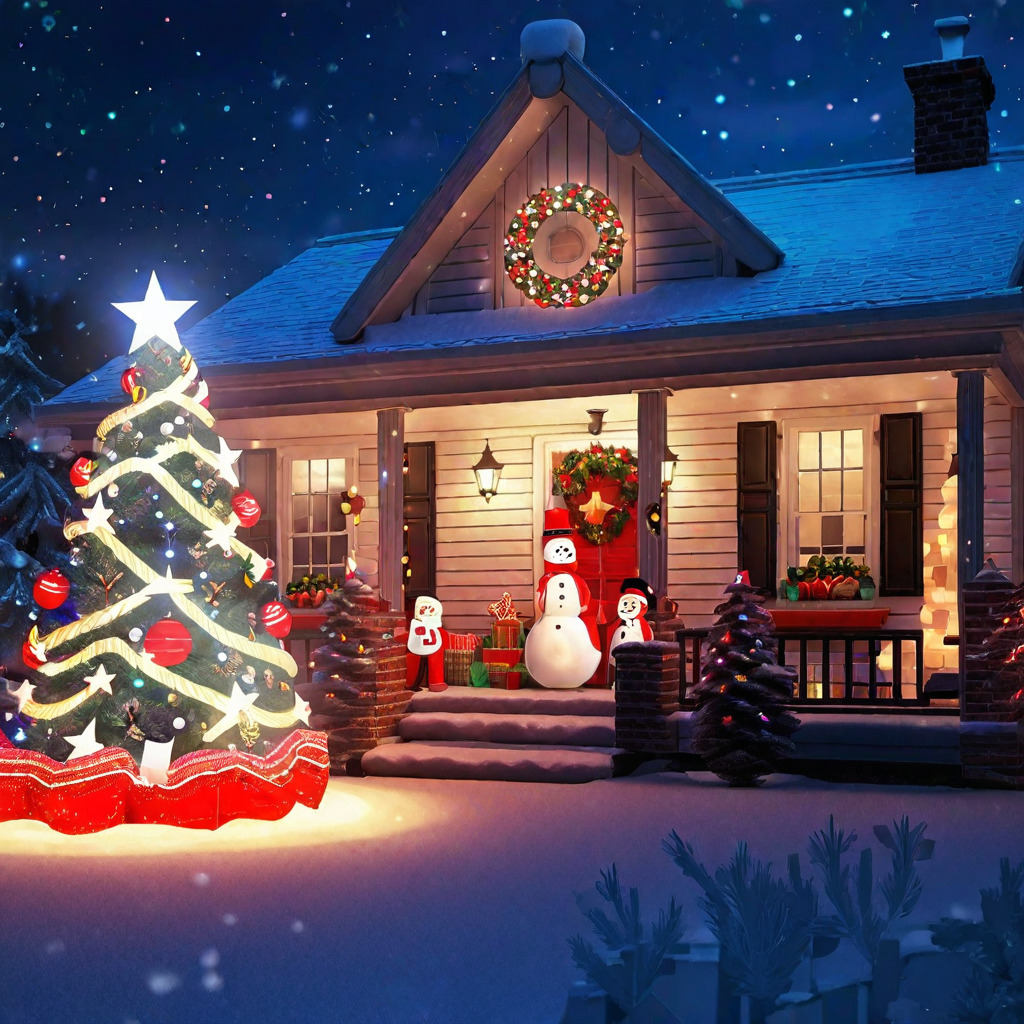}
     \\
    \end{minipage}~
    \begin{minipage}[b]{0.193\textwidth}
        \centering
        \includegraphics[width=\textwidth]{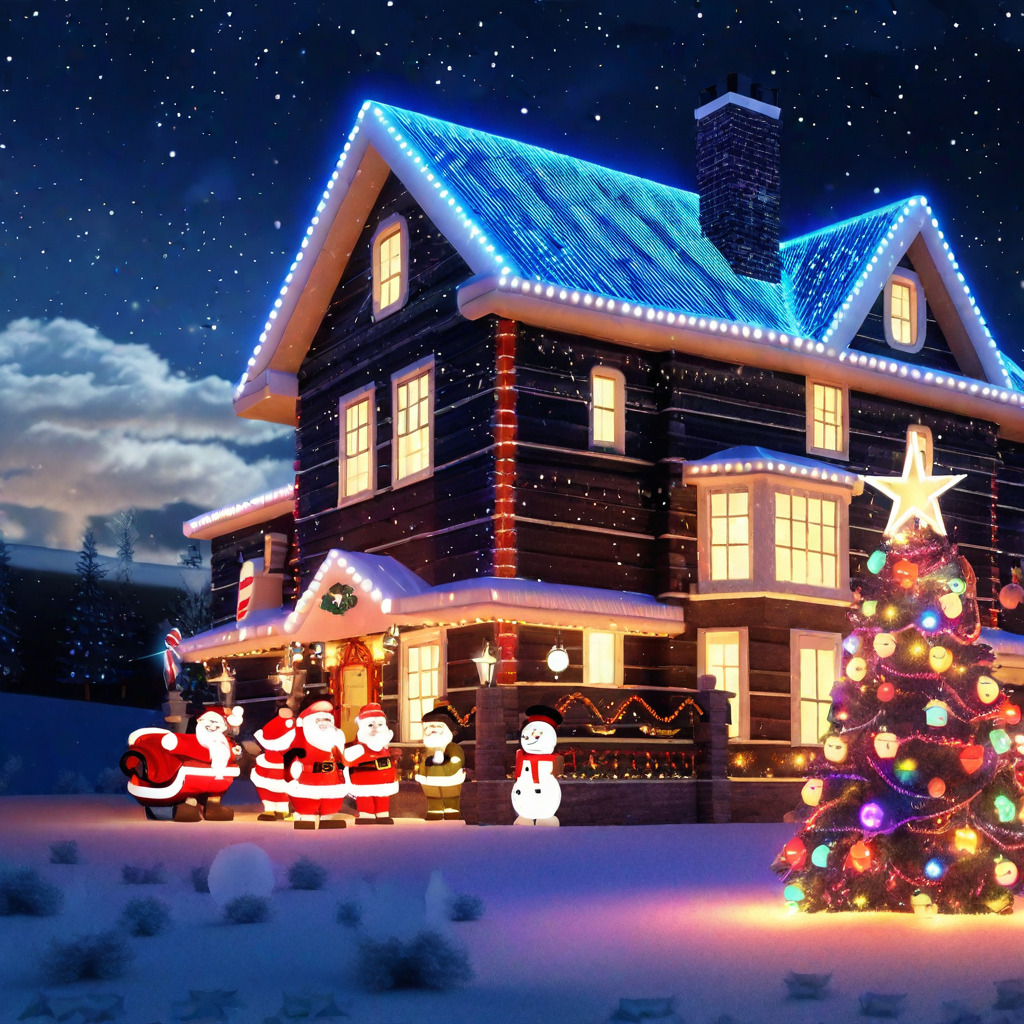}
     \\
    \end{minipage}~
    \begin{minipage}[b]{0.195\textwidth}
        \centering
        \includegraphics[width=\textwidth]{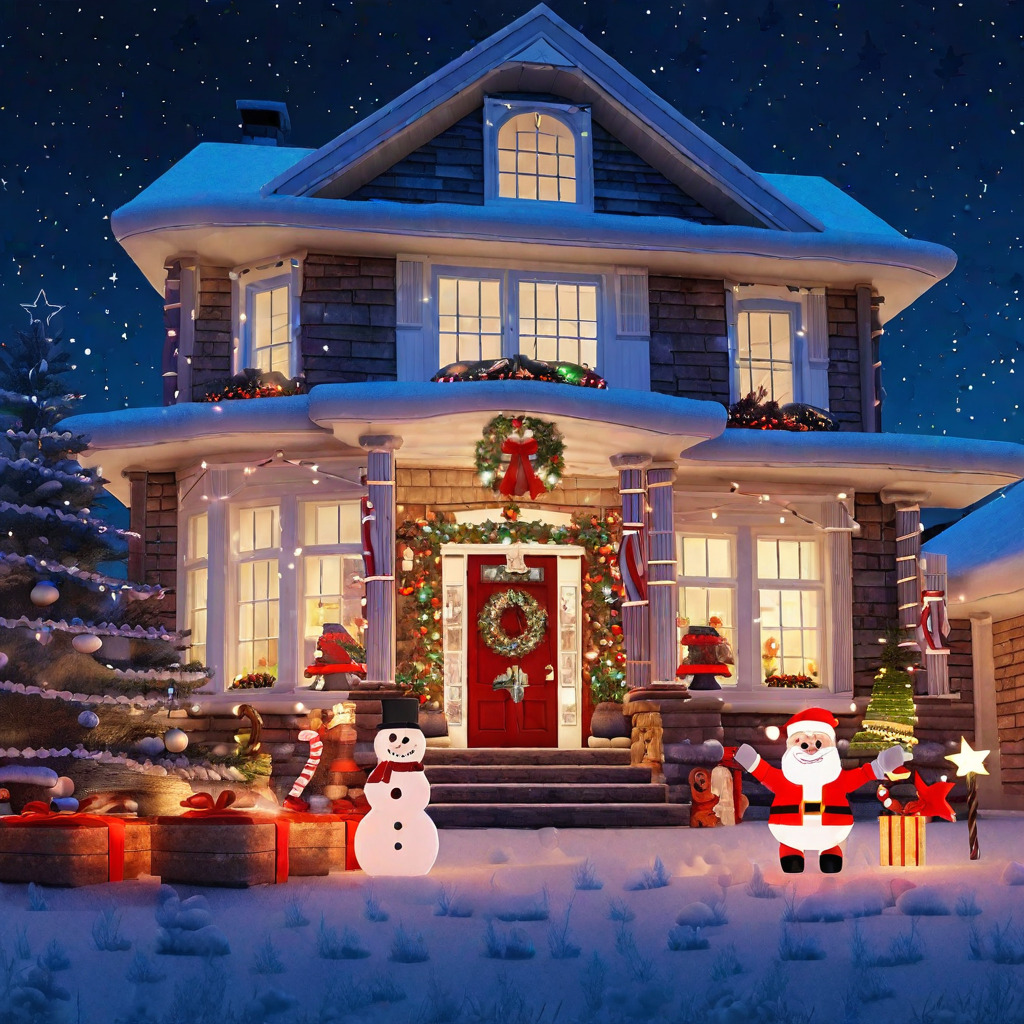}
     \\
    \end{minipage}\\
    \centering
SDXL distilled with four-step SiD$_2^{\alpha}$ (Anti-CFG): FID = 15.82
, CLIP = \textbf{0.344
}
    
 \begin{minipage}[b]{0.193\textwidth}
        \centering
        \includegraphics[width=\textwidth]{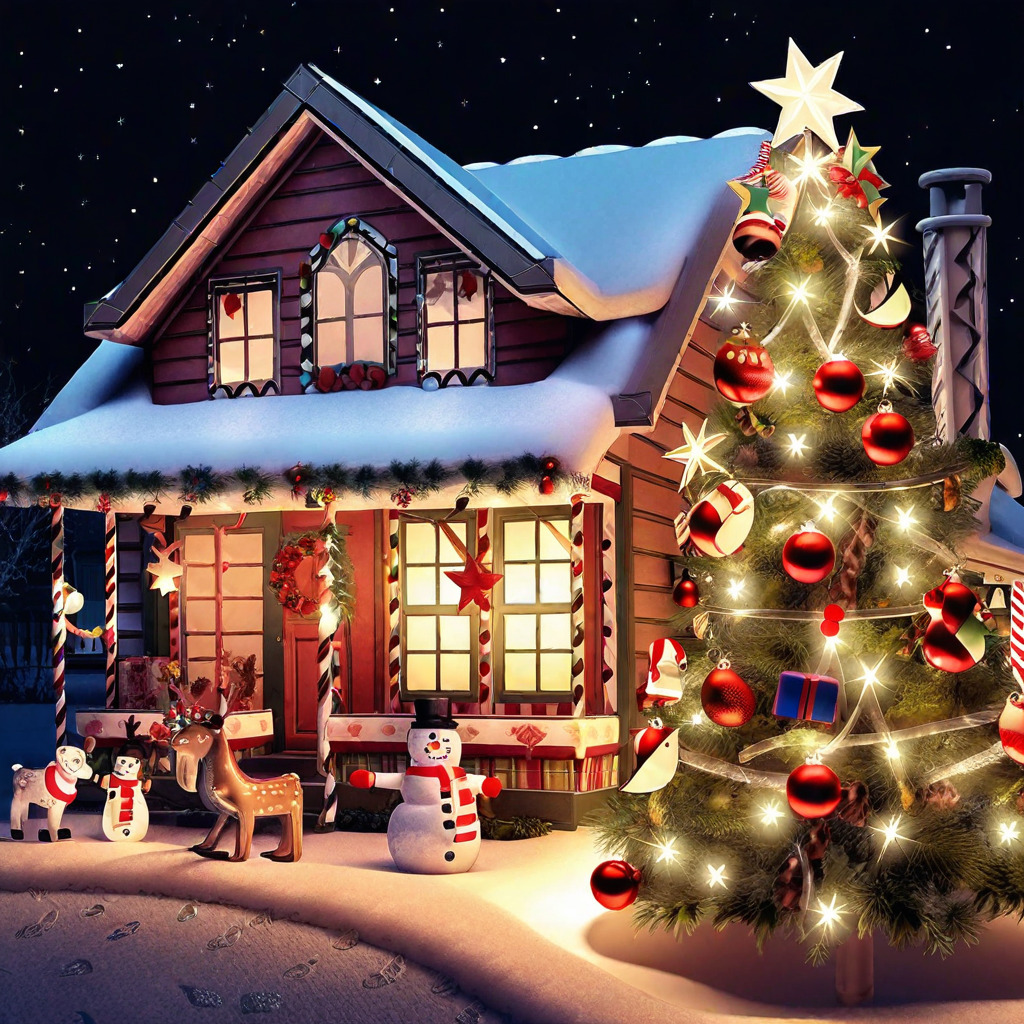}
     \\
    \end{minipage}~
    \begin{minipage}[b]{0.193\textwidth}
        \centering
        \includegraphics[width=\textwidth]{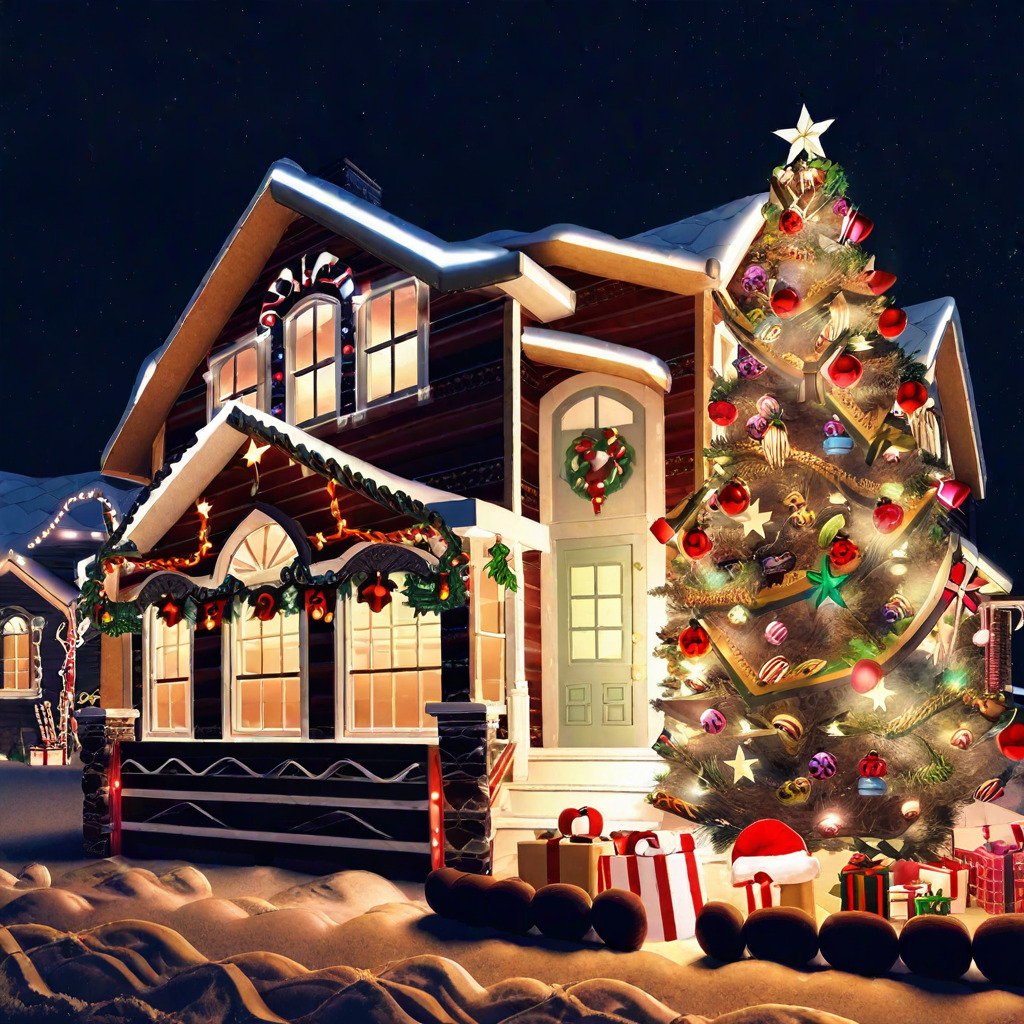}
     \\
    \end{minipage}~
    \begin{minipage}[b]{0.193\textwidth}
        \centering
        \includegraphics[width=\textwidth]{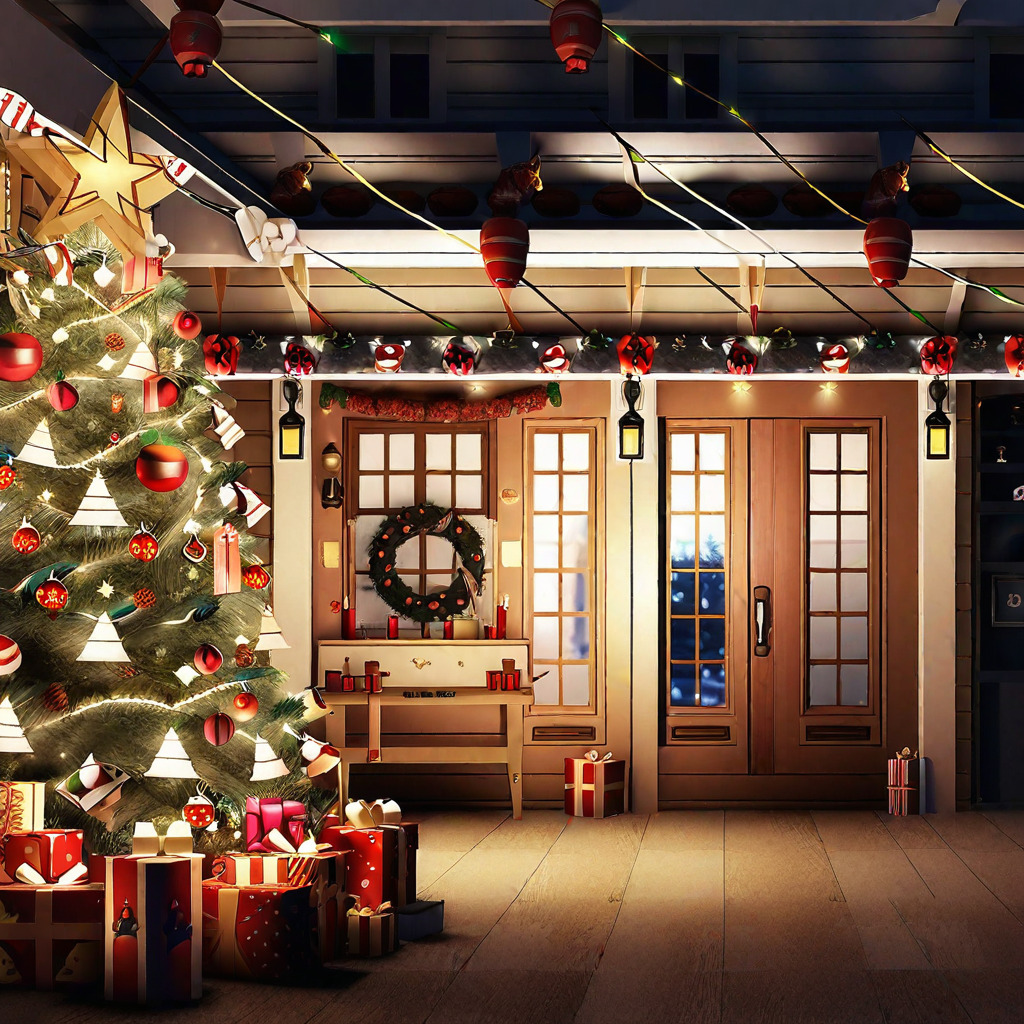}
     \\
    \end{minipage}~
    \begin{minipage}[b]{0.193\textwidth}
        \centering
        \includegraphics[width=\textwidth]{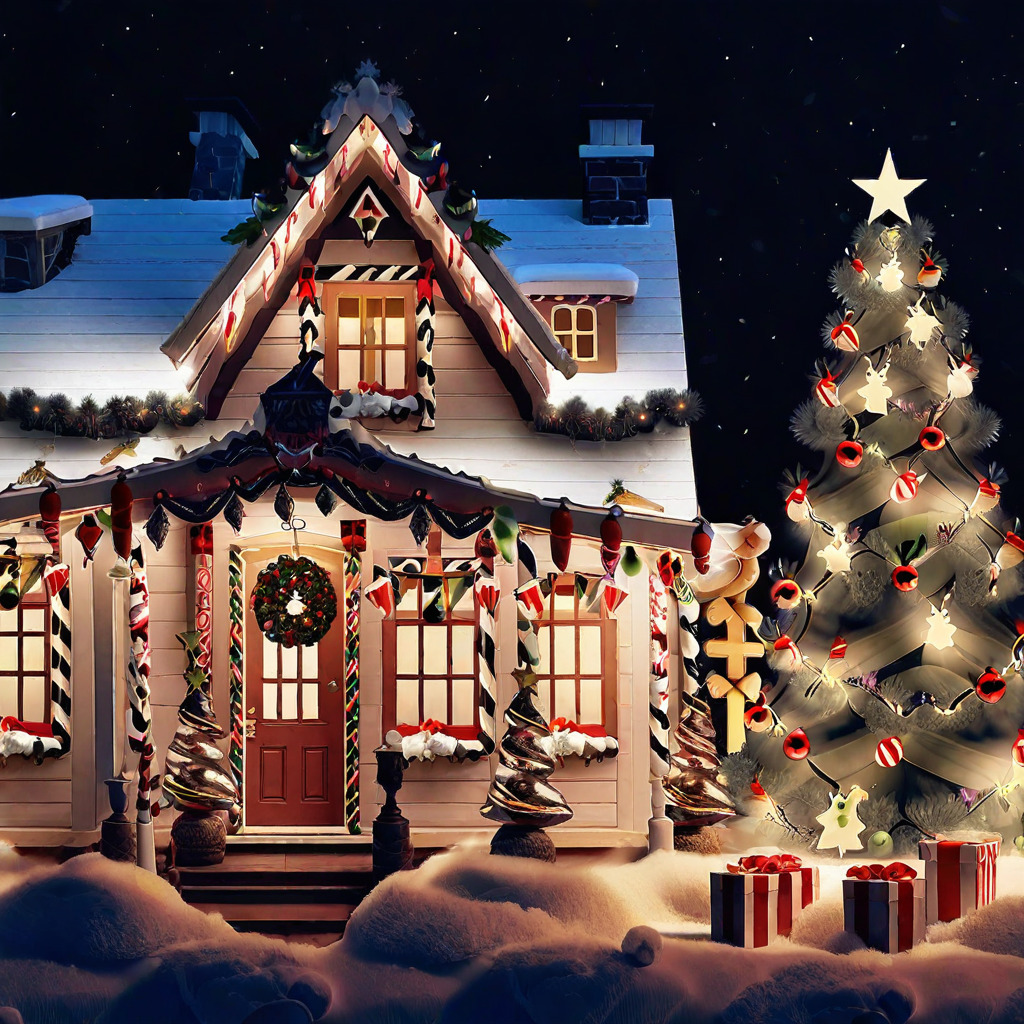}
     \\
    \end{minipage}~
    \begin{minipage}[b]{0.195\textwidth}
        \centering
        \includegraphics[width=\textwidth]{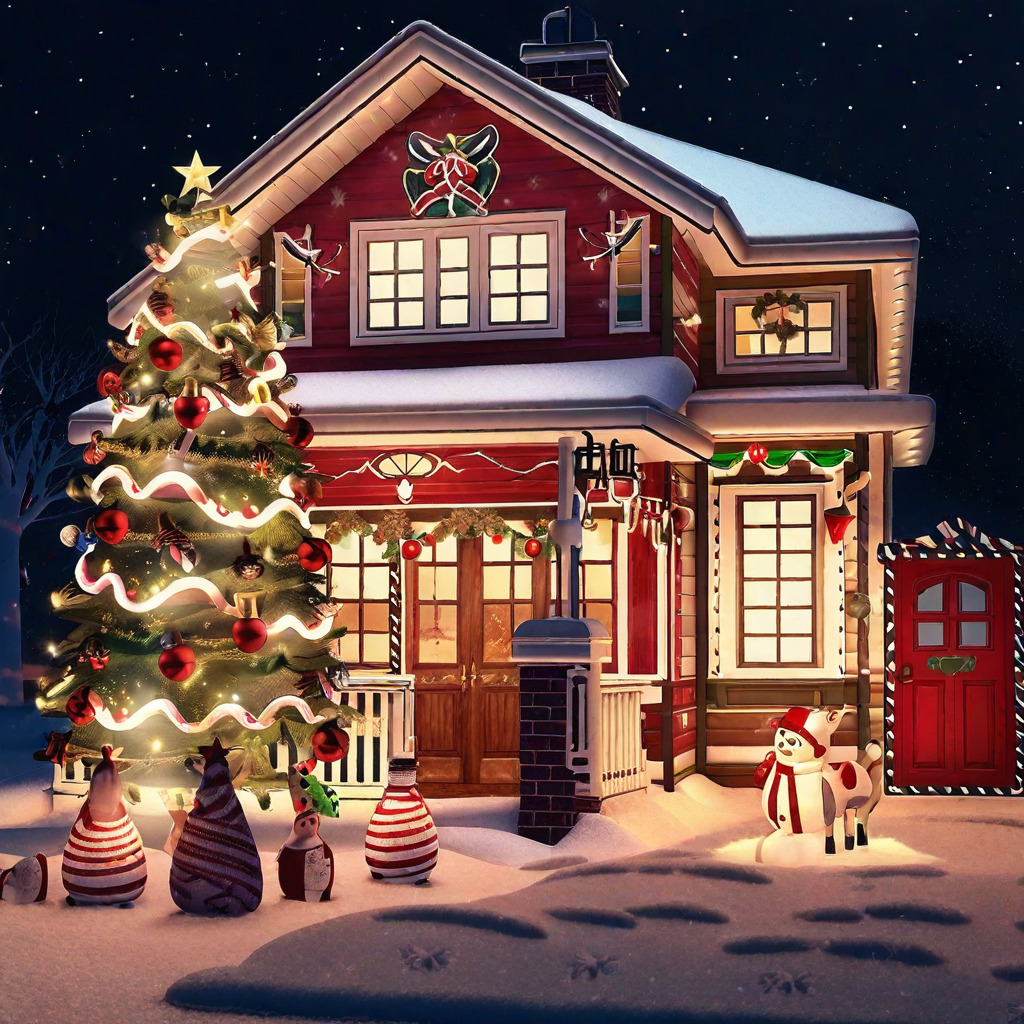}
     \\
    \end{minipage}
\caption{\small 
Analogous to Fig.\,\ref{fig:qualitative_2}, we present a visual comparison of SiD-distilled four-step generators from SDXL using different guidance strategies. SiD with LSG delivered results on par with leading SDXL distillation methods, reaching an FID of 21.04 and a CLIP score of 0.340 while staying free of real image data. Building upon these strong data-free results, SiD-based initialization followed by enhancement with limited real data further improves generation diversity while preserving high visual quality and enhancing prompt-image alignment: All variants \textbf{SiD$_2^a$ (LSG)}, \textbf{SiD$_2^a$ (Zero-CFG)} and \textbf{SiD$_2^a$ (Anti-CFG)} significantly enhances generation diversity compared to baseline SiD in diversity. In particular, SiD$_2^a$ (Anti-CFG) achieving both the highest CLIP score and the lowest FID. All images in our comparison are generated using the same text prompt: \textit{``Christmas decorated house, cinematic wallpaper HD quality.''} }

    \label{fig:qualitative_1}
\end{figure*}
\vspace{5mm}

\begin{algorithm}[th]
\caption{SiD for T2I Generation with multistep generation and data-dependent enhancement }\label{alg:sid}
\begin{algorithmic}[1]
\small
\STATE \textbf{Input:} Pretrained score network $f_\phi$, generator $G_\theta$, generator score network $f_\psi$, guidance scales $\kappa_1=\kappa_2=\kappa_3=0$, $\kappa_4=1$, $t_{\text{init}}=625$, $t_{\min}=20$, $t_{\max}=979$, $\lambda_{\text{sid}}=1$, {$\lambda_{\text{adv},\psi}=10$}, $\lambda_{\text{adv},\theta}=0.001$,  image size $(W,H,C_c)$, latent discriminator map size $(W',H')$, number of generation steps $K$, Final-Step Matching or Uniform-Step Matching, optimizer learning rates $\eta_{\psi}$ and $\eta_\theta$.
\STATE \textbf{Initialization} $\theta \leftarrow \phi, \psi \leftarrow \phi$, $D(\cdot)\leftarrow \text{encoder}(\psi) $ 
\REPEAT
\IF{The number of generation steps is $K = 1$}
    \STATE Sample $\zv \sim \mathcal{N}(0, \mathbf{I})$ and set $\xv_g = G_\theta(t_{\text{init}}, \zv)$
\ELSE
    \IF{Final-Step Matching}
        \STATE \textbf{Final-Step Matching:} Sample $\xv_g$ in $K$ steps using Eq.~\eqref{eq:G_multistep}
    \ELSE
        \STATE \textbf{Uniform-Step Matching:} Sample $k \in \{1, \ldots, K\}$; sample $\xv_g$ in $k$ steps using Eq.~\eqref{eq:G_multistep1}
    \ENDIF
\ENDIF
\STATE Sample $t\in\{t_{\text{min}},\ldots,t_{\text{max}}\}, ~\epsilonv_t\sim \mathcal{N}(0,\mathbf{I})$ and $\xv_t = a_t\xv_g + \sigma_t \epsilonv_t$
\STATE Update $\psi$ with \eqref{eq:obj-psi-sida}:
\STATE ~~~~$\cL_{\psi}^{(\text{sid$^a$})} = {\frac{a_t^2}{\sigma_t^2}} 
\|f_{\psi,\kappa_1}(\xv_t, \cv)-\xv_g\|_2^2
+  (4+\frac{a_t^2}{\sigma_t^2}){\lambda_{\text{adv},\psi}} {L}_{\psi}^{(\text{adv})} = \|\epsilonv_{\psi,\kappa_1}(\xv_t,\cv)-\epsilonv_t\|_2^2 + (4+\frac{a_t^2}{\sigma_t^2} ){\lambda_{\text{adv},\psi}} {L}_{\psi}^{(\text{adv})}$
\STATE ~~~~$\psi = \psi - \eta_\psi \nabla_\psi \cL_{\psi}^{(\text{sid$^a$})}$

\IF {num\_imgs $\ge$ 20K}
    \STATE Sample $t\in\{t_{\text{min}},\ldots,t_{\text{max}}\}$, compute $a_t$, $\sigma_t$, and  $\omega_t$, as defined in \citet{zhou2024long} 
\STATE Set $b=0$ if num\_imgs $\leq$40k and $b=1$ otherwise

        \STATE Update $G_\theta$ with \eqref{eq:obj-theta-sida}:
        \STATE ~~~~$\cL_{\theta}^{(\text{sid$^a$})} = \displaystyle \big(\frac{1}{2}\big)^b \lambda_{\text{sid}} \left( 
        \frac{\omega(t)a_t^2}{{\sigma_t^4} } (f_{\phi,\kappa_4}(\xv_t,\cv)-f_{\psi,\kappa_2}(\xv_t,\cv))^T(f_{\psi,\kappa_3}(\xv_t,\cv)-\xv_g) \right)+ \frac{b}{2}\lambda_{\text{adv},\theta} \frac{\omega(t)a_t^2}{2\sigma_t^4} %
        {C_cWH}%
        \cL_{\theta}^{(\text{adv})}$
        \STATE ~~~~$\theta = \theta - \eta_\theta \nabla_\theta \cL_{\theta}^{(\text{sid$^a$})}$
\ENDIF
\UNTIL{the FID or CLIP plateaus, or the training budget is exhausted}
\STATE \textbf{Output:} $G_\theta$ optimized for $K$-step inference.
\normalsize
\end{algorithmic}
\end{algorithm} %

\section{Broader Impact}\label{sec:broaderimpact}
Our work contributes to the sustainability of generative AI by reducing the computational cost of sampling from state-of-the-art diffusion models. At the same time, the efficiency and scalability of our distillation framework may lower the barrier to deploying models trained on uncurated or sensitive data, raising potential misuse concerns. Addressing these risks will require continued efforts by the research community to establish clear ethical guidelines and practical safeguards for responsible  deployment of generative AI models.

\section{Related Work}\label{sec:relatedwork}
\citet{thuan2024swiftbrush} and \citet{nguyen2024snoopi} extended KL-based distillation \citep{poole2023dreamfusion, luo2023diffinstruct, wang2023prolificdreamer} to distill pre-trained Stable Diffusion models. DMD \citep{yin2023onestepDW} also followed a KL-based approach but introduces an additional regression loss, while DMD2 \citep{yin2024improved} reduced reliance on this term by incorporating adversarial loss and separate time scaling during distillation. Both UFOGen \citep{Xu2023UFOGenYF}  and adversarial diffusion distillation  \citep{Sauer2023AdversarialDD}  introduced Diffusion GANs \citep{wang2022diffusion,xiao2021tackling} into diffusion distillation. 
\citet{lin2024sdxl} further extended adversarial diffusion distillation with progressive distillation \citep{salimans2022progressive} to develop SDXL-Lightning.  

\citet{Luo2023LatentCM} applied consistency distillation \citep{song2023consistency} to text-guided latent diffusion models. \citet{kang2024diffusion2gan} leveraged the ODE trajectory of a teacher diffusion model to generate noise-to-image pairs, which were then used to supervise the training of a one-step generator. To further enhance distillation performance, they also incorporated techniques such as mix-and-match augmentation.  Rectified flow~\citep{liu2022rectified,yan2024perflow}, which leverages matched noise-to-image pairs generated by the ODE solver of a pretrained flow-matching model, is another popular approach for one- or few-step T2I generation. \citet{wang2025rectified} demonstrated that the effectiveness of rectification in rectified flow primarily stems from using these matched pairs generated by the ODE solver of a pretrained diffusion model—without requiring the model to adhere to the linear interpolation forms of flow-matching or \(v\)-prediction, which were previously thought to be essential for rectification. Nonetheless, like rectified flow, its performance remains constrained by the capacity of the teacher model’s ODE solver, albeit to a lesser degree.
Another promising and complementary direction for enhancing one-step diffusion distillation is employing a mixture of multiple same-sized or smaller generators \citep{song2024multi}, which has shown clear improvements in both performance and efficiency when combined with DMD.    %

SwiftBrushV2 \citep{dao2024swiftbrush} extends SwiftBrush by initializing its generator with a one-step model pre-distilled using SD-Turbo \citep{Sauer2023AdversarialDD}, and further improves performance through preference optimization and post-training techniques. Hyper-SD \citep{ren2024hypersd} applies consistency distillation \citep{song2023consistency,kim2023consistency} and progressive distillation to segmented ODE trajectories and incorporates human feedback–based reward signals to enhance generation results. NitroFusion \citep{chen2024nitrofusion} introduces a pool of specialized discriminator heads to distill high-quality 1–4 step generators, building on few-step teacher models pre-distilled by earlier methods \citep{yin2024improved,ren2024hypersd}. These post-training or joint distillation-and-refinement approaches—whether through initialization or fine-tuning with pre-distilled one- or few-step teachers—could benefit from our work, which offers stronger one- and few-step generators as more effective starting points prior to post-training or preference optimization.

CFG \citep{ho2022classifier} is widely used in T2I diffusion distillation to promote high-fidelity image generation. However, existing methods often give limited attention to the trade-off between text-image alignment and generation diversity. In this paper, we directly tackle this challenge and propose a novel approach that achieves state-of-the-art results in both FID and CLIP scores.  While various preference optimization and post-training techniques could further enhance performance, they are beyond the scope of this work. Our focus is on improving the distillation process from the teacher model—optionally leveraging the real data used to train the teacher—without introducing additional loss terms that may be directly tied to or confounded with the evaluation metrics.

\section{SiD Enhanced with Limited Real Images}
\label{sec: diffgan}

\begin{proof}[Proof of Lemma~\ref{lemma:dropk}]
If the pretrained teacher score estimation network \( S_{\phi} \) has reached its theoretical
optimum such that \( S_{\phi}(\xv_t,\cv) = \nabla_{\xv_t} \ln p_\text{data}(\xv_t \mid \cv) \), then
the Fisher divergence at step \( k \), as defined in \eqref{eq:MESM1}, is zero if and only if \( p_\theta(\xv_g^{(k)} \mid \cv) = p_\text{data}(\xv_g^{(k)} \mid \cv) \). Therefore, the Fisher divergence objectives at all \( K \) steps share the same optimal solution: \( p_\theta(\cdot \mid \cv) = p_\text{data}(\cdot \mid \cv) \).
\end{proof}

Following how the adversarial components are added into SiD to enhance both its distillation efficiency and one-step performance for unconditional and label-conditional models \citep{zhou2025adversarial}, 
we  explore the integration of adversarial learning into mutlistep SiD enhanced with CFG.

Taking a batch of paired text $\cv$ and real noisy images $\yv_t$, where $\yv_t$  has dimensions $(W,H,C_c)$, and adding noisy fake images 
$\xv_t$ generated as in \eqref{eq:xv_t_g} for these text prompts,  
we define the fake score network's loss %
aggregated over a GPU batch $\mathcal B$ 
as
\ba{
{L}_{\psi}^{(\text{sid$^a$})}= %
\sum_{(\cv,\yv_t,\xv_t)\in \mathcal B}\left[
\mbox{SNR}_t L_{\psi,t,\kappa_{1}} %
+  {\gamma_t}{\lambda_{\text{adv},\psi}}{L}_{\psi}^{(\text{adv})}\right],
\label{eq:obj-psi-sida}
}
where $\mbox{SNR}_t=\frac{a_t^2}{\sigma_t^2}$ denotes the signal-to-noise ratio (SNR) at time $t$, $\gamma_t$ is a weighting coefficient,  $L_{\psi,t,\kappa_{1}}^{\text{(sid-lsg)}}$  is the data-free SiD-LSG loss defined as in %
\eqref{eq:obj-theta_lsg}, $\lambda_{\text{adv},\psi}$ is used to balance the loss scales of both terms, and ${L}_{\psi}^{(\text{adv})}$ is the Diffusion GAN's discriminator loss defined as
\ba{
{L}_{\psi}^{(\text{adv})}
&=%
\frac{1}{2|\mathcal B|W'H' }\sum_{(\cv,\yv_t,\xv_t)\in \mathcal B}\sum_{i'=1}^{W'}\sum_{j'=1}^{H'} %
\left[\ln ( D(\cv,\yv_t)[i',j'])+\ln(1- D(\cv,\xv_t)[i',j'])\right],
}
where %
$|\mathcal B|$ is the per GPU batch size, $D(\cv,\xv_t)$ the 2D discriminator map of size $(W',H')$  given input $(\cv,\xv_t)$, and $D(\cv,\xv_t)[i,j]$ the $(i,j)th$ element. %
We set $\lambda_{\text{adv},\psi}=10$, 
unless specified otherwise.

Notably, the conditioning variable \( \cv \) used to generate fake images does not need to match the prompts associated with real images. We observed no significant difference between these settings during the early stages of training. However, when the number of real text-image pairs is limited, allowing prompts that are not directly tied to the observed real images can be beneficial. In our experiments, we are constrained to 480k real text-image pairs, so there is potential value in allowing the \( \cv \) used for training the fake score network and/or the generator to differ from those associated with the real data. Given that we have significantly more text prompts than paired text-image data, we default to allowing fake images to be generated using prompts not drawn from the observed real pairs, while permitting variation across experimental runs.

The  weighting coefficient  $\gamma_t$ in \eqref{eq:obj-psi-sida} balances the loss of SiD and adversarial regularization.
 We   set  it as \( \gamma_t = 4 + \text{SNR}_t \) to 
 help ensure that timesteps with low \( \text{SNR}_t \) sufficiently contribute to learning~\( D \). By default, our experiments were conducted under this setting. We also performed ablation studies using both \( \gamma_t = \text{SNR}_t \) and \( \gamma_t = 10\text{SNR}_t \), and observed that both configurations delivered comparable performance, with no significant differences.  %

We now define the generator loss of SiD$^a$, aggregated over a GPU batch, as follows:
\ba{
 \cL_{\theta}^{(\text{sid$^a$})}\!=\!\sum_{\xv_t\in \mathcal B} \frac{\omega(t)a_t^2}{2\sigma_t^4} \left(\lambda_{\text{sid}}\cL_{\theta,t,\kappa_{2:4}}%
 + \lambda_{\text{adv},\theta}{C_cWH}%
\cL_{\theta}^{(\text{adv})}\right),%
\label{eq:obj-theta-sida}
}
where  we set $\lambda_{\text{sid}}=1$ and $\lambda_{\text{adv},\theta}=0.001$ by default, unless specified otherwise, and denote  the adversarial loss that measures the average in GPU-batch fakeness as
\ba{
\cL_{\theta}^{(\text{adv})}
=%
\frac1{|\mathcal B|W'H' } {\sum_{(\cv,\xv_t)\in \mathcal B}\sum_{i=1}^{W'}\sum_{j=1}^{H'} \ln D(\cv,\xv_t)[i,j]}.
}
Intuitively speaking, while the  loss component of SiD is matching the noisy model and noisy data distributions by comparing the estimated score and pretrained scores, the  adversarial loss component is matching them by comparing the real and fake data distributions in the latent space of an encoder  that is also roughly the first half part of the fake score estimation network. From another perspective, the discrepancy between the teacher network $\phi$ and its optimal value $\phi^*$, which limits the ultimate potential of the  SiD loss, is compensated for by the Diffusion-GAN loss; the Diffusion-GAN loss by itself mitigates but still suffer from the usual stability issue of GAN training, is well compensated by the regularization provided by the SiD loss. %

The remaining question is whether SiD$^a$ can effectively address the trade-off between high CFG—which enhances text-image alignment and generation fidelity—and low CFG—which promotes generation diversity—without relying on additional loss terms that may be directly tied to or confounded with evaluation metrics. Minimizing such losses risks overfitting to the metrics themselves, potentially leading to reward hacking and obscuring genuine improvements. Therefore, it is important to first evaluate the effectiveness of SiD without these additional losses, before exploring post-training or joint distillation with preference optimization, which we leave as future work. Through extensive empirical studies, we find that while existing LSG guidance strategies developed for  SiD remain effective, there is still room for improvement. This motivates the development of Zero-CFG and Anti-CFG, two new tuning-free guidance strategies, as outlined in Section~\ref{sec:3.2}, that both achieve good balance between generation diversity and text-image alignment when applied to SiD enhanced with limited real data.  %

\section{Training and evaluation details} \label{sec:detail}

\subsection{Implementation-level Optimizations}\label{sec:implementation}

 SiD \citep{zhou2024long} has been successfully applied to distill SD1.5, achieving under one of its LSG settings an FID of 8.15—the lowest among prior data-free one-step generators. Notably, this outperforms the FID of 8.35 achieved by DMD2 \citep{yin2024improved}, which requires access to real data for distillation. However, SiD's reliance on FP32 training limits its efficiency: while FP16-based training can be over six times faster with significantly lower memory consumption, it leads to notable performance degradation.
For SDXL distillation, our implementation supports FP16/BF16 training, producing reasonable but suboptimal models. However, running SDXL under FP32 on 80GB GPUs results in out-of-memory (OOM) errors, underscoring the need for more efficient optimization strategies.

\textbf{AMP for Efficient and Stable Training.} 
To address these issues, we first reimplement SiD-LSG with Automatic Mixed Precision (AMP)\footnote{\href{https://pytorch.org/docs/stable/amp.html}{https://pytorch.org/docs/stable/amp.html}} in PyTorch, which keeps model weights in FP32 but casts forward computations to FP16/BF16 where appropriate. As shown in Table\,\ref{tab:Hyperparameters}, this significantly reduces memory usage, allowing for larger per-GPU batch sizes and enabling SiD to train on SD1.5 with 1 million fake images in three to five hours, depending on the LSG settings, instead of one day on a node with eight H100 80GB GPUs. Additionally, we observe no noticeable performance difference between FP32 and AMP.

While AMP with FP16 requires gradient scaling before backpropagation, it is generally faster than BF16, which does not require scaling. Since we observe no clear performance difference between AMP-FP16 and AMP-BF16 in the early phases of training, we adopt AMP-FP16 for its superior computational efficiency.

\textbf{AMP + FSDP for %
Scalable Training.}  Utilizing AMP alone is not sufficient to address the OOM issue with SDXL distillation under FP32, and thus we extend our optimizations by integrating AMP and Fully Sharded Data Parallel (FSDP)\footnote{\href{https://pytorch.org/docs/stable/fsdp.html}{https://pytorch.org/docs/stable/fsdp.html}}, which shards model weights and optimizer states across GPUs, and turn on  gradient checkpointing. 
By doing this, we are able to run SDXL under FP32 with AMP, in a node of eight H100 GPUs, and eliminate numerical instability issues that hinder convergence, achieving SoTA performance without requiring a number of regularization tricks commonly required in previous methods. 
Comparing to using FP16/BF16 as a compromising solution, these optimizations with AMP and FSDP are crucial for fully unlocking the potential of SiD in SDXL distillation. %
Our code relies on native PyTorch libraries, avoiding third-party containers to ensure maximum flexibility for future code customization and extension.

\textbf{Multistep Generation. }
For final-step matching that adopts the strategy of backprogagation through the entire generation chain of the multistep generator, we empirically mitigate its optimization challenge by reducing both the learning rate of \( G \) and batch size proportionally to \( 1/K \).  For uniform-step matching, we randomly select \( k\in\{1,\ldots,K\} \) independently for each GPU mini-batch when using DDP, whereas under FSDP, the same \( k \) is randomly selected and shared across all GPUs for each iteration. For this reason, for uniform-step matching with FSDP, we reduce the batch size of each iteration from 512 to 128 %
to enable more frequent random switching between different \( k \)~values.

\subsection{Memory and Speed. } \label{sec:memory}

The hyperparameters specific to our study are detailed in Table\,\ref{tab:Hyperparameters}. It is important to note that the time and memory costs reported in Table\,\ref{tab:Hyperparameters} exclude the overhead associated with periodic evaluations of the single-step generator's FID and CLIP scores, as well as the resources required for saving model checkpoints during distillation. These additional costs can vary significantly depending on factors such as the versions of CUDA and Flash Attention, the storage platforms utilized, the frequency of evaluation and checkpointing operations, and the overhead incurred when restarting interrupted jobs from the most recent checkpoints.

Table\,\ref{tab:Hyperparameters} provides a detailed analysis of the computational resources required for the one-step SiD distillation process under different CFG strategies—CFG, LSG, and the newly proposed Zero-CFG—using eight H100 GPUs with 80GB memory as a reference. Table\,\ref{tab:Hyperparameters_4step} further provides a detailed analysis of the computational resources required for the four-step SiD distillation process under different CFG strategies—CFG, LSG, Zero-CFG, and Anti-CFG.

\begin{table*}[!bh]
\caption{\small
Hyperparameter settings and a comparison of distillation time and memory usage across different guidance strategies for distilling a one-step generator with SiD. DDP refers to Distributed Data Parallel, and FSDP to Fully Sharded Data Parallel. AMP denotes Automatic Mixed Precision, which performs computations in FP16 while maintaining model weights in FP32 for stability. In contrast, FP16 refers to full FP16 precision for both computations and weights, with the loss cast to FP32 before backpropagation.
}

\label{tab:Hyperparameters}
\begin{center}
\resizebox{0.85\textwidth}{!}{
\begin{tabular}{ccccc}
\toprule
Computing platform & Hyperparameters & CFG & LSG & Zero-CFG \\
\midrule
\multirow{9}{*}{General Settings} & \multirow{2}{*}{CFG scales} & $\kappa_4>1$ & $\kappa_{1,2,3,4}>1$ & $\kappa_4=1$ \\
& & $\kappa_1=\kappa_2=\kappa_3=1$ & $\kappa_1=\kappa_2=\kappa_3=\kappa_4$ & $\kappa_1=\kappa_2=\kappa_3=0$ \\
& Batch size & \multicolumn{3}{c}{512} \\
& Learning rate & \multicolumn{3}{c}{1e-6} \\
& Half-life of EMA & \multicolumn{3}{c}{Either 100k images or 0 ($i.e.$, no EMA)} \\
& Optimizer under FP32 or AMP & \multicolumn{3}{c}{Adam ($\beta_1=0$, $\beta_2=0.999$, $\epsilon=1\text{e-8}$)} \\
& Optimizer under FP16 & \multicolumn{3}{c}{Adam ($\beta_1=0$, $\beta_2=0.999$, $\epsilon=1\text{e-6}$)} \\
& $\alpha$ & \multicolumn{3}{c}{1} \\
& $\lambda_{\text{sid}}$ & \multicolumn{3}{c}{1} \\
& $\lambda_{\text{adv}}$ & \multicolumn{3}{c}{10} \\
& $\lambda_{\text{adv},\theta}$ & \multicolumn{3}{c}{0.001} \\
& Time parameters & \multicolumn{3}{c}{$(t_{\min}, t_{\text{init}}, t_{\max}) = (20, 625, 979)$} \\
\midrule
& Teacher model & \multicolumn{3}{c}{Stable Diffusion 1.5} \\
& \# of GPUs  & \multicolumn{3}{c}{8xH100 (80G)} \\
& Batch size per GPU & 8 & 4 & 8 \\
SiD$^a$ & \# of gradient accumulation round & 8 & 16 & 8 \\
AMP+DDP & Max memory in GB allocated & 73.9 & 61.2 & 73.7 \\
& Max memory in GB reserved & 74.8 & 61.9 & 74.3 \\
& Time in seconds per 1k images & 11 & 18 & 12 \\
& Time in hours per 1M images & 3 & 5 & 3.4 \\
\midrule
& Teacher model & \multicolumn{3}{c}{Stable Diffusion XL} \\
& \# of GPUs  & \multicolumn{3}{c}{8xH100 (80G)} \\
& Batch size per GPU & 2 & 2 & 4 \\
SiD & \# of gradient accumulation round & 32 & 32 & 16 \\
AMP+FSDP & Max memory in GB allocated & 55.6 & 56.7 & 58.0 \\
& Max memory in GB reserved & 76.9 & 77.2 & 77.3 \\
& Time in seconds per 1k images & 138 & 153 & 83 \\
& Time in hours per 1M images & 38 & 43 & 23 \\
\midrule
& Teacher model & \multicolumn{3}{c}{Stable Diffusion XL} \\
& \# of GPUs  & \multicolumn{3}{c}{8xH100 (80G)} \\
& Batch size per GPU & 2 & 2 & 2 \\
SiD$^a$ & \# of gradient accumulation round & 32 & 32 & 32 \\
FP16+DDP & Max memory in GB allocated & 72.2 & 73.1 & 63.1 \\
& Max memory in GB reserved & 73.8 & 74.8 & 64.5\\
& Time in seconds per 1k images & 86 &101  & 96\\
& Time in hours per 1M images &  24 &28  & 27 \\
\midrule
& Teacher model & \multicolumn{3}{c}{Stable Diffusion XL} \\
& \# of GPUs  & \multicolumn{3}{c}{8xH100 (80G)} \\
& Batch size per GPU & 2 & 2 & 4 \\
SiD$^a$ & \# of gradient accumulation round & 32 & 32 & 16 \\
AMP+FSDP & Max memory in GB allocated & 55.6 & 56.7 & 60.1 \\
& Max memory in GB reserved & 77.2 & 77.2 & 77.3 \\
& Time in seconds per 1k images & 160 & 170 & 117 \\
& Time in hours per 1M images & 44 & 47 & 33 \\
\bottomrule
\end{tabular}
\vspace{-7mm}
}
\end{center}
\end{table*}

\begin{table*}[!bh]
\caption{\small Analogous to Table~\ref{tab:Hyperparameters}, this table compares distillation time and memory usage for training a four-step generator using SiD enhanced with real data. Note that Zero-CFG shows higher memory usage primarily because its efficiency allows for double the batch size per GPU.}

\label{tab:Hyperparameters_4step}
\begin{center}
\resizebox{0.85\textwidth}{!}{
\begin{tabular}{cccccc}
\toprule
Computing platform & Hyperparameters & ~~~~CFG~~~~  & ~~~LSG~~~~ & Zero-CFG & Anti-CFG \\
\midrule
\multirow{9}{*}{General Settings} 
& Learning rate & \multicolumn{4}{c}{1e-6} \\
& Optimizer  & \multicolumn{4}{c}{Adam ($\beta_1=0$, $\beta_2=0.999$, $\epsilon=1\text{e-8}$)} \\
& $\alpha$ & \multicolumn{4}{c}{1} \\
& $\lambda_{\text{sid}}$ & \multicolumn{4}{c}{1} \\
& $\lambda_{\text{adv}}$ & \multicolumn{4}{c}{10} \\
& $\lambda_{\text{adv},\theta}$ & \multicolumn{4}{c}{0.001} \\
& Time parameters & \multicolumn{4}{c}{$(t_{\min}, t_{\text{init}}, t_{\max}) = (20, 625, 979)$} \\
\midrule
& Teacher model & \multicolumn{4}{c}{Stable Diffusion 1.5} \\
& \# of GPUs  & \multicolumn{4}{c}{8xH100 (80G)} \\
& Batch size & \multicolumn{4}{c}{256}  \\
& Batch size per GPU&\multicolumn{4}{c}{4}\\
SiD$_2^a$ & \# of gradient accumulation round & \multicolumn{4}{c}{8} \\
AMP+DDP  & Max memory in GB allocated & &61.3 &57.2&57.3\\
& Max memory in GB reserved & &61.9&57.8&57.8\\
& Time in seconds per 1k images & &21&20&20 \\
& Time in hours per 1M images & &5.8 &5.6 & 5.6\\
\midrule
& Teacher model & \multicolumn{4}{c}{Stable Diffusion XL} \\
& \# of GPUs  & \multicolumn{4}{c}{8xH100 (80G)} \\
& Batch size & \multicolumn{4}{c}{128}  \\
& Batch size per GPU & 2&2&4& 2\\
SiD$_2^a$ & \# of gradient accumulation round & 8&8&4& 8 \\
AMP+FSDP & Max memory in GB allocated & 55.6&56.7&60.1& 44.3\\
& Max memory in GB reserved & 76.7&77.2&77.3& 66.4\\
& Time in seconds per 1k images & 200&220&165& 190 \\
& Time in hours per 1M images & 56&61&46&53\\
\bottomrule
\end{tabular}
\vspace{-7mm}
}
\end{center}
\end{table*}

\clearpage

\section{Sudo Code for Custom UNet for  SiD$^a$}\label{app:code}

\begin{lstlisting}[language=Python, caption={\texttt{Custom UNet for guided SiDA}}, label={lst:original-mpconv},basicstyle=\scriptsize\ttfamily]
class UNet2DConditionOutputWithEncoder(BaseOutput):
    encoder_output: torch.Tensor =None
    sample: torch.Tensor = None

# Custom UNet class for guided SiDA
class UNet2DConditionModelWithEncoder(UNet2DConditionModel):
    @classmethod
    def from_pretrained(cls, pretrained_model_name_or_path, **kwargs):
        # Load the pre-trained U-Net model using the parent class's from_pretrained method
        unet = super().from_pretrained(pretrained_model_name_or_path, **kwargs)
        # Change the class of the loaded model to CustomUNet
        unet.__class__ = cls
        return unet

    
    def forward(
        self,
        sample: torch.Tensor,
        timestep: Union[torch.Tensor, float, int],
        encoder_hidden_states: torch.Tensor,
        class_labels: Optional[torch.Tensor] = None,
        timestep_cond: Optional[torch.Tensor] = None,
        attention_mask: Optional[torch.Tensor] = None,
        cross_attention_kwargs: Optional[Dict[str, Any]] = None,
        added_cond_kwargs: Optional[Dict[str, torch.Tensor]] = None,
        down_block_additional_residuals: Optional[Tuple[torch.Tensor]] = None,
        mid_block_additional_residual: Optional[torch.Tensor] = None,
        down_intrablock_additional_residuals: Optional[Tuple[torch.Tensor]] = None,
        encoder_attention_mask: Optional[torch.Tensor] = None,
        return_dict: bool = True,
        return_flag: str = 'decoder' # extra argument
    ) -> Union[UNet2DConditionOutputWithEncoder, Tuple]:
        

        assert return_flag in ['encoder', 'decoder', 'encoder_decoder'], f"Invalid return_flag: {return_flag}"
        
        .... #this part is the same as UNet2DConditionModel
        
        #inserted between mid_block and upsample_block:

        encoder_output = sample.mean(dim=1, keepdim=True)  
    
        if return_flag == 'encoder':
            if not return_dict:
                return (encoder_output,)
            else:
                return UNet2DConditionOutputWithEncoder(encoder_output = encoder_output)

        .... #this part is the same as UNet2DConditionModel

        #insert at the end:

        if return_flag == 'encoder_decoder':
            if not return_dict:
                return (sample,encoder_output)
            else:
                return UNet2DConditionOutputWithEncoder(sample=sample,encoder_output = encoder_output)
        else:
            if not return_dict:
                return (sample,)
            return UNet2DConditionOutputWithEncoder(sample=sample)
\end{lstlisting}

\section{Distillation of SD1.5 under Multistep SiD}\label{sec:SD1.5}

We first explore a variety of settings to compare the performance of SiD on distilling SD1.5 under different guidance strategies and hyperparameter configurations. Fig.\,\ref{fig:cfgfree} visualizes the one-step generation performance across these settings.

\textbf{No CFG.} 
The first subplot of Fig.\,\ref{fig:cfgfree} shows that SiD exhibits similar convergence behavior under FP32 and AMP, with no significant differences. While SiD's data-free setting performs poorly without CFG, its performance improves dramatically once the Diffusion GAN loss is incorporated, transforming it into SiDA. This enhancement yields an FID of approximately 10 and a CLIP score around 0.30, making it highly competitive with prior methods—even without the use of CFG.

\textbf{LSG with Low Guidance Scale.} 
Next, we compare SiD and SiD$^a$ under the LSG setting with \(\kappa_1 = \kappa_2 = \kappa_3 = \kappa_4 = \kappa\), where \(\kappa \in \{1.5, 4.5\}\). The second subplot of Fig.\,\ref{fig:cfgfree} shows that for SiD, which already achieves a SoTA FID score, SiD$^a$ can either slightly lower the FID or enhance the CLIP to some extent, though at the cost of a higher FID. This indicates that adding the Diffusion GAN loss provides limited benefits to a SiD model that already achieves a low FID but also has a low CLIP. Despite these marginal improvements, the model checkpoint corresponding to the ``SiDA, AMP, \(\kappa=1.5\)'' red curve—trained with \((\lambda_{\text{sid}}, \lambda_{\text{adv}}, \lambda_{\text{adv},\theta}) = (1,1,0.0001)\)—achieves an FID as low as \textbf{7.89} while maintaining a CLIP score of 0.304.

\textbf{LSG with High Guidance Scale.} 
 The third subplot of Fig.\,\ref{fig:cfgfree} shows that for SiD models achieving a SoTA CLIP score but high FID, implementing SiD$^a$ with AMP further improves both CLIP and FID. 
 However, while SiD$^a$ under FP16 can still reduce FID, it fails to fully realize the algorithm’s potential due to numerical precision limitations, making the model more susceptible to underflow and overflow during gradient computation. This necessitates careful loss scaling, which is challenging to fine-tune in practice. Thus, we recommend using AMP when applying SiD or SiD$^a$ for distilling SD models. If using AMP is not feasible under Distributed Data Parallel (DDP) due to memory constraints, we recommend employing it with FSDP, the approach we adopt to scale SiD for SDXL.

\textbf{CFG and Anti-CFG. }
The fourth subplot of Fig.\,\ref{fig:cfgfree} presents results for the conventional CFG strategy, which enhances text guidance only for the pretrained teacher network with \(\kappa_1=\kappa_2=\kappa_3=1\) and \(\kappa_4=7.5\). While large CFG values under SiD yield satisfactory CLIP scores, they result in relatively poor FID. Converting SiD to SiDA instantly improves both CLIP and FID.  

Additionally, this subplot includes results for the proposed Zero-CFG strategy under both SiD and SiD$^a$. Notably, SiD-Zero-CFG achieves better CLIP than the conventional CFG strategy but exhibits worse FID. However, transitioning to SiD$^a$-Zero-CFG significantly enhances both CLIP and FID, making it a strong candidate for the best SiD$^a$ setting in terms of maximizing CLIP while lowering FID. Moreover, as shown in Table\,\ref{tab:Hyperparameters}, this guidance-scale tuning-free strategy offers good memory and computational efficiency, particularly for SDXL implemented under AMP+FSDP, which we discuss~next.

\textbf{Ablation Study of \( \gamma_t \).}  
We investigated different settings for the weighting parameter \( \gamma_t \) in \eqref{eq:obj-psi-sida}, including \( \gamma_t = 4 + \text{SNR}_t \), \( \gamma_t = \text{SNR}_t \), \( \gamma_t = 10 \times \text{SNR}_t \), and \( \gamma_t = 100 \times \text{SNR}_t \). The ablation study, illustrated in the last two subplots of Fig.\,\ref{fig:cfgfree}, examines both the LSG setting with \( \kappa_1 = \kappa_2 = \kappa_3 = \kappa_4 = 4.5 \) and the Zero-CFG setting. The results indicate that most configurations deliver comparable performance, except for \( \gamma_t = 100 \times \text{SNR}_t \), which exhibits clearly slower convergence. We further evaluated FID and CLIP using the settings reported in Table\,\ref{tab:comparison} and found the performance differences to be small. Specifically, SiDA-LSG (\(\kappa=4.5\)) achieves FID 10.61 and CLIP 0.321 with \( \gamma_t = 4 + \text{SNR}_t \), and FID 10.10 and CLIP 0.321 with \( \gamma_t = 10\text{SNR}_t \). %
Similarly, SiD$^a$-Zero-CFG achieves %
FID 9.63 and CLIP	0.321
with \( \gamma_t = 4+\text{SNR}_t \), and FID 10.28 and CLIP 0.322 with \( \gamma_t = 10\text{SNR}_t \).  This suggests that the algorithm is robust to a wide range of \( \gamma_t \) choices, providing flexibility in its selection.

\textbf{Multistep Generation: Final-Step Matching and Uniform-Step Matching. } We explore two training strategies described in Section \ref{sec:multistep} for multistep generation: Final-Step Matching, where gradients are backpropagated through the entire generation chain, and Uniform-Step Matching, where stop-gradient operations are applied to isolate updates to individual generation steps. The results are summarized in Fig.\,\ref{fig:cfgfree_multistep}. With final-step matching, we observe lower FID scores, indicating improved generation diversity, but at the cost of reduced CLIP scores. We suspect this trade-off arises because the generator becomes more expressive yet harder to optimize and control. In contrast, uniform-step matching maintains comparable CLIP scores  while showing similar trends of FID decreasing as the number of generator steps increases. Though 
the improvements in FID and CLIPs are not clear,  visual inspection reveals noticeable improvement in image quality—particularly in high-frequency details—when increasing from one to two steps. However, the gains from two to four steps appear more subtle.

\begin{figure*}[t]
\centering
\includegraphics[width=.32\linewidth]{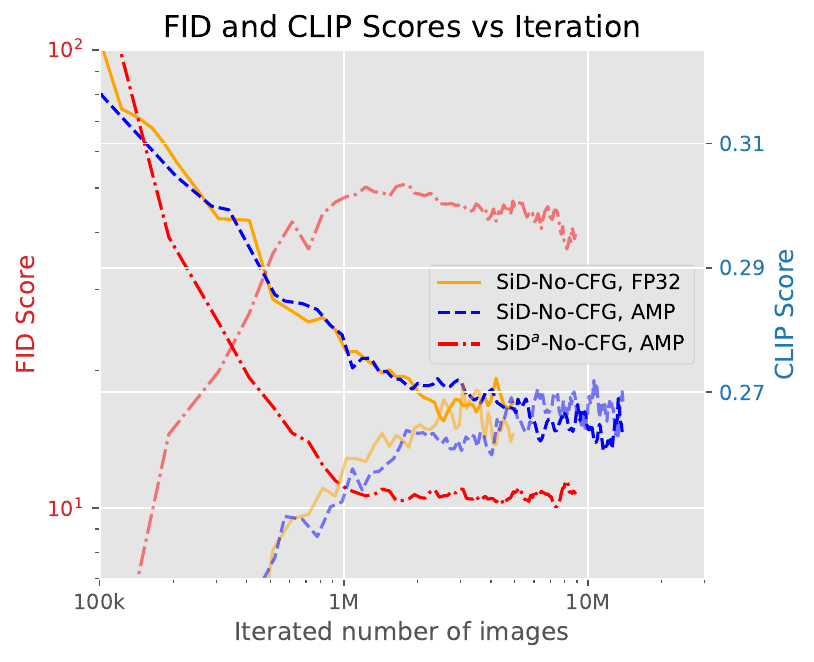}
\includegraphics[width=.32\linewidth]{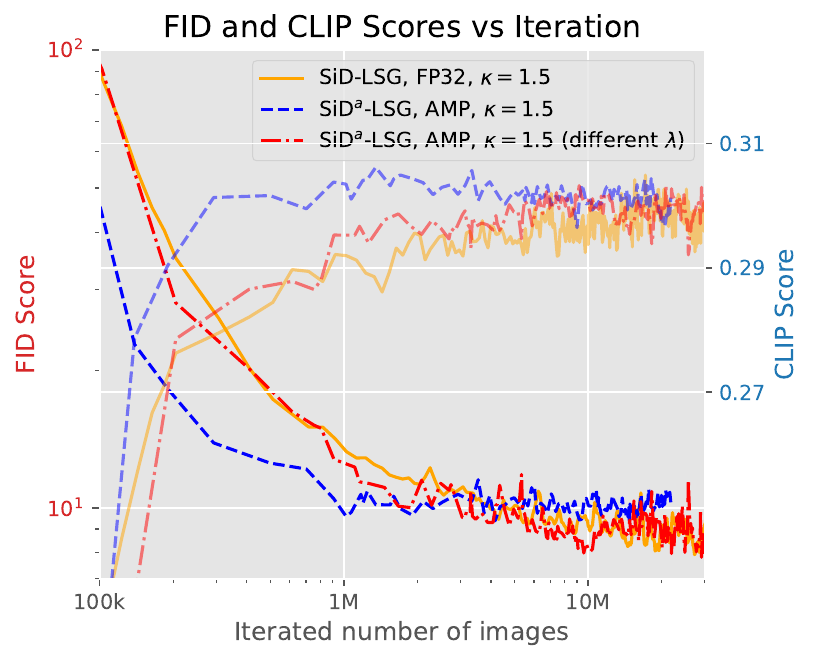}
\includegraphics[width=.32\linewidth]{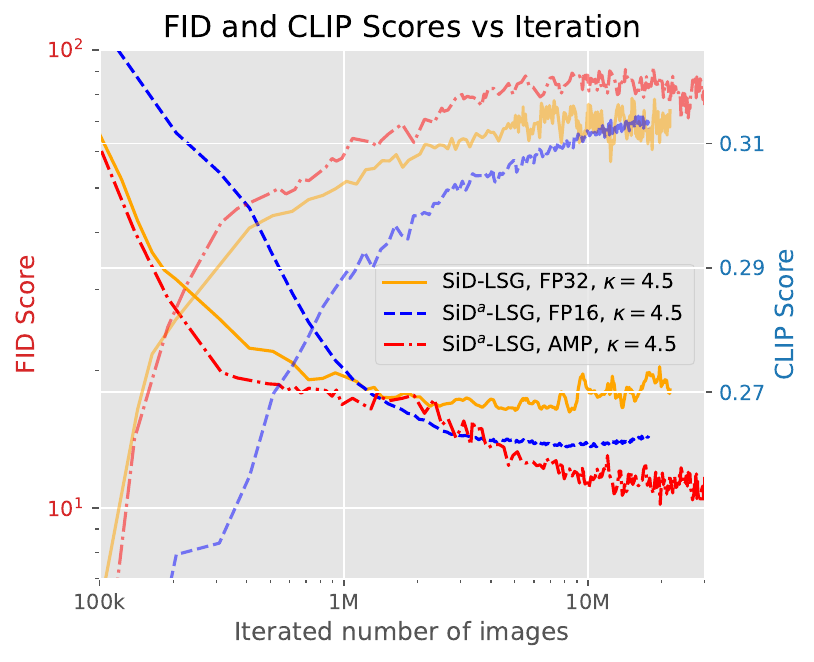}
\includegraphics[width=.32\linewidth]{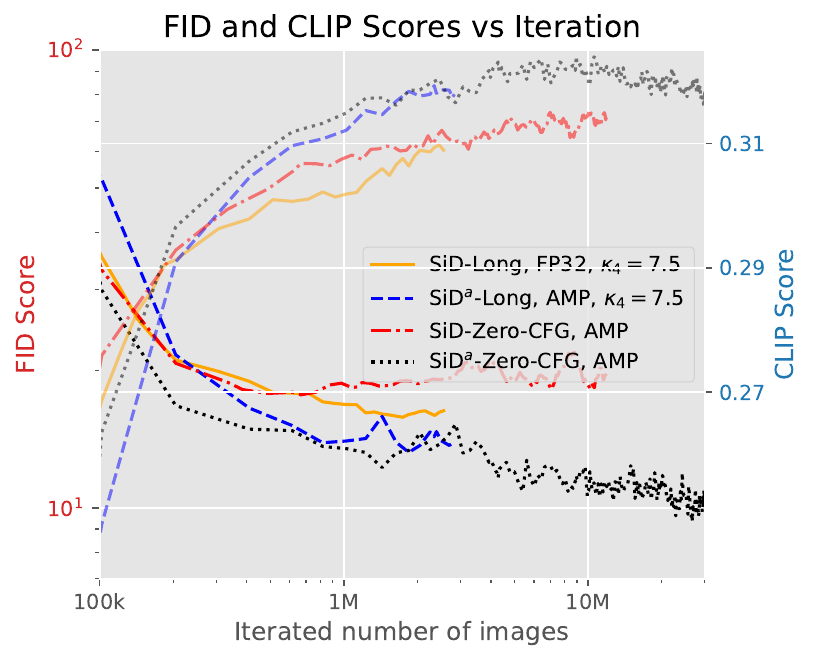}
\includegraphics[width=.32\linewidth]{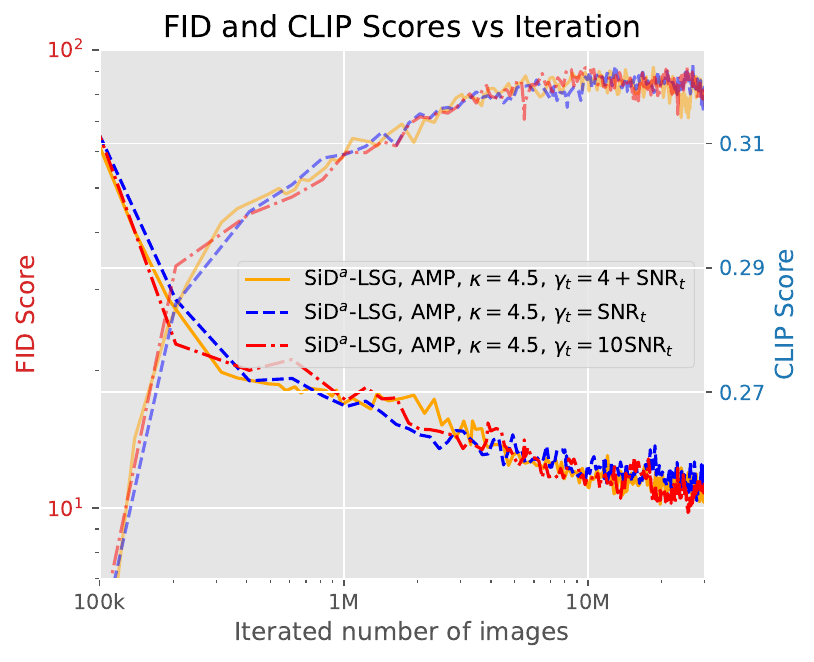}
\includegraphics[width=.32\linewidth]{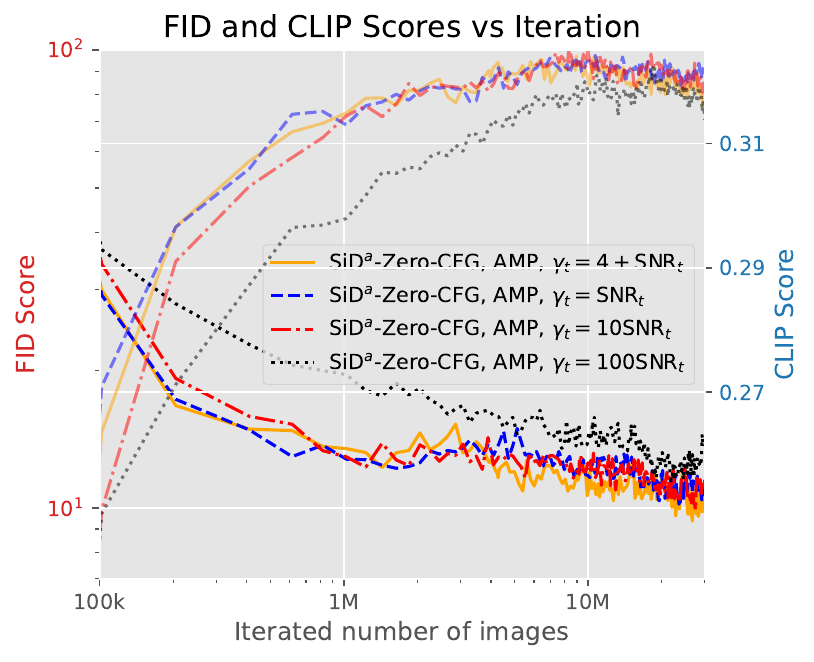}
\caption{\small 
Comparison of SiD and SiD$^a$,  trained with and without real images respectively,  for distilling Stable Diffusion 1.5 into a one-step generator, evaluated by FID and CLIP, across different numerical precision settings (FP32, AMP, and FP16), guidance strategies, and \( \gamma_t \) configurations, where \( \gamma_t = 4 + \text{SNR}_t \) unless stated otherwise.   
Guidance strategies include No CFG (\(\kappa_1=\kappa_2=\kappa_3=\kappa_4=1\)), shown in the first subplot; LSG (\(\kappa_1=\kappa_2=\kappa_3=\kappa_4=\kappa\), where \(\kappa \in \{1.5,4.5\}\)), shown in the second, third, and fifth subplots; long guidance (\(\kappa_1=\kappa_2=\kappa_3=1\), \(\kappa_4=7.5\)); and the newly proposed Zero-CFG  (\(\kappa_1=\kappa_2=\kappa_3=0\), \(\kappa_4=1\)), shown in the fourth and six subplots. The plot illustrates the relationship between FID and CLIP scores across different strategies, with FID on the primary y-axis and CLIP scores on the secondary y-axis. Different line styles indicate different settings, while dense-colored lines represent FID values and light-colored lines correspond to CLIP scores.
All curves are generated using the default hyperparameter settings of \((\lambda_{\text{sid}},\lambda_{\text{adv}},\lambda_{\text{adv},\theta}) = (1,10,0.001)\), as outlined in Table\,\ref{tab:Hyperparameters}, except for the following cases: the ``SiD$^a$-No-CFG, AMP'' curve in the first subplot and the ``SiD$^a$-LSG, FP16, \(\kappa=4.5\)'' curve in the third subplot were obtained with \((\lambda_{\text{sid}},\lambda_{\text{adv}},\lambda_{\text{adv},\theta}) = (10,100,0.01)\), while the ``SiD$^a$-LSG, AMP, \(\kappa=1.5\) (different $\lambda$)'' red curve in the second subplot used \((\lambda_{\text{sid}},\lambda_{\text{adv}},\lambda_{\text{adv},\theta}) = (1,1,0.0001)\).
 }
 \label{fig:cfgfree}
\end{figure*}

\begin{table}[!t]
\centering
\small
\vspace{-2mm}
\caption{\small Comparison of image generation methods on 30k COCO-2014 prompts, following a standard evaluation protocol \citep{kang2023scaling,yin2023onestep,zhou2024long}.
Inference times are estimated using an NVIDIA A100 GPU as reference. Results are sourced from the respective scientific papers, or, when not directly available under the same settings, from \citet{zhou2024long}, \citet{wang2025rectified}, and our own computations using the provided model checkpoints.
}
\label{tab:comparison}
\resizebox{.9\textwidth}{!}{
\begin{tabular}{@{}llccccc@{}}
\toprule
\textbf{Method} & \textbf{Res.} & \textbf{Time ($\downarrow$)} & \textbf{\# Steps} & \textbf{\# Param.} & \textbf{FID ($\downarrow$)} & \textbf{CLIP ($\uparrow$)}\\ \midrule
\multicolumn{7}{c}{Autoregressive Models} \\ 
 DALL·E~\citep{ramesh2021zero} & 256 & - & - & 12B & 27.5 & - \\
 CogView2~\citep{ding2021cogview} & 256 & - & - & 6B & 24.0 & - \\
Parti-3B~\citep{yu2022scaling} & 256 & 6.4s & - & 3B & 8.10 & - \\
Parti-20B~\citep{yu2022scaling} & 256 & - & - & 20B & 7.23 & - \\
Make-A-Scene~\citep{gafni2022make} & 256 & 25.0s & - & - & 11.84 & - \\
\multicolumn{7}{c}{Masked Models} \\ 
Muse~\citep{chang2023muse}  & 256  &  1.3 & 24 & 3B & 7.88 & 0.32\\
\midrule
\multicolumn{7}{c}{Diffusion Models} \\ 
 GLIDE~\citep{nichol2022glide} & 256 & 15.0s & 250+27 & 5B & 12.24 & -\\
 DALL·E 2~\citep{ramesh2022hierarchical} & 256 & - & 250+27 & 5.5B & 10.39 & - \\
LDM~\citep{rombach2022high} & 256 & 3.7s & 250 & 1.45B & 12.63& - \\
Imagen~\citep{ho2022imagen} & 256 & 9.1s & - & 3B & 7.27 & - \\
eDiff-I~\citep{balaji2022ediffi} & 256 & 32.0s & 25+10 & 9B & 6.95 & - \\
\midrule
\multicolumn{7}{c}{Generative Adversarial Networks (GANs)} \\
LAFITE~\citep{zhou2022towards} & 256 & 0.02s & 1 & 75M & 26.94 & - \\
 StyleGAN-T~\citep{sauer2023stylegan} & 512 & 0.10s & 1 & 1B & 13.90& $\sim$0.293\\
 GigaGAN~\citep{kang2023scaling} & 512 & 0.13s & 1 & 1B & 9.09 & -\\
   \midrule
   \multicolumn{7}{c}{Distilled Stable Diffusion 2.1}\\
   ADD (SD-Turbo)~\citep{Sauer2023AdversarialDD} &512 & - & 1 & - &16.25 &0.335\\
     \midrule
   \multicolumn{7}{c}{Distilled Stable Diffusion XL}\\
     ADD (SDXL-Turbo)~\citep{Sauer2023AdversarialDD} &512 & - & 1 & - & 19.08& 0.343\\
 \midrule
  \multicolumn{7}{c}{Stable Diffusion 1.5 and its accelerated or distilled versions}\\
    SD1.5 (CFG=3)~\citep{rombach2022high} & 512 & 2.59s & 50 & 0.9B & 8.78& - \\
    SD1.5 (CFG=8)~\citep{rombach2022high}&512& 2.59s & 50 & 0.9B &13.45&0.322\\
  
  DPM++ (4 step)~\citep{lu2022dpm++} & 512 & 0.26s & 4 & 0.9B & 22.44 & 0.31\\
 UniPC (4 step)~\citep{zhao2023unipc} & 512 & 0.26s & 4 & 0.9B & 22.30 & 0.31 \\
 LCM-LoRA (4 step)~\citep{luo2023latentlora} & 512 & 0.19s & 4 & 0.9B & 23.62& 0.30\\
  InstaFlow-1.7B ~\citep{liu2023insta} & 512 & 0.12s & 1 & 1.7B & 11.83 & - \\
 UFOGen~\citep{Xu2023UFOGenYF} & 512 & 0.09s & 1 & 0.9B & 12.78 & - \\
 DMD (CFG=3)~\citep{yin2023onestepDW}  & 512 & 0.09s & 1 & 0.9B & 11.49 & -\\
  DMD (CFG=8)~\citep{yin2023onestepDW} & 512 & 0.09s & 1 & 0.9B & 14.93 & 0.32\\
  DMD2 (CFG=1.75)~\citep{yin2024improved} & 512 & 0.09s & 1 & 0.9B & 8.35 & 0.30\\
  SwiftBrush+PG+NASA~\citep{nguyen2024snoopi} &512&0.09s&1&0.9B&9.94& 0.31\\
 Diffusion2GAN \citep{kang2024diffusion2gan} & 512 & 0.09s & 1 & 0.9B & 9.29 & 0.310\\
   
 EMD8 (tCFG=2) \citep{xie2024emdistillationonestepdiffusion}&512& 0.09s & 1 & 0.9B & 9.66 & - \\
 EMD8 (tCFG$>$2)  \citep{xie2024emdistillationonestepdiffusion}&512& 0.09s & 1 & 0.9B & - & 0.316 \\
 Hyper-SD15 \citep{ren2024hypersd} &512&0.09s&1&0.9B&20.54&0.316\\
 PeRFlow \citep{yan2024perflow} & 512 &0.21s& 4& 0.9B& 18.59& 0.264\\
Rectified Diffusion (Phased) \citep{wang2025rectified} & 512 &0.21s& 4& 0.9B& 10.21& 0.310\\
\midrule
SiD (LSG $\kappa=1.5$) \citep{zhou2024long} &512& 0.09s & 1 & 0.9B & {8.15}&0.304\\
  SiD (LSG $\kappa=2.0$) \citep{zhou2024long} &512& 0.09s & 1 & 0.9B & {9.56}&0.313\\
 SiD (LSG $\kappa=4.5$) \citep{zhou2024long} &512& 0.09s & 1 & 0.9B & 16.59 & {0.317}\\
 \midrule
  SiD$^a$  (LSG $\kappa=1.5$)   &512& 0.09s & 1 & 0.9B & \textbf{7.89} &0.304\\ %
 SiD$^a$ (LSG $\kappa=4.5$)   &512& 0.09s & 1 & 0.9B &  10.10 &\textbf{0.321}
 \\
SiD$^a$ (Zero-CFG)  &512& 0.09s & 1 & 0.9B &   9.63 & \textbf{0.321}

\\
 \bottomrule
\end{tabular}}
\vspace{-2mm}
\end{table}

\begin{table}[!h]
\centering
\small
\vspace{-2mm}
\caption{\small Comparison of image generation methods on 30k COCO-2014 prompts, following a standard evaluation protocol \citep{kang2023scaling,yin2023onestep,zhou2024long}.
Inference times are estimated using an NVIDIA A100 GPU as reference. Results are sourced from the respective scientific papers, or, when not directly available under the same settings, from \citet{zhou2024long}, \citet{wang2025rectified}, and our own computations using the provided model checkpoints.
}
\label{tab:comparison_cfgfree}
\resizebox{.98\textwidth}{!}{
\begin{tabular}{@{}llccccc@{}}
\toprule
\textbf{Method} & \textbf{Res.} & \textbf{Time ($\downarrow$)} & \textbf{\# Steps} & \textbf{\# Param.} & \textbf{FID ($\downarrow$)} & \textbf{CLIP ($\uparrow$)}\\ \midrule
SiD$^a$ (Zero-CFG) &512& 0.09s & 1 & 0.9B &   9.63 & \textbf{0.321}

\\
\midrule
 SiD$^a$ (Zero-CFG, final-step matching) &512& 0.15s & 2 & 0.9B & 
  8.75& 0.315\\
 SiD$^a$ (Zero-CFG, final-step matching) &512& 0.21s & 4 & 0.9B & 
 \textbf{8.52}	& {0.308 }\\
 \midrule
  SiD$^a$ (Zero-CFG, uniform-step matching)
  &512& 0.15s & 2 & 0.9B & 
  10.71& \textbf{0.321}\\
SiD$^a$ (Zero-CFG,  uniform-step matching) &512& 0.21s & 4 & 0.9B & 
  10.07 &	\textbf{0.321}
  \\
 \bottomrule
\end{tabular}}
\vspace{-2mm}
\end{table}

\begin{figure*}[!h]
\centering
\includegraphics[width=.45\linewidth]{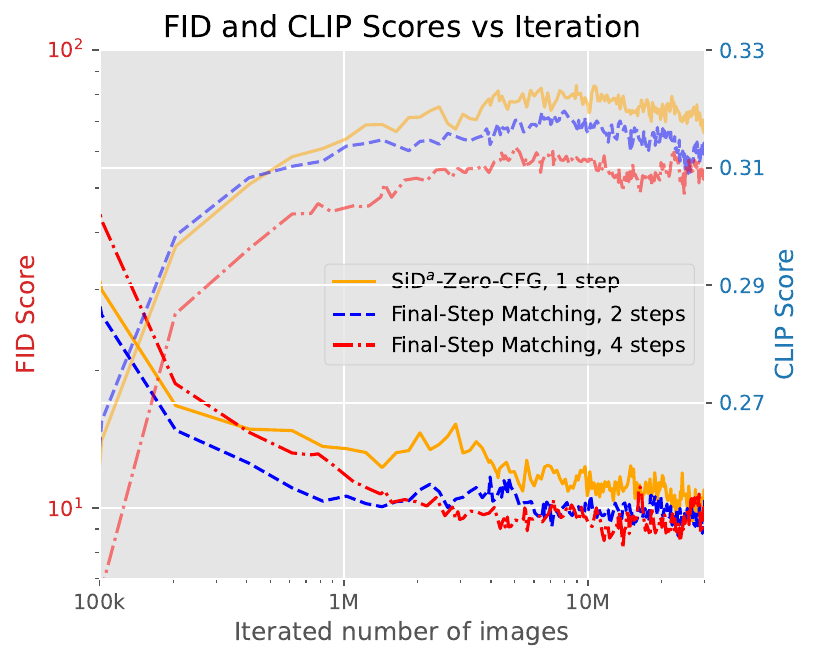}
\includegraphics[width=.45\linewidth]{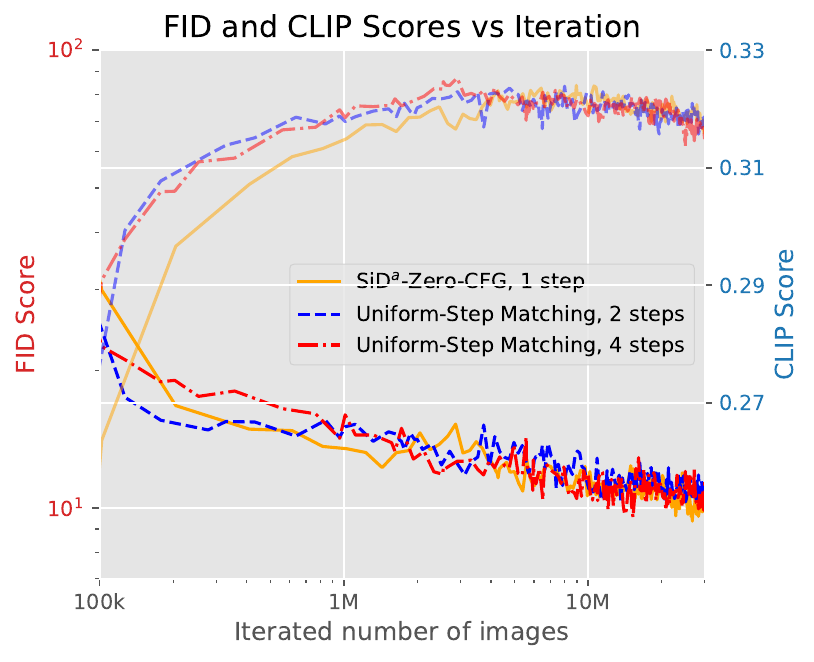}
\caption{\small 
Comparison of SiD models trained on SD1.5 with different numbers of generator steps (\(K = 1, 2, 4\)), under two different training strategies: \textbf{Left} – Final-Step Matching with full backpropagation through all generator steps; \textbf{Right} – Uniform-Step Matching with gradient updates isolated to individual generation steps.
  Results show that Final-Step Matching tends to yield lower FID as \(K\) increases, but often at the expense of reduced CLIP scores. In contrast, Uniform-Step Matching gradually improves FID, while the CLIP score initially rises to high values before slowly declining. Visual quality improves significantly from one-step to two-step generation, particularly in high-frequency details, with more subtle gains from two to four steps.  
 }
 \label{fig:cfgfree_multistep}
\end{figure*}

\begin{table}[!h]
\centering
\small
\vspace{-2mm}
\caption{\small Comparison of image generation methods on 30k COCO-2014 prompts, following a standard evaluation protocol \citep{kang2023scaling,yin2023onestep,zhou2024long}.
Inference times are estimated using an NVIDIA A100 GPU as reference. Results are sourced from the respective scientific papers, or, when not directly available under the same settings, from \citet{zhou2024long}, \citet{wang2025rectified}, and our own computations using the provided model checkpoints.
}
\label{tab:comparison_sid2a}
\resizebox{.9\textwidth}{!}{
\begin{tabular}{@{}llccccc@{}}
\toprule
\textbf{Method} & \textbf{Res.} & \textbf{Time ($\downarrow$)} & \textbf{\# Steps} & \textbf{\# Param.} & \textbf{FID ($\downarrow$)} & \textbf{CLIP ($\uparrow$)}\\ \midrule
  SiD (LSG $\kappa=4.5$) & 512 & 0.21s & 4 & 0.9B & 16.60&0.319\\
  \midrule
  SiD$_2^a$ (LSG $\kappa=4.5$, early-stop) & 512 & 0.21s & 4 & 0.9B & 12.52&0.322\\
  SiD$_2^a$ (Zero-CFG, early-stop)  & 512 & 0.21s & 4 & 0.9B & 14.71&0.324\\
  SiD$_2^a$ (Anti-CFG, early-stop) & 512 & 0.21s & 4 & 0.9B & 13.35&\textbf{0.325}\\
  \midrule
  SiD$_2^a$ (LSG $\kappa=4.5$)  & 512 & 0.21s & 4 & 0.9B & 10.02&0.320\\
  SiD$_2^a$ (Zero-CFG)  & 512 & 0.21s & 4 & 0.9B & \textbf{9.86}&0.318\\
  SiD$_2^a$ (Anti-CFG) & 512 & 0.21s & 4 & 0.9B  & 10.43&0.319\\
 \bottomrule
\end{tabular}}
\vspace{-2mm}
\end{table}
\clearpage
\newpage
\section{Prompts for qualitative examples}
\small

\begin{table*}[h]
\caption{Detailed prompts for each quadrant in Fig.\,\ref{fig:qualitative}.}
\label{tab:sid-alpha-lsg-prompts}
\centering
\begin{tabular}{p{0.15\textwidth}p{0.15\textwidth}p{0.6\textwidth}}
\toprule
\textbf{Region}      & \textbf{Image}    & \textbf{Prompt} \\

\midrule
\multirow{4}{*}{Upper Left}
                     & Row1, Col1        & A bicycle on top of a boat. \\
                     & Row1, Col2        & A real life photography of super mario, 8k Ultra HD. \\
                     & Row1, Col3        & The car is accelerating, the background on both sides is blurred, focus on the body. \\
                     & Row1, Col3        & A high-resolution photorealistic image of a red fox standing on a mossy forest floor, with detailed fur texture and natural lighting. \\
\midrule
\multirow{4}{*}{Upper Right}
                     & Row1, Col1        &  a sweatshirt with 'SiD' written on it\\
                       & Row1, Col2       & A stunning steampunk city with towering skyscrapers and intricate clockwork mechanisms,gears and pistons move in a complex symphony, steam billows from chimneys, airships navigate the bustling skylanes, a vibrant metropolis\\
                     & Row2, Col1      & Photo of a cat singing in a barbershop quartet. \\
                     & Row2, Col2       & Half-length head portrait of the goddess of autumn with wheat ears on her head, depicted as dreamy and beautiful, by wlop. \\

\midrule
\multirow{1}{*}{Lower Left}
                     & Row1, Col1        & a man (25y.o) posing on street, in the style of social media portraiture, anglocore, smilecore, emotive faces, object portraiture specialist \\
\midrule
\multirow{4}{*}{Lower Right}
                     & Row1, Col       & A photo of an astronaut riding a horse in the forest \\
                     & Row1, Col2       & impressionist painting on canvas of Tuscany, beautiful landscape with Tuscan farmhouse, in the style of impressionist masters, warm colors, delicate brushstrokes --ar 1:2 --stylize 750 \\
                     & Row1, Col3        & a hip bag in art-deco style, geometric pattern, made of real leather \\
                     & Row1, Col4        & a laptop screen showing a document being edited \\
\bottomrule
\end{tabular}

\end{table*}

\begin{table*}[h]
\caption{Detailed prompts for each quadrant in Fig.\,\ref{fig:qualitative_zero}.}
\label{tab:sid-alpha-zero-cfg-prompts}
\centering
\begin{tabular}{p{0.15\textwidth}p{0.15\textwidth}p{0.6\textwidth}}
\toprule
\textbf{Region}      & \textbf{Image}    & \textbf{Prompt} \\
\midrule
\multirow{4}{*}{Upper Left}
                     & Row1, Col1        & A large assortment of pizzas out for display \\
                     & Row1, Col2        & a desk with a laptop a monitor and a chair \\
                     & Row1, Col3        & disposable photography of UK palace, dreamy mood, --ar 3:4 \\
                     & Row1, Col3        & A large yellow and black train traveling through rural countryside. \\
\midrule
\multirow{1}{*}{Upper Right}
                     & Row1, Col1        & Create a captivating image of a clear light bulb hanging in the foreground, with a beautifully detailed miniature Christmas village inside it. The village should feature tiny houses, trees, and winding paths, all intricately crafted and lit up with colorful Christmas lights. The background showcases majestic mountains bathed in the warm hues of a sunset, with the sky displaying a blend of orange, pink, and purple.\\
\midrule
\multirow{4}{*}{Lower Left}
                     & Row1, Col1        &  fashion editorial, a close portrait of beautiful darkhaired woman in leather jacket on city street at night, shiny eyes, lights bokeh background, highres, realistic photo, professional photography, cinematic angle, dynamic light back shining\\
                       & Row1, Col2       &An ornate, all-gold antique clock encased in a clear glass display case, reflecting soft ambient light in a luxurious room.\\
                     & Row2, Col1      & A black and white dog sitting on top of a bench. \\
                     & Row2, Col2       & A professional chef wearing a white apron carefully preparing assorted sushi rolls on a pristine wooden countertop in a brightly lit kitchen. \\

\midrule
\multirow{4}{*}{Lower Right}
                     & Row1, Col1       & A striking, young and Beautiful warrior princess with stunning red hair and piercing blue eyes stands in a guard pose, clad in heavy armor and gripping a greatsword. \\
                     & Row1, Col2       & in the style of hans zatzka and arkhip kuindzhi, Nighttime flower garden, moonlight shining onto a beautiful garden of stunning flowers and green grasses, flowing out to a meadown \\
                     & Row1, Col3        & watch: top-down cinematic fine art photoshot professional studio, inside-visible transparent vinyl watch made of translucent glass, full of colurful small jelly beans inside, high refractive index, frosted glass, \\
                     & Row1, Col4        & photo of a beluga whale floating on a blue studio background, ultra-realistic, shot on a Sony A7III, high quality --no freckles --chaos 50 --ar 2:3 --stylize 750 --v 6. 1 \\
\bottomrule
\end{tabular}

\end{table*}

\begin{table*}[h]
\caption{Detailed prompts for each quadrant in Fig.\,\ref{fig:qualitative_anti}.}
\label{tab:sid-alpha-anti-cfg-prompts}
\centering
\begin{tabular}{p{0.15\textwidth}p{0.15\textwidth}p{0.6\textwidth}}
\toprule
\textbf{Region}      & \textbf{Image}    & \textbf{Prompt} \\
\midrule
\multirow{1}{*}{Upper Left}
                     & Row1, Col1        & A rugged cowboy riding on a horse, sepia tone, western town background, painted by Leonardo Da Vinci.\\
\midrule
\multirow{4}{*}{Upper Right}
                     & Row1, Col1        & vector illustration, cute cat smiling at camera, cartoon \\
                     & Row1, Col2        & (yoda) in love, holding (orchids:1.1) , 8k, hyperrealistic, highly detailed, digital painting, 8k, cinematic lighting \\
                     & Row1, Col3        & Two birds are swimming along on a lake. \\
                     & Row1, Col3        & Beautiful girl, fantasy style, pastel colors \\
\midrule

\multirow{4}{*}{Lower Left}
                     & Row1, Col1        &  Fall Leaves Hd Mobile Wallpaper Nature Landscape Forest Fall Path Leaves Trees\\
                       & Row1, Col2       & close-up photo of a beautiful red rose breaking through a cube made of ice , splintered cracked ice surface, frosted colors, blood dripping from rose, melting ice\\
                     & Row2, Col1      & A pile of oranges, apples and pears next to each other. \\
                     & Row2, Col2       & The boats are sailing out on the water. \\

\midrule

\multirow{4}{*}{Lower Right}
                     & Row1, Col1       & 1girl, (pirate:1.25) hat, rogue, extremely detailed, looking at viewer (masterpiece, best quality:1.2) <lora:adventurers v1:0.8> , 8k, UHD, HDR, (Masterpiece:1.5) , (best quality:1.5) , 8k, UHD, HDR, (Masterpiece:1.5) , (best quality:1.5) , Model: ReV Animated v1. 22 \\
                     & Row1, Col2       & "A fairytale-inspired 3d princess crown, in the art style of Disney's Frozen, art by Philippe Starck, sparkling and glittering --s 600 --v 5. 1 --q 2" \\
                     & Row2, Col1        & a medieval countryside with an English castle in the distance, atmospheric, hyper realistic, 8k, epic composition, cinematic, octane render, artstation landscape vista photography by Carr Clifton \\
                     & Row2, Col2        & A brown teddy bear sitting at a table next to a cup of coffee.\\
\bottomrule
\end{tabular}

\end{table*}

\begin{table*}[h]
\caption{Detailed prompts for each quadrant in Fig.\,\ref{fig:more_sid_examples}.}
\label{tab:sid-lsg-prompts}
\centering
\begin{tabular}{p{0.15\textwidth}p{0.15\textwidth}p{0.6\textwidth}}
\toprule
\textbf{Region}      & \textbf{Image}    & \textbf{Prompt} \\
\midrule
\multirow{1}{*}{Upper Left}
                     & Row1, Col1        & a stunning watercolor painting on canvas of a beautiful woman, in the style of Claude Monet ::1 4 Giverny, impressionism, impressionistic landscapes, pastel colors, impressionistic portraits ::1 3 exaggerated blurry brushstrokes ::1 5 ::1 --ar 63:128 --stylize 750 --v 6. 1\\
\midrule
\multirow{4}{*}{Upper Right}
                     & Row1, Col1        & A historic stone bridge arching gracefully over a tranquil, reflective lake, with an ornate Victorian clock tower rising majestically in the background against a golden sunset sky, surrounded by lush foliage and distant mountains. \\
                     & Row1, Col2        & A bunch of bananas are laying on the table in front of other bananas on a cloth. \\
                     & Row1, Col3        & A Huge Ficus religiosa Tree, in Front of the tree the saint seating in cross leg posture in the grassy ground meditating. \\
                     & Row1, Col3        & Watercolor Style, the night sky is full of stars, animate, in a Japanese town at night, star roof, beautiful, azure sky \\
\midrule

\multirow{4}{*}{Lower Left}
                     & Row1, Col1        & Several shelves of old lights, clocks and Valentines merchandise.\\
                       & Row1, Col2       & A hyperrealistic portrait of a weathered sailor in his 60s, with deep-set blue eyes, a salt-and-pepper beard, and sun-weathered skin. He is wearing a faded blue captain's hat and a thick wool sweater. The background shows a misty harbor at dawn, with fishing boats barely visible in the distance.\\
                     & Row1, Col3     & A bird's image is reflected while he stands in shallow water. \\
                     & Row1, Col4      & christmas card santa claus in sled, in the style of 32k uhd, quadratura, lively and energetic \\

\midrule

\multirow{4}{*}{Lower Right}
                     & Row1, Col1       & photography, top view, lego creation art, F1 car, parts, real-scale, brick details, perfection composition, natural lighting with soft shadows, photorealistic, rich detailing \\
                     & Row1, Col2       & A table of food containing a cut ham and bowls of vegetables. \\
                     & Row2, Col1        & Halloween , House of skeletons, cute, fantasy house , onepiece comic style , big round eyes , larger black eyes \\
                     & Row2, Col2        & A bagel machine is making bagels while people people work in the background.\\
\bottomrule
\end{tabular}

\end{table*}

\end{document}